\newtheorem{theorem}{Theorem}
\definecolor{mColor1}{rgb}{0.95,0.95,0.95}
\newcommand{\beginsupplement}{%
        \renewcommand{\thetable}{S\arabic{table}}%
        \renewcommand{\thefigure}{S\arabic{figure}}%
        \renewcommand{\thesection}{S\arabic{section}}%
         \renewcommand{\thealgorithm}{S\arabic{algorithm}}%
         
     }
\newcommand\Ba{\bm{a}}
\newcommand\Bb{\bm{b}}
\newcommand\Bd{\bm{d}}
\newcommand\Be{\bm{e}}
\newcommand\Bh{\bm{h}}
\newcommand\Bu{\bm{u}}
\newcommand\Bv{\bm{v}}
\newcommand\BA{\bm{A}}
\newcommand\BB{\bm{B}}
\newcommand\BC{\bm{C}}
\newcommand\BE{\bm{E}}
\newcommand\BH{\bm{H}}
\newcommand\BI{\bm{I}}
\newcommand\BM{\bm{M}}
\newcommand\BS{\bm{S}}
\newcommand\BU{\bm{U}}
\newcommand\BW{\bm{W}}
\newcommand\Bla{\bm{\lambda}}
\newcommand\Bep{\bm{\epsilon}}
\newcommand\Bmu{\bm{\mu}}
\newcommand\Bth{\bm{\theta}}
\newcommand\Bxi{\bm{\xi}}
\newcommand\BPs{\bm{\Psi}}
\newcommand\BSi{\bm{\Sigma}}
\newcommand\BTh{\bm{\Theta}}
\newcommand\BOn{\bm{1}}
\newcommand\BZe{\bm{0}}
\newcommand\bbR{\mathbb{R}}
\newcommand\Fcal{\mathcal{F}}
\newcommand\Ecal{\mathcal{E}}
\newcommand\Ncal{\mathcal{N}}
\newcommand\Qcal{\mathcal{Q}}
\newcommand\diag{\mathop{\mathrm{diag}\,}}
\newcommand\EXP{\mathbf{\mathrm{E}}}
\newcommand\PP{\mathbf{\mathrm{P}}}
\newcommand\COV{\mathbf{\mathrm{Cov}}}
\newcommand\TR{\mathbf{\mathrm{Tr}}}
\newcommand\oo{\mathrm{old}}
\newcommand\nn{\mathrm{new}}
\title{Rectified Factor Networks}
\author{
Djork-Arn\'e Clevert,  Andreas Mayr, Thomas Unterthiner and Sepp Hochreiter\\
Institute of Bioinformatics, 
Johannes Kepler University, Linz, Austria\\
\texttt{\{okko,mayr,unterthiner,hochreit\}@bioinf.jku.at}}
\begin{document}

\maketitle

\begin{abstract}
We propose rectified factor networks (RFNs) to 
efficiently construct 
very sparse, non-linear, high-dimensional representations
of the input.
RFN models identify rare and small events in the input,
have a low interference between code units,
have a small reconstruction error, and
explain the data covariance structure.
RFN learning is a generalized alternating minimization
algorithm derived from the posterior regularization method 
which enforces
non-negative and normalized posterior means.
We proof convergence and correctness 
of the RFN learning algorithm.

On benchmarks, RFNs are compared to
other unsupervised methods like
autoencoders, RBMs, factor analysis,
ICA, and PCA.
In contrast to previous sparse coding methods,
RFNs yield sparser codes,
capture the data's covariance structure more precisely, 
and have a significantly smaller reconstruction error.
We test RFNs as pretraining technique for deep networks
on different vision datasets, where RFNs were superior to
RBMs and autoencoders. 
On gene expression data from two
pharmaceutical drug discovery studies, RFNs 
detected small and rare gene modules that
revealed highly relevant new biological insights which 
were so far missed by other unsupervised methods.

\end{abstract}

\section{Introduction}

The success of deep learning
is to a large part
based on advanced and efficient input representations
\cite{Hinton:06,Bengio:07_short, Schmidhuber:15, LeCun:15}.
These representations are sparse and hierarchical.
Sparse representations of the input are in general obtained by
rectified linear units
(ReLU) 
\cite{Nair:10_short,Glorot:11_short}
and dropout 
\cite{Srivastava:14}. 
The key advantage of sparse representations is that dependencies
between coding units are easy to model
and to interpret.
Most importantly, distinct concepts are much less likely to
interfere in sparse representations.
Using sparse representations, similarities of samples often break down to
co-occurrences of features in these samples.
In bioinformatics sparse codes excelled in biclustering of gene expression data
\cite{Hochreiter:10s} and in finding DNA sharing patterns between
humans and Neanderthals \cite{Hochreiter:13}.

Representations learned by ReLUs
are not only sparse but also {\em non-negative}.
Non-negative representations do not code
the degree of absence of events or objects in the input.
As the vast majority of events is
supposed to be absent, to code for their degree of absence would
introduce a high level of random fluctuations.
We also aim for {\em non-linear} input representations to stack models
for constructing {\em hierarchical representations}.
Finally, the representations
are supposed to have a {\em large number of coding units}
to allow coding of rare and small events in the input.
Rare events are only observed in few samples like
seldom side effects in drug design,
rare genotypes in genetics, or small customer groups in e-commerce.
Small events affect only few input components like
pathways with few genes in biology,
few relevant mutations in oncology,
or a pattern of few products in e-commerce.
In summary, our goal is to construct input representations that
(1) are sparse, (2) are non-negative, (3) are non-linear, (4) use many
code units, and (5) model structures in the input data
(see next paragraph).

Current unsupervised 
deep learning approaches like autoencoders or restricted
Boltzmann machines (RBMs) do not model specific structures in the data.
On the other hand, generative models explain structures in the data
but their codes cannot be enforced to be
sparse and non-negative.
The input representation of a generative model is
its posterior's mean, median, or mode, 
which depends on the data. 
Therefore sparseness and non-negativity cannot
be guaranteed independent of the data.
For example, generative models with rectified priors,
like rectified factor analysis,
have zero posterior probability for negative values, therefore their
means are positive and not sparse \cite{Frey:99,Harva:07}. 
Sparse priors do not guarantee sparse posteriors as seen
in the experiments with factor analysis with Laplacian and Jeffrey's
prior on the factors (see Tab.~\ref{tab:compare}).
To address the data dependence of the code, we employ the
{\em posterior regularization method} \cite{Ganchev:10}.
This method separates model characteristics from data dependent
characteristics that are enforced by constraints on the model's posterior.

We aim at representations that are feasible for many code units and
massive datasets,
therefore the computational complexity of generating a code 
is essential in our approach.
For non-Gaussian priors, the computation of the
posterior mean of a new input requires either to numerically
solve an integral or to iteratively update variational parameters
\cite{Palmer:06_short}.
In contrast, for Gaussian priors the posterior mean is the product between
the input and a matrix that is independent of the input.
Still the posterior regularization method
leads to a quadratic (in the number of coding units) constrained
optimization problem in each E-step (see Eq.~\eqref{eq:normP} below).
To speed up computation,
we do not solve the quadratic problem
but perform a gradient step.
To allow for stochastic gradients and
fast GPU implementations, also the M-step is a gradient step.
These E-step and M-step modifications of the
posterior regularization method result in a
{\em generalized alternating minimization} (GAM) 
algorithm \cite{Ganchev:10}.
We will show that the GAM algorithm used for RFN learning (i)
converges and (ii) is correct. 
Correctness means that the
RFN codes are non-negative, sparse, have a low reconstruction
error, and explain the covariance structure of the data.

\section{Rectified Factor Network}
\label{sec:RFN}

Our goal is to construct representations of the input that
(1) are sparse, (2) are non-negative, (3) are non-linear, (4) use many
code units, and (5) model structures in the input.
Structures in the input are identified by a generative model, where the
model assumptions determine which input structures to explain by the model.
We want to model the covariance structure of the input, therefore we
choose maximum likelihood factor analysis as model.
The constraints on the input representation are enforced by the
{\em posterior regularization method} \cite{Ganchev:10}.
{\em Non-negative constraints} lead to sparse
and non-linear codes, while
{\em normalization constraints} scale
the signal part of each hidden (code) unit.
Normalizing constraints avoid that generative models
explain away {\em rare and small signals} by noise.
Explaining away becomes a serious problem for models with many coding
units since their capacities are not utilized.
Normalizing ensures that all hidden units are used but
at the cost of coding also random and spurious signals.
Spurious and true signals must be separated in a subsequent step
either by supervised techniques, by evaluating coding units via
additional data, or by domain experts.

A generative model with hidden units $\Bh$ and data $\Bv$
is defined by its prior $p(\Bh)$ and its likelihood
$p(\Bv \mid \Bh)$.
The full model distribution $p(\Bh , \Bv)= p(\Bv \mid \Bh) p(\Bh)$
can be expressed by the model's posterior
$p(\Bh \mid \Bv)$ and 
its evidence (marginal likelihood) 
$p(\Bv)$: $p(\Bh , \Bv)= p(\Bh \mid \Bv) p(\Bv)$.
The representation of input $\Bv$ is the posterior's mean, median, or mode.
The posterior regularization method introduces a
{\em variational distribution} $Q(\Bh \mid \Bv) \in \Qcal$
from a family $\Qcal$,
which approximates the posterior $p(\Bh \mid \Bv)$.
We choose $\Qcal$ to constrain the posterior means to be non-negative
and normalized.
The full model distribution
$p(\Bh , \Bv)$ contains all model assumptions and, thereby, defines
which structures of the data are modeled.
$Q(\Bh \mid \Bv)$ contains data dependent constraints on the
posterior, therefore on the code.

For data $\{\Bv\}  =  \{\Bv_1,\ldots,\Bv_n\}$,
the posterior regularization method maximizes the
objective $\Fcal$ \cite{Ganchev:10}:
\begin{align}
\label{eq:objective1a}
\Fcal \ &= \ \frac{1}{n} \ \sum_{i=1}^{n}\log p(\Bv_i) \ - \
\frac{1}{n} \ \sum_{i=1}^{n}
D_{\mathrm{KL}}(Q(\Bh_i \mid \Bv_i) \parallel p(\Bh_i \mid \Bv_i) )\\\nonumber
&= \ \frac{1}{n} \ \sum_{i=1}^{n}
\int Q(\Bh_i \mid \Bv_i) \ \log p(\Bv_i \mid \Bh_i) \ d\Bh_i 
- \  \frac{1}{n} \
\sum_{i=1}^{n} D_{\mathrm{KL}}(Q(\Bh_i \mid \Bv_i) \parallel p(\Bh_i) )  \ ,
\end{align}
where $D_{\mathrm{KL}}$ is the Kullback-Leibler distance.
Maximizing $\Fcal$ achieves two goals simultaneously:
(1) extracting desired structures and information from the data
as imposed by the generative model and
(2) ensuring desired code
properties via $Q \in \Qcal$.

\begin{wrapfigure}{r}{0.35\textwidth}
\vspace*{-5pt}
\begin{center}
\includegraphics[width=0.35\textwidth]{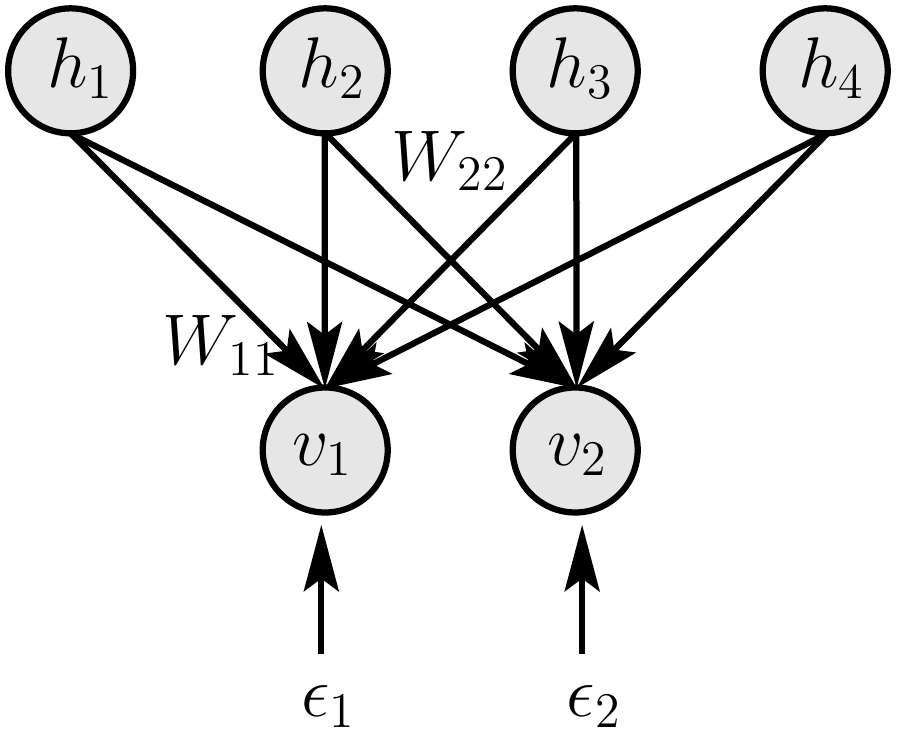}
\end{center}
\caption{Factor analysis model: hidden units (factors) $\Bh$,
  visible units $\Bv$, weight matrix
  $\BW$, noise $\Bep$. \label{fig:rfa}}
 \vspace*{-25pt}
\end{wrapfigure}{

The factor analysis model 
$\Bv =  \BW \Bh  + \Bep$
extracts the {\em covariance structure} of the
data.
The prior $\Bh \sim \Ncal\left(\BZe,\BI\right)$
of the hidden units (factors) $\Bh \in \bbR^l$
and the noise 
$\Bep \sim \Ncal\left(\BZe,\BPs \right)$ 
of visible units (observations) $\Bv \in \bbR^m$
are independent.
The model parameters are the weight (loading) 
matrix $\BW \in \bbR^{m \times l}$ and
the noise covariance matrix $\BPs \in \bbR^{m \times m}$.
We assume
diagonal $\BPs$ to explain correlations between
input components by the hidden units and not by correlated noise.
The factor analysis model is depicted in Fig.~\ref{fig:rfa}.
Given the mean-centered data $\{\Bv\}  =  \{\Bv_1,\ldots,\Bv_n\}$, the
posterior $p(\Bh_i \mid \Bv_i)$ is Gaussian with
mean vector $(\Bmu_p)_i$ and
covariance matrix $\BSi_p$:
\begin{align}
\nonumber
(\Bmu_p)_i \ &= \  \left(\BI \
  + \ \BW^T \BPs^{-1}\BW \right)^{-1}
 \BW^T \BPs^{-1} \ \Bv_i \ , \\ \label{eq:posterior}
\BSi_p \ &=  \ \left(  \BI \ + \ \BW^T
\BPs^{-1}\BW \right)^{-1} \ .
\end{align}

A {\em rectified factor network} (RFN) consists of 
a single or stacked factor analysis
model(s) with constraints on the posterior.
To incorporate the posterior constraints into the factor analysis
model, we use the posterior regularization method that maximizes the
objective $\Fcal$ given in Eq.~\eqref{eq:objective1a} \cite{Ganchev:10}.
Like the expectation-maximization (EM) algorithm, the
posterior regularization method alternates between an E-step and an
M-step.
Minimizing the first $D_{\mathrm{KL}}$ of Eq.~\eqref{eq:objective1a}
with respect to $Q$ leads to a constrained optimization problem.
For Gaussian distributions, the solution with $(\Bmu_p)_i$ and
$\BSi_p$ from Eq.~\eqref{eq:posterior} is
$Q(\Bh_i \mid \Bv_i) \sim \Ncal\left(\Bmu_i,\BSi \right)$ with
$\BSi=\BSi_p$ 
and the quadratic problem:
\begin{align}
\label{eq:normP}
\min_{\Bmu_i}  {\mbox{\ ~} } \frac{1}{ n}  \sum_{i=1}^{n}
(\Bmu_i  -   (\Bmu_p)_i)^T \ \BSi_p^{-1} \ (\Bmu_i  -  (\Bmu_p)_i)\ , \quad 
\mbox{ s.t. }  {\mbox{\ ~} } \forall_i:  \Bmu_i \ \geq \ \BZe \ , \ 
\forall_j:  \frac{1}{n}   \sum_{i=1}^{n} \mu_{ij}^2 \ = \ 1 \ ,
\end{align}
where ``$\geq$'' is component-wise.
This is a constraint non-convex quadratic optimization problem
in the number of hidden units which 
is too complex to be solved in each EM iteration.
Therefore, we perform a step of the
{\em gradient projection algorithm} \cite{Bertsekas:76,Kelley:99}, which
performs first a gradient step and then projects the result to
the feasible set.
We start by a step of the {\em projected Newton method}, then we try the
{\em gradient projection algorithm}, thereafter
the {\em scaled gradient projection algorithm} with reduced matrix \cite{Bertsekas:82}
(see also \cite{Kelley:99}).
If these methods fail to decrease the objective in Eq.~\eqref{eq:normP}, we use
the {\em generalized reduced method} \cite{Abadie:69}.
It solves each equality constraint
for one variable and inserts it into
the objective while ensuring convex constraints.
Alternatively, we use Rosen's gradient projection
method \cite{Rosen:61} or its improvement \cite{Haug:79}.
These methods guarantee a decrease of the E-step objective.

Since the projection $\PP$
by Eq.~\eqref{eq:Proj2} 
is very fast, the projected Newton and projected gradient 
update is very fast, too.
A projected Newton step requires $O(nl)$ steps
(see Eq.~\eqref{eq:NewtonUpdate} and $\PP$ defined in
Theorem~\ref{th:rectNorm}), a projected
gradient step requires $O(\min\{nlm,nl^2\})$ steps, and a scaled gradient
projection step requires $O(nl^3)$ steps.
The RFN complexity per iteration is $O(n(m^2+l^2))$ (see Alg.~\ref{alg:RFN}).
In contrast, a quadratic program solver typically requires for the
$(nl)$ variables (the means of the hidden units for all samples)
$O(n^4l^4)$ steps to find the minimum \cite{Ben-Tal:01}.
We exemplify these values on our benchmark datasets
MNIST ($n=50$k, $l=1024, m=784$)
and CIFAR ($n=50$k, $l=2048, m=1024$).
The speedup with projected Newton or projected gradient in contrast
to a quadratic solver is $O(n^3l^2)=O(n^4l^4)/O(nl^2)$, which gives
{\bf speedup ratios of $\bm{1.3 \cdot 10^{20}}$ for MNIST
and $\bm{5.2 \cdot 10^{20}}$ for CIFAR.}
These speedup ratios show that efficient
E-step updates are essential for RFN learning.
Furthermore, on our computers, RAM restrictions limited quadratic program
solvers to problems with $nl\leq 20$k.

The M-step decreases the {\em expected reconstruction error} 
\begin{align}
\label{eq:recError}
\Ecal \ &= \  - \  \frac{1}{n} \ \sum_{i=1}^{n} \int_{\bbR^l} Q(\Bh_i \mid \Bv_i) \
\log \left( p(\Bv_i \mid \Bh_i) \right) \  d\Bh_i
\\\nonumber
&= \ \frac{1}{2} \Big(
m \ \log \left( 2 \pi \right) \  + \  \log
 \left| \BPs \right| \ + \   \TR \left(\BPs^{-1} \BC \right) 
- \
2 \  \TR  \left( \BPs^{-1} \BW \BU^T \right)
 \ + \ \TR  \left(\BW^T \BPs^{-1} \BW \BS \right)\Big) \ .
\end{align}
from Eq.~\eqref{eq:objective1a} with respect to the model parameters $\BW$ and $\BPs$.
Definitions 
of $\BC$, $\BU$ and $\BS$ are given in Alg.~\ref{alg:RFN}.
The M-step performs a gradient step in the Newton direction,
since we want to allow stochastic gradients,
fast GPU implementation, and dropout
regularization.
The Newton step is derived in the supplementary which gives further
details, too.
Also in the E-step, RFN learning performs a gradient step using 
projected Newton or gradient projection methods.
These projection methods require the Euclidean projection $\PP$
of the posterior means
$\{(\Bmu_p)_i\}$ onto the {\em non-convex} feasible set:
\begin{align}
\label{eq:optmu2}
\min_{\Bmu_i}  {\mbox{\ ~} } &\frac{1}{n} \ \sum_{i=1}^{n} \left( \Bmu_i  \ - \
  (\Bmu_p)_i \right)^T \left( \Bmu_i  \ - \
  (\Bmu_p)_i \right) \  , \ \qquad
\mbox{ s.t. }  {\mbox{\ ~} } \Bmu_i \ \geq \ \BZe  \ , \
\frac{1}{n} \ \sum_{i=1}^{n} \mu_{ij}^2 \ = \ 1 \ .
\end{align}
The
following Theorem~\ref{th:rectNorm} gives the Euclidean projection $\PP$
as solution to Eq.~\eqref{eq:optmu2}.
\begin{theorem}[Euclidean Projection]
\label{th:rectNorm}
If at least one $(\mu_p)_{ij}$ is positive for $1 \leq j \leq l$,
then the solution to optimization problem  Eq.~\eqref{eq:optmu2} is
\begin{align}
\label{eq:Proj2}
\mu_{ij} \ &= \ \left[\PP ((\Bmu_p)_i) \right]_j \ = \
\frac{\hat{\mu}_{ij}}{\sqrt{\frac{1}{n} \ \sum_{i=1}^{n} \hat{\mu}_{ij}^2}} \ \ , \ \quad 
\hat{\mu}_{ij} \ = \  \left\{
\begin{array}{lcl}
0 & \mathrm{for} & (\mu_p)_{ij} \ \leq \ 0 \\
(\mu_p)_{ij} & \mathrm{for} &  (\mu_p)_{ij} \ > \ 0
\end{array} \right.\ .
\end{align}
If all $(\mu_p)_{ij}$ are non-positive for $1 \leq j \leq l$,
then the optimization problem Eq.~\eqref{eq:optmu2} has the solution
$ \ \mu_{ij} \ = \ \sqrt{n} \ $ for $ \ j \ = \  \arg \max_{\hat{j}} \{(\mu_p)_{i\hat{j}}\} \ $ and
$ \ \mu_{ij} \ = \ 0 \ $ otherwise.
\end{theorem}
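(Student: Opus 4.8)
The plan is to exploit the separability of \eqref{eq:optmu2} and reduce it to a projection onto the non-negative part of a sphere. First I would observe that both the objective and the constraints decouple across the normalization index $j$: writing the objective as $\frac1n\sum_j\sum_i(\mu_{ij}-(\mu_p)_{ij})^2$, each $j$ involves only the $n$-vector $\Bx=(\mu_{1j},\dots,\mu_{nj})$, constrained by $\Bx\geq\BZe$ and $\sum_i x_i^2=n$ independently of the other columns. So it suffices to solve, for a generic data vector $\Ba=((\mu_p)_{1j},\dots,(\mu_p)_{nj})$, the single problem $\min_{\Bx}\|\Bx-\Ba\|^2$ subject to $\Bx\geq\BZe$ and $\|\Bx\|^2=n$; the feasible set is compact, so a minimizer exists. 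The crucial simplification is that on the sphere $\|\Bx\|^2=n$ the objective becomes $n+\|\Ba\|^2-2\langle\Bx,\Ba\rangle$, so minimizing distance is equivalent to maximizing the linear functional $\langle\Bx,\Ba\rangle$ over the feasible set --- here the equality (not inequality) normalization is what makes this exact.

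For the case where at least one $a_i>0$, I would bound the functional in two steps. Since $x_i\geq0$ and $a_i\leq a_i^+:=\max\{a_i,0\}$, we get $\langle\Bx,\Ba\rangle\leq\langle\Bx,\Ba^+\rangle$, with equality exactly when $x_i=0$ wherever $a_i<0$; Cauchy--Schwarz then gives $\langle\Bx,\Ba^+\rangle\leq\|\Bx\|\,\|\Ba^+\|=\sqrt n\,\|\Ba^+\|$, with equality exactly when $\Bx$ is a non-negative multiple of $\Ba^+$. Both equalities hold at $\Bx=(\sqrt n/\|\Ba^+\|)\,\Ba^+$, which is well defined precisely because $\|\Ba^+\|>0$; rewriting the scalar $\sqrt n/\|\Ba^+\|$ yields the rectify-then-normalize formula \eqref{eq:Proj2}.

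For the case where all $a_i\leq0$, the ascent direction $\Ba$ points out of the orthant, so I expect the maximizer at a vertex. Letting $i^\star\in\arg\max_i a_i$ and using $a_i\leq a_{i^\star}\leq0$ with $x_i\geq0$, I would chain $\langle\Bx,\Ba\rangle\leq a_{i^\star}\sum_i x_i\leq a_{i^\star}\sqrt n$, where the last step uses $a_{i^\star}\leq0$ together with $\sum_i x_i\geq\sqrt{\sum_i x_i^2}=\sqrt n$ (valid since $x_i\geq0$). This bound is attained by concentrating all mass at $i^\star$, i.e.\ $x_{i^\star}=\sqrt n$ and the rest zero, which is exactly the stated solution.

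The main obstacle is that the feasible set is non-convex --- a sphere intersected with the orthant --- so standard convex-projection theory gives neither a unique minimizer nor a sufficient first-order characterization, and global optimality must be argued by hand. The linearization is what tames this: once the problem is maximizing a linear functional, the two rearrangement/Cauchy--Schwarz bounds identify the global maximizer in each regime, and the case split is simply whether $\Ba$ has a component inside the orthant. The only remaining care is the degenerate sub-cases --- ties in the $\arg\max$, or $a_{i^\star}=0$ --- where the optimum is non-unique but the displayed formulas still give valid solutions.
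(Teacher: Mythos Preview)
Your proof is correct and takes a genuinely different route from the paper. Both you and the paper first decouple by $j$ and observe that on the sphere $\|\Bx\|^2=n$ the objective $\|\Bx-\Ba\|^2$ reduces (up to constants) to maximizing $\langle\Bx,\Ba\rangle$; the paper even records this as an equivalent problem. From there, however, the paper proceeds mechanically via the Lagrangian and KKT conditions: it differentiates, eliminates the multiplier $\tau_j$ using the normalization constraint, and then does a case analysis on the sign of $\frac1n\sum_s(\mu_p)_{sj}\mu_{sj}$. In the all-non-positive case the paper has to note that the projected Hessian of the Lagrangian is negative definite (so no interior stationary point can be a minimizer) and then finishes with a geometric argument about sweeping a hyperplane through the positive-orthant slice of the sphere. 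Your argument instead exploits the linearization directly: Cauchy--Schwarz against $\Ba^+$ in the positive case, and the chain $\langle\Bx,\Ba\rangle\leq a_{i^\star}\sum_i x_i\leq a_{i^\star}\sqrt n$ in the non-positive case. This is shorter, entirely elementary, and --- crucially for a non-convex feasible set --- certifies \emph{global} optimality without having to patch KKT with second-order conditions or geometric reasoning. The paper's approach, by contrast, yields the Lagrange multipliers explicitly, which can be useful if one later needs sensitivity information, but for the purpose of proving the theorem your route is cleaner.
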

\begin{proof}
See supplementary material.
\end{proof}
Using the projection $\PP$ defined in
Eq.~\eqref{eq:Proj2},
the E-step updates for the posterior means $\Bmu_i$ are:
\begin{align} 
\label{eq:NewtonUpdate}
\Bmu_i^\nn \ = \ \PP \left( \Bmu_i^\oo \ + \ \gamma \left(\Bd \ - \
  \Bmu_i^\oo \right)\right) \ , \ \quad
\Bd \ = \ \PP \left(
\Bmu_i^\oo \ + \ \lambda \  \BH^{-1} \ \BSi_p^{-1} ( (\Bmu_p)_i \ - \  \Bmu_i^\oo)
\right) 
\end{align}
where we set for the projected Newton method 
$\BH^{-1}=\BSi_p$ (thus $\BH^{-1}  \BSi_p^{-1}=\BI$), and
for the projected gradient method $\BH^{-1}=\BI$.
For the scaled gradient projection algorithm with reduced matrix,
the $\epsilon$-active set for $i$ consists of all $j$ with $\mu_{ij}\leq \epsilon$.
The reduced matrix $\BH$ is the Hessian
$\BSi_p^{-1}$ with $\epsilon$-active columns and rows $j$ fixed to unit
vectors $\Be_j$. The resulting algorithm is a posterior regularization method
with a gradient based E- and M-step, leading to  
a {\em generalized alternating minimization}
(GAM) algorithm \cite{Gunawardana:05}. The RFN learning algorithm is
given in Alg.~\ref{alg:RFN}. Dropout regularization can be included 
before E-step2 by randomly setting code units $\mu_{ij}$ 
to zero with a predefined dropout rate (note that convergence results will no
longer hold).

\begin{algorithm}[t]
{\fontsize{9}{9}\selectfont
\caption{Rectified Factor Network.} \label{alg:RFN}
\vspace*{-.4cm}
\begin{multicols}{2}
\begin{algorithmic}[1]
\STATE $\BC=\frac{1}{n} \sum_{i=1}^n \Bv_i \Bv_i^T$
\vspace*{0.03cm}
\WHILE{STOP=false}
\STATE \textbf{------E-step1------}
\FORALL{$1 \leq i \leq n$}
\STATE
$(\Bmu_p)_i = \left(\BI +\BW^T \BPs^{-1}\BW \right)^{-1}
 \BW^T \BPs^{-1}  \Bv_i$
\ENDFOR
\STATE
$\BSi \ = \ \BSi_p \ = \  \left(  \BI \ + \ \BW^T
\BPs^{-1}\BW \right)^{-1}$
\STATE \textbf{------Constraint Posterior------}
\STATE
(1) projected Newton,
(2) projected gradient,
(3) scaled gradient projection,
(4) generalized reduced method,
(5) Rosen's gradient project.
\STATE \textbf{------E-step2------}
\vspace*{0.03cm}
\STATE
$\BU \ = \ \frac{1}{n} \ \sum_{i=1}^n \Bv_i  \  \Bmu_i^T $
\vspace*{0.03cm}
\STATE
$\BS \ = \  \frac{1}{n} \ \sum_{i=1}^n \Bmu_i \ \Bmu_i^T \ + \ \BSi$
\vspace*{0.03cm}
\STATE \textbf{------M-step------}
\STATE  $\BE \ = \
 \BC \ - \  \BU \ \BW^T  \  - \ \BW \  \BU
\ + \   \BW \ \BS \ \BW^T$
\STATE  $\BW \ = \ \BW \ + \ \eta \ \left(
 \BU \ \BS^{-1} \ - \  \BW \right)$
\FORALL{$1 \leq k \leq m$}
\STATE
$\Psi_{kk} \ = \ \Psi_{kk} \ + \
\eta \  \left( E_{kk} \ - \ \Psi_{kk} \right)$
\ENDFOR
\STATE if stopping criterion is met: STOP=true
\ENDWHILE
\end{algorithmic}
\end{multicols}
\vspace*{-.4cm}
{\bf Complexity:}
objective $\Fcal$: $O(\min\{nlm,nl^2\}+l^3)$;
E-step1: $O(\min\{m^2(m+l),l^2(m+l)\} + nlm)$;
projected Newton: $O(nl)$;
projected gradient: $O(\min\{nlm,nl^2\})$;
scaled gradient projection: $O(nl^3)$;
E-step2: $O(nl(m+l))$;
M-step: $O(ml(m+l))$;
overall complexity with projected Newton / gradient for $(l+m) < n$:
$O(n(m^2+l^2))$.
}
\end{algorithm}

\section{Convergence and Correctness of RFN Learning}

\paragraph{Convergence of RFN Learning.}
\label{sec:convergence}
Theorem~\ref{th:conv} states that
Alg.~\ref{alg:RFN} converges to a maximum of $\Fcal$.
\begin{theorem}[RFN Convergence]
\label{th:conv}
The rectified factor
network (RFN) learning algorithm given in Alg.~\ref{alg:RFN}
is a ``generalized alternating minimization'' (GAM) algorithm
and converges to a solution that maximizes the objective $\Fcal$.
\end{theorem}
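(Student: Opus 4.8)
The plan is to view $\Fcal$ from Eq.~\eqref{eq:objective1a} as a single functional of two coordinate blocks, the variational distribution $Q$ (ranging over the constrained family $\Qcal$) and the model parameters $\theta=(\BW,\BPs)$, and to exhibit Alg.~\ref{alg:RFN} as alternating \emph{generalized} minimization of $-\Fcal$ over these two blocks. Since the first KL term in Eq.~\eqref{eq:objective1a} is non-negative, $\Fcal \leq \frac{1}{n}\sum_{i=1}^{n}\log p(\Bv_i)$, so $\Fcal$ is bounded above whenever the Gaussian evidence is finite. The strategy is then twofold: (i) show each E-step and each M-step does not decrease $\Fcal$, so that the value sequence $\{\Fcal\}$ is monotone and convergent; and (ii) verify the regularity hypotheses that let us invoke the GAM convergence theorem of \cite{Gunawardana:05}, concluding that limit points of the iterates maximize $\Fcal$.

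For the E-step I would first fix $\theta$ and record that the unconstrained minimizer of the first KL term over Gaussian $Q$ sets $\BSi=\BSi_p$ and the means to $(\Bmu_p)_i$ from Eq.~\eqref{eq:posterior}. Holding $\BSi=\BSi_p$, the remaining minimization over the means subject to the $\Qcal$-constraints is exactly the non-convex quadratic program Eq.~\eqref{eq:normP}, whose projection substep Eq.~\eqref{eq:optmu2} is solved in closed form by the Euclidean projection $\PP$ of Theorem~\ref{th:rectNorm}. Because we replace full minimization by a single step of a guaranteed-descent feasible-direction method — projected Newton, gradient projection, scaled gradient projection, the generalized reduced method, or Rosen's method — each of which uses the exact projection $\PP$ and a step-size (line-search) rule, the E-step update Eq.~\eqref{eq:NewtonUpdate} does not increase the E-step objective and hence does not decrease $\Fcal$. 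This is precisely the \emph{partial} E-step that makes the scheme generalized rather than exact.

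For the M-step I would fix $Q$ and observe that the $\theta$-dependent part of $-\Fcal$ is the expected reconstruction error $\Ecal$ of Eq.~\eqref{eq:recError}, a smooth function of $(\BW,\BPs)$ whose Newton direction yields the updates $\BW \leftarrow \BW+\eta(\BU\BS^{-1}-\BW)$ and $\Psi_{kk}\leftarrow\Psi_{kk}+\eta(E_{kk}-\Psi_{kk})$ of Alg.~\ref{alg:RFN}. I would show $\BU\BS^{-1}-\BW$ (and its diagonal analogue) is a descent direction for $\Ecal$, so that for a suitable step size $\eta$ the M-step decreases $\Ecal$, i.e.\ does not decrease $\Fcal$. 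Thus both half-steps are monotone, and combined with the upper bound the sequence $\{\Fcal\}$ converges.

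The hard part is step (ii): checking that the \emph{partial} (single-step) E- and M-updates still satisfy the conditions under which the Gunawardana--Byrne theorem guarantees that the limit points of the iterates are stationary points of $\Fcal$, rather than merely points at which the value sequence stalls. The non-convex normalization constraint $\frac{1}{n}\sum_{i=1}^{n}\mu_{ij}^2=1$ complicates this, since the feasible set is not convex and the standard alternating-minimization geometry must be replaced by the generalized-step hypotheses of \cite{Gunawardana:05}. I would close this gap by identifying the fixed points of the feasible-direction E-step and of the Newton M-step with the KKT points of Eq.~\eqref{eq:normP} and the stationary points of $\Ecal$, respectively, and then invoking continuity of the maps $\theta\mapsto(\Bmu_p,\BSi_p)$ and $Q\mapsto(\BU,\BS)$ together with the monotone, bounded value sequence to conclude that Alg.~\ref{alg:RFN} converges to a maximizer of $\Fcal$.
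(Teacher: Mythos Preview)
Your overall architecture matches the paper's: exhibit Alg.~\ref{alg:RFN} as a GAM procedure by showing monotone improvement in each half-step, then invoke Proposition~5 of \cite{Gunawardana:05}. Two points, however, are looser in your proposal than what the proof actually requires.

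First, in the M-step you argue only that $\BU\BS^{-1}-\BW$ is a descent direction, so that \emph{some} step size works. But Alg.~\ref{alg:RFN} uses a fixed hyperparameter $\eta\in(0,1]$ with no line search; you must show that \emph{every} such $\eta$ decreases $\Ecal$. The paper closes this by observing that $\Ecal$ is \emph{quadratic} (hence convex) in $\BW$ and convex in $\BPs^{-1}$, and that the full Newton step ($\eta=1$) lands exactly at the minimizer $\BU\BS^{-1}$ (resp.\ $\BE$). Convexity then gives decrease along the entire segment, so any $\eta\in(0,1]$ suffices. Without this convexity observation your M-step argument does not match the algorithm.

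Second, your part (ii) proposes to identify fixed points of the partial updates with KKT/stationary points and then use continuity plus monotonicity. The paper instead directly verifies the hypothesis list of Proposition~5 in \cite{Gunawardana:05}: closedness of the E- and M-step maps, compactness of the parameter set (enforced by bounding $\BW$ and $\BPs$) and of the variational family, parameter-independent support and continuity of the Gaussian densities, and existence of unique (possibly local, due to the non-convex normalization constraint) maximizers in each step. Your KKT/fixed-point route is not wrong in spirit, but it is not what the cited GAM theorem asks for; the cleanest path is to check those specific conditions, several of which (compactness, closed maps) you do not mention.
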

\begin{proof}
We present a sketch of the proof which is given in detail in the
supplement. 
For convergence, we show that Alg.~\ref{alg:RFN}
is a GAM algorithm which convergences according to
Proposition~5 in \cite{Gunawardana:05}.

Alg.~\ref{alg:RFN} ensures to decrease the M-step objective which
is convex in $\BW$ and $\BPs^{-1}$. 
The update with $\eta = 1$ leads to the minimum
of the objective. Convexity of the objective
guarantees a decrease in the M-step for $0 < \eta \leq 1$ if not in
a minimum.
Alg.~\ref{alg:RFN} ensures to decrease the E-step objective by
using gradient projection methods.
All other requirements for GAM convergence
are also fulfilled.
\end{proof}
Proposition~5 in \cite{Gunawardana:05} is based on Zangwill's
generalized convergence theorem, thus updates of the RFN algorithm
are viewed as point-to-set mappings \cite{Zangwill:69}.
Therefore the numerical precision, the choice of the methods in the E-step, 
and GPU implementations are covered by the proof.

\paragraph{Correctness of RFN Learning.}
\label{sec:correctnes}
The goal of the RFN algorithm is
to explain the data and its covariance structure.
The {\em expected approximation error} $\BE$ is defined in line 14 of Alg.~\ref{alg:RFN}.
Theorem~\ref{th:fixedPointDiagnonal} states that the
RFN algorithm is correct, that is, it explains the data (low
reconstruction error) and captures the covariance structure as good
as possible.

\begin{theorem}[RFN Correctness]
\label{th:fixedPointDiagnonal}
The fixed point $\BW$ of Alg.~\ref{alg:RFN} minimizes
$\TR \left( \BPs \right)$ given $\Bmu_i$ and
$\BSi$ by ridge regression with
\begin{align}
 \TR \left( \BPs
\right) \ &= \ \frac{1}{n} \ \sum_{i=1}^n
\left\| \Bep_i \right\|_2^2 \ + \
\left\| \BW \ \BSi^{1/2}
\right\|_{\mathrm{F}}^2 \ ,
\end{align}
where
$\Bep_i  =  \Bv_i  -  \BW  \Bmu_i$.
The model explains the data covariance matrix by
\begin{align}
\label{eq:covApprox}
\BC \ = \ \BPs   \ + \   \BW \ \BS  \ \BW^T
\end{align}
up to an error, which is quadratic in $\BPs$
for  $\BPs \ll \BW \BW^T$.
The reconstruction error
$\frac{1}{n}  \sum_{i=1}^n \left\| \Bep_i \right\|_2^2$
is quadratic in $\BPs$
for  $\BPs \ll \BW \BW^T$.
\end{theorem}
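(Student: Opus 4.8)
The plan is to first read off the two fixed-point conditions from the M-step of Alg.~\ref{alg:RFN}: setting the $\BW$- and $\BPs$-updates to zero gives $\BW = \BU\BS^{-1}$ (equivalently $\BW\BS = \BU$) and $\BPs = \diag(\BE)$, with $\BE,\BU,\BS$ as defined there. For the ridge-regression claim I would write $\BM := \frac{1}{n}\sum_i \Bmu_i\Bmu_i^T$, so $\BS = \BM + \BSi$, and note that $f(\BW) := \frac{1}{n}\sum_i\|\Bv_i - \BW\Bmu_i\|_2^2 + \|\BW\BSi^{1/2}\|_{\mathrm{F}}^2$ expands to $\TR(\BC) - 2\,\TR(\BW\BU^T) + \TR(\BW\BS\BW^T)$, whose unique minimizer is exactly $\BW = \BU\BS^{-1}$; this is Tikhonov/ridge regression of $\Bv$ on $\Bmu$ with penalty matrix $\BSi$. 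Since the same expansion shows $\TR(\BE) = f(\BW)$ identically, and $\BPs = \diag(\BE)$ forces $\TR(\BPs) = \TR(\BE)$, the stated identity $\TR(\BPs) = \frac{1}{n}\sum_i\|\Bep_i\|_2^2 + \|\BW\BSi^{1/2}\|_{\mathrm{F}}^2$ with $\Bep_i = \Bv_i - \BW\Bmu_i$ follows immediately. This part is exact.

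For the covariance statement I would substitute $\BU = \BW\BS$ and $\BU^T = \BS\BW^T$ (using $\BS = \BS^T$) into the definition of $\BE$, collapsing the correction terms to $\BE = \BC - \BW\BS\BW^T$. Rearranging yields the exact decomposition $\BC = \BPs + \BW\BS\BW^T + (\BE - \diag(\BE))$, and because $\BPs = \diag(\BE)$ the remainder $\BE - \diag(\BE)$ has vanishing diagonal. Proving Eq.~\eqref{eq:covApprox} up to a quadratic error therefore reduces to showing the off-diagonal part of $\BE = \BC - \BW\BS\BW^T$ is $O(\BPs^2)$ when $\BPs \ll \BW\BW^T$.

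To control this remainder I would pass to the unconstrained posterior and apply Woodbury. Writing $\BG := (\BPs + \BW\BW^T)^{-1}$, the quantities of Eq.~\eqref{eq:posterior} become $\BSi_p = \BI - \BW^T\BG\BW$ and the model residual $\Bv_i - \BW(\Bmu_p)_i = \BPs\BG\,\Bv_i$, so that $\BW\BSi\BW^T = \BPs - \BPs\BG\BPs$ and $\frac{1}{n}\sum_i \Bep_i\Bep_i^T = \BPs\BG\BC\BG\BPs$. Feeding these into $\BE = \frac{1}{n}\sum_i(\Bv_i\Bep_i^T + \Bep_i\Bv_i^T) - \frac{1}{n}\sum_i\Bep_i\Bep_i^T - \BW\BSi\BW^T$ and expanding in $\BPs$, the linear terms cancel once one uses $\BC\BG = \BI + O(\BPs)$ on $\mathrm{range}(\BW)$ (the precise content of $\BPs \ll \BW\BW^T$), leaving an $O(\BPs^2)$ remainder whose diagonal vanishes by construction. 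The reconstruction-error statement then drops out of the same residual formula: since $\Bep_i = \BPs\BG\,\Bv_i$ is linear in $\BPs$ to leading order, $\frac{1}{n}\sum_i\|\Bep_i\|_2^2 = \TR(\BG\BPs^2\BG\BC)$ is manifestly quadratic in $\BPs$.

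The main obstacle is the covariance remainder, on two counts. First, one must verify that the linear-in-$\BPs$ contributions to the off-diagonal of $\BE$ genuinely cancel, which requires making $\BPs \ll \BW\BW^T$ precise as a comparison on $\mathrm{range}(\BW)$, where $\BW\BW^T$ is invertible, rather than on all of $\bbR^m$, since $\BW\BW^T$ has rank at most $l$. Second, the Woodbury identities above are for the unconstrained mean $(\Bmu_p)_i$, whereas the fixed point uses the rectified and normalized $\Bmu_i$; I would argue that the projection $\PP$ of Theorem~\ref{th:rectNorm} only thresholds and rescales coordinates and hence does not alter the leading order in $\BPs$, so the perturbative bookkeeping carries over. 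Everything else is routine linear algebra.
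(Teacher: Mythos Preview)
Your ridge-regression argument is correct and matches the paper. The issue is in how you control the remainder $\BE-\diag(\BE)$ and the reconstruction error: you route everything through the \emph{unconstrained} residual formula $\Bv_i-\BW(\Bmu_p)_i=\BPs\BG\Bv_i$, and then have to argue that the rectify-and-normalize projection $\PP$ does not disturb the leading order in $\BPs$. That second obstacle is a real gap, not a technicality: $\PP$ zeroes out coordinates and rescales, and there is no a-priori reason those changes are $O(\BPs)$-small relative to $(\Bmu_p)_i$. The perturbative bookkeeping you sketch (``$\BC\BG=\BI+O(\BPs)$ on $\mathrm{range}(\BW)$'') is also delicate, since $\BW\BW^T$ is singular when $l<m$.

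The paper sidesteps both obstacles with one algebraic identity you nearly wrote down but did not exploit: expanding $\Bep_i\Bep_i^T$ directly gives, for \emph{any} choice of $\Bmu_i$,
\[
\BE \;=\; \frac{1}{n}\sum_{i=1}^n \Bep_i\Bep_i^T \;+\; \BW\BSi\BW^T .
\]
Now Woodbury on $\BSi=(\BI+\BW^T\BPs^{-1}\BW)^{-1}$ alone yields $\BW\BSi\BW^T=\BPs-\BPs\BG\BPs$ with $\BG=(\BW\BW^T+\BPs)^{-1}$, and this needs nothing about the means. Combining with $\BPs=\diag(\BE)$ gives $\diag\bigl(\tfrac{1}{n}\sum_i\Bep_i\Bep_i^T-\BPs\BG\BPs\bigr)=\BZe$, so in particular $\TR\bigl(\tfrac{1}{n}\sum_i\Bep_i\Bep_i^T\bigr)=\TR(\BPs\BG\BPs)\le \TR(\BG)\,\TR(\BPs)^2$, which is the quadratic reconstruction-error bound. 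For the covariance, equate the display above with your own $\BE=\BC-\BW\BS\BW^T$ (from $\BU=\BW\BS$) to obtain
\[
\BC-\BPs-\BW\BS\BW^T \;=\; \frac{1}{n}\sum_{i=1}^n \Bep_i\Bep_i^T \;-\; \BPs\BG\BPs ,
\]
whose right-hand side has zero diagonal and trace norm $\le 2\,\TR(\BPs\BG\BPs)$, again quadratic in $\BPs$. No unconstrained means, no range-of-$\BW$ argument, no projection-stability claim required.
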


\begin{proof}
The fixed point equation for the $\BW$ update is $\Delta \BW  =  \ \BU  \BS^{-1} -  \BW  =  \BZe \ 
\Rightarrow  \ \BW = \ \BU \BS^{-1}$.
Using the
definition of $\BU$ and $\BS$, we have
$\BW \ = \ \left(\frac{1}{n}  \sum_{i=1}^n \Bv_i  \
\Bmu_i^T \right)
 \left(\frac{1}{n}  \sum_{i=1}^n
\Bmu_i \ \Bmu_i^T \ + \ \BSi \right)^{-1}
 \ $.
$\BW$ is the ridge regression solution of
\begin{align}
\frac{1}{n} \ \sum_{i=1}^n
\left\| \Bv_i \ - \ \BW \ \Bmu_i \right\|_2^2 \ + \
\left\| \BW \ \BSi^{1/2}
\right\|_{\mathrm{F}}^2 
= \
\TR \left(\frac{1}{n} \ \sum_{i=1}^n \Bep_i \ \Bep_i^T \ + \ \BW \
  \BSi \ \BW^T \right) \ ,
\end{align}
where $\TR$ is the trace. After multiplying out all $\Bep_i \Bep_i^T$
in $1/n \sum_{i=1}^n \Bep_i \Bep_i^T$, we obtain:
\begin{align}
\label{eq:conv1}
\BE \ &= \ \frac{1}{n} \ \sum_{i=1}^n \Bep_i \ \Bep_i^T \ + \ \BW \
  \BSi \ \BW^T \ .
\end{align}
For the fixed point of $\BPs$, the update rule gives:
$\diag\left( \BPs \right) = \diag \left(\frac{1}{n} \sum_{i=1}^n \Bep_i \Bep_i^T + \BW \BSi  \BW^T \right)$.
Thus, $\BW$ minimizes  $\TR \left( \BPs \right)$ given $\Bmu_i$ and $\BSi$.
Multiplying the Woodbury identity for
$\left( \BW  \BW^T +  \BPs\right)^{-1}$
from left and right by $\BPs$ gives
\begin{align}
\label{eq:psi2}
\BW  \BSi  \BW^T \ &= \
\BPs  -  \BPs  \left( \BW \ \BW^T  +  \BPs \right)^{-1}
\ \BPs .
\end{align}
Inserting this into the expression for  $\diag (\BPs)$ and taking
the trace gives
\begin{align}\label{eq:trE}
 \TR \left(\frac{1}{n} \ \sum_{i=1}^n \Bep_i \
\Bep_i^T \right)   =  \TR \left( \BPs  \left( \BW  \BW^T  +  \BPs \right)^{-1}
 \BPs \right)  
\leq  \TR \left(\left( \BW  \BW^T  +  \BPs \right)^{-1}
\right)  \TR \left( \BPs \right)^2 \ .
\end{align}
Therefore for $\BPs \ll \BW \BW^T$ the error is quadratic in $\BPs$.
$\BW \BU^T =  \BW \BS \BW^T =  \BU  \BW^T$ follows from
fixed point equation $\BU=\BW \BS$.
Using  this and Eq.~\eqref{eq:psi2}, Eq.~\eqref{eq:conv1} is
\begin{align}
\label{eq:approxC}
\frac{1}{n} \ \sum_{i=1}^n \Bep_i \ \Bep_i^T \ - \
\BPs \ \left( \BW \ \BW^T \ + \ \BPs \right)^{-1} \ \BPs \ = \
 \BC \ - \ \BPs   \ - \   \BW \ \BS  \ \BW^T \ .
\end{align}
Using the trace norm (nuclear
norm or Ky-Fan n-norm) on matrices, Eq.~\eqref{eq:trE} states that the left hand
side of Eq.~\eqref{eq:approxC} is quadratic in $\BPs$ for $\BPs \ll \BW \BW^T$.
The trace norm of a positive
semi-definite matrix is its trace and bounds the Frobenius norm \cite{Srebro:04}.
Thus, for
$\BPs \ll  \BW \BW^T$, the covariance is approximated up to a
quadratic error in $\BPs$ according to Eq.~\eqref{eq:covApprox}. The
diagonal is exactly modeled.
\end{proof}

Since the minimization of the expected
reconstruction error $\TR \left( \BPs \right)$ is
based on $\Bmu_i$, the quality of reconstruction
depends on the correlation between $\Bmu_i$ and
$\Bv_i$.
We ensure maximal information in $\Bmu_i$
on $\Bv_i$ by the I-projection (the minimal Kullback-Leibler distance)
of the posterior onto the family of rectified and
normalized Gaussian distributions.

\section{Experiments}

\begin{table*}[!b]
\centering
\caption{Comparison of RFN with other unsupervised methods, where the
upper part contains methods that yielded sparse codes.
Criteria: sparseness of the code (SP),
reconstruction error (ER), difference between data and model
covariance (CO). The panels give the results for models with 50, 100 and 150 coding units.
Results are the mean of 900 instances, 100 instances for each dataset
D1 to D9 (maximal value: 999).
RFNs had the sparsest code, the lowest
reconstruction error, and the lowest covariance
approximation error of all methods that yielded sparse
representations (SP$>$10\%). \label{tab:compare}}
\begin{tabular*}{\textwidth}{l*{3}{>{\columncolor{mColor1} \raggedleft\arraybackslash}p{2.8em}}*{3}{>{\raggedleft\arraybackslash}p{2.8em}}*{3}{>{\columncolor{mColor1} \raggedleft\arraybackslash}p{2.8em}}}
\toprule
&\multicolumn{3}{c}{\small{{\bf undercomplete} 50 code units}}
&\multicolumn{3}{c}{\small{{\bf complete} 100 code units}}
&\multicolumn{3}{c}{\small{{\bf overcomplete} 150 code units}} \\[-0.30ex]
\cmidrule(r{0pt}){2-4}
\cmidrule(lr){5-7}
\cmidrule(l{0pt}){8-10}
& SP & ER & CO
& SP & ER & CO
& SP & ER & CO\\[-0.30ex]
RFN	&75\tiny$\pm$0	&249\tiny$\pm$3 &108\tiny$\pm$3\ &81\tiny$\pm$1	&68\tiny$\pm$9	&26\tiny$\pm$6\ 	&85\tiny$\pm$1	&17\tiny$\pm$6	&7\tiny$\pm$6 \\[-0.30ex]
RFNn &74\tiny$\pm$0	&295\tiny$\pm$4 &140\tiny$\pm$4\ &79\tiny$\pm$0	&185\tiny$\pm$5	&59\tiny$\pm$3\ 	&80\tiny$\pm$0	&142\tiny$\pm$4	&35\tiny$\pm$2 \\[-0.30ex]
DAE	&66\tiny$\pm$0	&251\tiny$\pm$3 & ---\ &69\tiny$\pm$0	&147\tiny$\pm$2	& ---\ 	&71\tiny$\pm$0	&130\tiny$\pm$2	& --- \\[-0.30ex]
RBM	&15\tiny$\pm$1	&310\tiny$\pm$4 & ---\ &7\tiny$\pm$1	&287\tiny$\pm$4	& ---\ 	&5\tiny$\pm$0	&286\tiny$\pm$4	& --- \\[-0.30ex]
FAsp	&40\tiny$\pm$1	&999\tiny$\pm$63 &999\tiny$\pm$99\ &63\tiny$\pm$0	&999\tiny$\pm$65	&999\tiny$\pm$99\ 	&80\tiny$\pm$0	&999\tiny$\pm$65	&999\tiny$\pm$99 \\
\midrule
FAlap	&4\tiny$\pm$0	&239\tiny$\pm$6 &341\tiny$\pm$19\ &6\tiny$\pm$0	&46\tiny$\pm$4	&985\tiny$\pm$45\ 	&4\tiny$\pm$0	&46\tiny$\pm$4	&976\tiny$\pm$53 \\ [-0.30ex]
ICA	&2\tiny$\pm$0	&174\tiny$\pm$2 & ---\ &3\tiny$\pm$1	&0\tiny$\pm$0	& ---\ 	&3\tiny$\pm$1	&0\tiny$\pm$0	& ---\\ [-0.30ex]
SFA	&1\tiny$\pm$0	&218\tiny$\pm$5 &94\tiny$\pm$3\ &1\tiny$\pm$0	&16\tiny$\pm$1	&114\tiny$\pm$5\ 	&1\tiny$\pm$0	&16\tiny$\pm$1	&285\tiny$\pm$7\\[-0.30ex]
FA	&1\tiny$\pm$0	&218\tiny$\pm$4 &90\tiny$\pm$3\ &1\tiny$\pm$0	&16\tiny$\pm$1	&83\tiny$\pm$4\ 	&1\tiny$\pm$0	&16\tiny$\pm$1	&263\tiny$\pm$6\\[-0.30ex]
PCA	&0\tiny$\pm$0	&174\tiny$\pm$2 & ---\ &2\tiny$\pm$0	&0\tiny$\pm$0	& --- 	&2\tiny$\pm$0	&0\tiny$\pm$0	& --- \\[-0.30ex]
\bottomrule
\end{tabular*}%
\end{table*}%

\paragraph{RFNs vs.~Other Unsupervised Methods.}
We assess the performance of rectified factor
networks (RFNs) as unsupervised methods for data representation.
We compare 
(1) \textbf{RFN}: rectified factor networks, 
(2) \textbf{RFNn}: RFNs without normalization,
(3) \textbf{DAE}: denoising autoencoders with ReLUs,
(4) \textbf{RBM}: restricted Boltzmann machines with Gaussian visible units,
(5) \textbf{FAsp}: factor analysis with Jeffrey's prior ($p(z)\propto
1/z$) on the hidden units which is
sparser than a Laplace prior,
(6) \textbf{FAlap}: factor analysis with Laplace prior on the
hidden units,
(7) \textbf{ICA}: independent component analysis by FastICA \cite{Hyvarinen:99a},
(8) \textbf{SFA}: sparse factor analysis with a Laplace prior on the parameters,
(9) \textbf{FA}: standard factor analysis,
(10) \textbf{PCA}: principal component analysis.
The number of components are fixed to 50, 100 and 150 for each method.
We generated nine different benchmark datasets (D1 to D9),
where each dataset consists of 100 instances. Each instance
has 100 samples and 100 features resulting in a
100$\times$100 matrix.
Into these matrices, biclusters are implanted \cite{Hochreiter:10s}.
A bicluster is a pattern of particular features which is found in
particular samples like a pathway activated in some samples.
An optimal representation will only code the biclusters 
that are present in a sample.
The datasets have different noise levels and different bicluster sizes.
Large biclusters have 20--30 samples and 20--30 features,
while small biclusters 3--8 samples and 3--8 features.
The pattern's signal strength in a particular sample
was randomly chosen according to the Gaussian $\Ncal\left(1,1\right)$.
Finally, to each matrix, zero-mean Gaussian
background noise was added with standard deviation 1, 5, or 10.
The datasets are characterized by Dx=$(\sigma,n_1,n_2)$ with
background noise $\sigma$, number of large biclusters $n_1$,
and the number of small biclusters $n_2$:
D1=(1,10,10),
D2=(5,10,10),
D3=(10,10,10),
D4=(1,15,5),
D5=(5,15,5),
D6=(10,15,5),
D7=(1,5,15),
D8=(5,5,15),
D9=(10,5,15). \\
We evaluated the methods according to the (1) {\em sparseness} of the
components, the (2) input {\em reconstruction error} from the code,
and the (3) {\em covariance reconstruction error} for generative models.
For RFNs sparseness is the percentage of the components that are
exactly 0, while 
for others methods it is the percentage of components with an absolute value
smaller than 0.01.
The reconstruction error is the sum
of the squared errors across samples.
The covariance reconstruction error is the Frobenius norm
of the difference between model and data covariance.
See supplement for more details on the data and for 
information on hyperparameter selection for the different methods.
Tab.~\ref{tab:compare} gives averaged results for models with 50
(undercomplete), 100 (complete) and 150 (overcomplete) coding units.
Results are the mean of 900 instances consisting of 100 instances for
each dataset D1 to D9. 
In the supplement, we separately 
tabulate the results for D1 to D9 and confirm them
with different noise levels.
FAlap did not yield sparse codes since the variational
parameter did not push the absolute representations below the
threshold of 0.01.
The variational approximation to the Laplacian is a Gaussian
distribution \cite{Palmer:06_short}.
{\em RFNs had the sparsest code, the lowest
reconstruction error, and the lowest covariance
approximation error of all methods that yielded sparse
representations (SP\textgreater 10\%).}

\paragraph{RFN Pretraining for Deep Nets.}

\begin{figure*}[t!]
\begin{center}
\subfigure[MNIST digits]{
\includegraphics[angle=0,width= 0.46\textwidth]{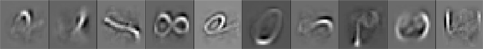}}
\subfigure[MNIST digits with random image background]{
\includegraphics[angle=0,width= 0.46\textwidth]{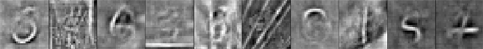}}\\[-2.0ex]
\subfigure[MNIST digits with random noise background]{
\includegraphics[angle=0,width= 0.46\textwidth]{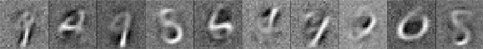}}
\subfigure[convex and concave shapes]{
\includegraphics[angle=0,width= 0.46\textwidth]{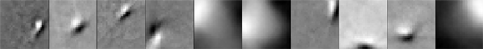}}\\[-2.0ex]
\subfigure[tall and wide rectangular]{
\includegraphics[angle=0,width= 0.46\textwidth]{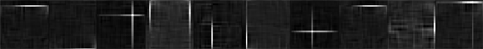}}
\subfigure[rectangular images on background images]{
\includegraphics[angle=0,width= 0.46\textwidth]{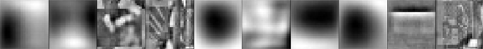}}\\[-2.0ex]
\subfigure[CIFAR-10 images (best viewed in color)] {
\includegraphics[angle=0,width= 0.46\textwidth]{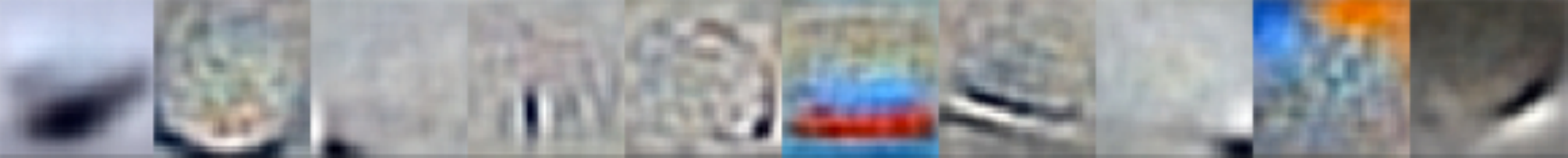}}
\subfigure[NORB images] {
\includegraphics[angle=0,width= 0.46\textwidth]{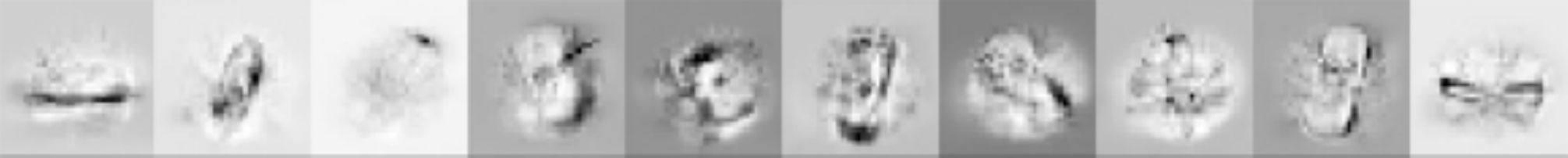}} 
\caption{Randomly selected filters trained on
image datasets using an RFN with 1024 hidden units.
RFNs learned stroke, local and global blob detectors.
RFNs are robust to background noise (b,c,f).\label{fig:repField}
}
\end{center}
\vspace*{-5pt}
\end{figure*}

We assess the performance of rectified factor
networks (RFNs) if used for pretraining of
deep networks.
Stacked RFNs are obtained by first training a single layer RFN
and then passing on the resulting representation as input for training
the next RFN.
The deep network architectures use a RFN pretrained 
first layer (RFN-1) or stacks of 3 RFNs giving a 3-hidden layer network.
The classification performance of deep networks with RFN pretrained layers
was compared to
(i) support vector machines,
(ii) deep networks pretrained by stacking denoising autoencoders (SDAE),
(iii) stacking regular autoencoders (SAE),
(iv) restricted Boltzmann machines (RBM), and
(v) stacking restricted Boltzmann machines (DBN).\\
\bgroup
\def\arraystretch{1.1}
\begin{table*}[b!]
\caption{Results of
deep networks pretrained by RFNs and other models 
(taken from \protect\cite{LeCun:04,Vincent:10_short,Larochelle:07_short,Krizhevsky:09}).
The test error rate
is reported together with the 95\% confidence interval.
The best performing method is given in bold,
as well as those for which confidence intervals overlap.
The first column gives the dataset,
the second the size of training,
validation and test set,
the last column indicates the number of hidden layers of the selected
deep network.
In only one case RFN pretraining was significantly worse than the
best method but still the second best.
In six out of the nine experiments RFN pretraining performed best, where in four
cases it was significantly the best.}

\label{tab:tab_res}%
\begin{tabular*}{\textwidth}{ll
*{5}{>{ \raggedleft\arraybackslash}p{3.7em}}
*{1}{>{ \raggedleft\arraybackslash}p{5.2em}}}
\toprule[1pt]
Dataset& & SVM & RBM & DBN & SAE   & SDAE  & RFN \\[-0.30ex]
\midrule
MNIST &\tiny{50k-10k-10k} & \textbf{1.40\tiny$\pm$0.23} & \textbf{1.21\tiny$\pm$0.21} & \textbf{1.24\tiny$\pm$0.22} & \textbf{1.40\tiny$\pm$0.23} & \textbf{1.28\tiny$\pm$0.22}  & \textbf{1.27\tiny$\pm$0.22}  \small(1)\\[-0.30ex]
basic &\tiny{10k-2k-50k}  & 3.03\tiny$\pm$0.15 & 3.94\tiny$\pm$0.17 & 3.11\tiny$\pm$0.15 & 3.46\tiny$\pm$0.16 & \textbf{2.84\tiny$\pm$0.15} & \textbf{2.66\tiny$\pm$0.14}  \small(1)\\[-0.30ex]
bg-rand & \tiny{10k-2k-50k} & 14.58\tiny$\pm$0.31 & 9.80\tiny$\pm$0.26 & \textbf{6.73\tiny$\pm$0.22} & 11.28\tiny$\pm$0.28 & 10.30\tiny$\pm$0.27  & 7.94\tiny$\pm$0.24  \small(3) \\[-0.30ex] 
bg-img & \tiny{10k-2k-50k}& 22.61\tiny$\pm$0.37 & \textbf{16.15\tiny$\pm$0.32} & \textbf{16.31\tiny$\pm$0.32} & 23.00\tiny$\pm$0.37 & \textbf{16.68\tiny$\pm$0.33} & \textbf{15.66\tiny$\pm$0.32}  \small(1)\\[-0.30ex]
rect &\tiny{1k-0.2k-50k}& 2.15\tiny$\pm$0.13 & 4.71\tiny$\pm$0.19 & 2.60\tiny$\pm$0.14 & 2.41\tiny$\pm$0.13 & 1.99\tiny$\pm$0.12  & \textbf{0.63\tiny$\pm$0.06}  \small(1)\\[-0.30ex]
rect-img & \tiny{10k-2k-50k}& 24.04\tiny$\pm$0.37 & 23.69\tiny$\pm$0.37 & 22.50\tiny$\pm$0.37 & 24.05\tiny$\pm$0.37 & 21.59\tiny$\pm$0.36 & \textbf{20.77\tiny$\pm$0.36}  \small(1)\\[-0.30ex]
convex & \tiny{10k-2k-50k}& 19.13\tiny$\pm$0.34 & 19.92\tiny$\pm$0.35 & 18.63\tiny$\pm$0.34 & 18.41\tiny$\pm$0.34 & 19.06\tiny$\pm$0.34 & \textbf{16.41\tiny$\pm$0.32}  \small(1)\\[-0.30ex]
NORB & \tiny{19k-5k-24k}& 11.6\tiny$\pm$0.40 & 8.31\tiny$\pm$0.35 & - & 10.10\tiny$\pm$0.38 & 9.50\tiny$\pm$0.37 & \textbf{7.00\tiny$\pm$0.32}  \small(1)\\[-0.30ex]
CIFAR & \tiny{40k-10k-10k}& 62.7\tiny$\pm$0.95 & \textbf{40.39\tiny$\pm$0.96} & 43.38\tiny$\pm$0.97 & 43.25\tiny$\pm$0.97 & - & \textbf{41.29\tiny$\pm$0.95}  \small(1)\\[-0.30ex]
\bottomrule
\end{tabular*}
\end{table*}
\egroup
The benchmark datasets and results are taken from previous
publications \cite{LeCun:04,Vincent:10_short,Larochelle:07_short,Krizhevsky:09} and
contain: (i) \emph{MNIST} (original MNIST),
(ii) \emph{basic} (a smaller subset of MNIST for training), (iii) \emph{bg-rand} (MNIST with
random noise background), (iv) \emph{bg-img} (MNIST with random image background),
(v) \emph{rect} (discrimination between tall and wide rectangles),
(vi) \emph{rect-img} (discrimination between tall and wide rectangular images overlayed on
random background images), (vii) \emph{convex} (discrimination between convex and concave shapes),
(viii) \emph{CIFAR-10} (60k color images in 10 classes), and
(ix) \emph{NORB} (29,160 stereo image pairs of 5
generic categories).
For each dataset its size of training, validation and test set is given in
the second column of Tab.~\ref{tab:tab_res}.
As preprocessing we only performed median centering.
Model selection is
based on the validation set performance \cite{Vincent:10_short}.
The RFNs hyperparameters are
(i) the number of units per layer from $\{1024, 2048, 4096\}$ and
(ii) the dropout rate from $\{0.0, 0.25, 0.5, 0.75\}$.
The learning rate was fixed to its default value of $\eta=0.01$.
For supervised fine-tuning with stochastic gradient descent,
we selected the learning rate
from $\{0.1, 0.01, 0.001\}$, the masking noise from $\{0.0, 0.25\}$,
and the number of layers from  $\{1, 3\}$.
Fine-tuning was stopped based on the
validation set performance, following \cite{Vincent:10_short}.
The test error rates together with the
95\% confidence interval (computed according to \cite{Vincent:10_short})
for deep network pretraining by RFNs and other methods
are given in Tab.~\ref{tab:tab_res}. Fig.~\ref{fig:repField} shows
learned filters.
The result of the best performing method is given in bold,
as well as the result of those methods for which confidence intervals overlap.
RFNs were only once significantly worse than the
best method but still the second best.
In six out of the nine experiments RFNs performed best, where in four
cases it was significantly the best.

\paragraph{RFNs in Drug Discovery.}
Using RFNs we analyzed gene expression datasets of two projects in the lead
optimization phase of a big pharmaceutical company \cite{Verbist:15_short}.
The first project aimed at finding
novel antipsychotics that target PDE10A. The second project was an
oncology study that focused on compounds inhibiting the FGF receptor.
In both projects, the expression data was summarized by FARMS \cite{Hochreiter:06} 
and standardized.
RFNs were trained with 500 hidden units, no masking noise, and a learning 
rate of $\eta=0.01$. 
The identified transcriptional modules are shown in Fig.~\ref{fig:QSTAR}. 
Panels A and B illustrate that RFNs found rare and small 
events in the input. 
In panel A only a few drugs are genotoxic 
(rare event) by downregulating the expression of a small number of
tubulin genes (small event). The genotoxic effect stems from 
the formation of micronuclei (panel C and D) since the mitotic
spindle apparatus is impaired. 
Also in panel B, RFN identified a rare and small event which is a 
transcriptional module that has a negative feedback to the MAPK signaling pathway. 
Rare events are unexpectedly inactive drugs (black dots), 
which do not inhibit the FGF receptor. 
Both findings were not detected by other unsupervised
methods, while they were highly relevant
and supported decision-making in both projects \cite{Verbist:15_short}.

\begin{figure*}[t!]
\begin{center}
\includegraphics[angle=0,width= \textwidth]{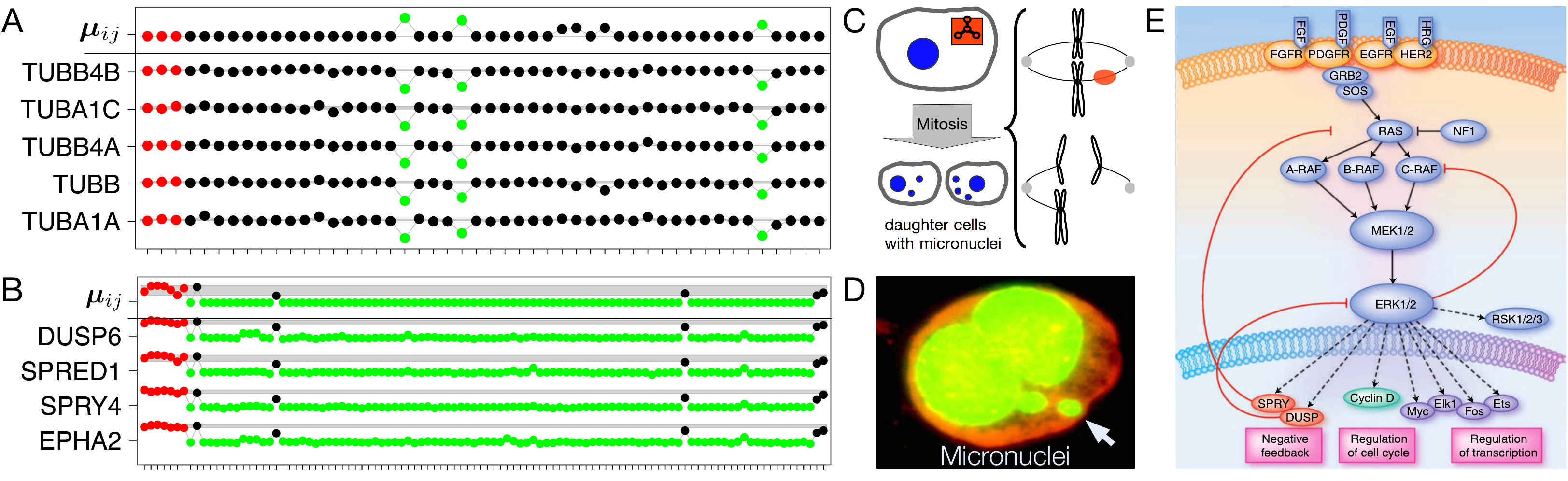}\caption{\label{fig:QSTAR}
Examples of small and rare events identified by RFN in two drug design studies, 
which were missed by previous methods.
Panel A and B: first row gives the coding unit, while 
the other rows display expression values of genes for 
controls (red), active drugs (green), and inactive drugs (black). 
Drugs (green) in panel A strongly downregulate the expression 
of tubulin genes which hints at a genotoxic effect by 
the formation of micronuclei (C). 
The micronuclei were confirmed by microscopic analysis (D). 
Drugs (green) in panel B show a transcriptional effect on genes 
with a negative feedback to the MAPK signaling pathway (E)
and therefore are potential cancer drugs.}
\end{center}
\end{figure*}

\section{Conclusion}
We have introduced rectified factor networks (RFNs) for
constructing very sparse and non-linear input representations with many
coding units in a generative framework.
Like factor analysis, RFN learning explains the data variance by its
model parameters.
The RFN learning algorithm is a
posterior regularization method which enforces
non-negative and normalized posterior means.
We have shown that RFN learning is a generalized alternating minimization
method which can be proved to converge and to be correct.
RFNs had the sparsest code, the lowest
reconstruction error, and the lowest covariance
approximation error of all methods that yielded sparse
representations (SP$>$10\%).
RFNs have shown that they improve performance if used for pretraining
of deep networks.
In two pharmaceutical drug discovery studies, RFNs 
detected small and rare gene modules that
were so far missed by other unsupervised methods.
These gene modules were highly relevant and 
supported the decision-making in both studies.
RFNs are geared to
large datasets, sparse coding, and many representational units,
therefore they
have high potential as unsupervised deep learning techniques.

\paragraph*{Acknowledgment.}
The Tesla K40 used for this research
was donated by the NVIDIA Corporation.

\small{
\bibliographystyle{unsrt}
\setlength{\bibsep}{1.5pt}

}

\vfill
\onecolumn

\newpage
\normalsize
\appendix
\section*{Supplementary Material}
\beginsupplement

\pagenumbering{arabic}

\thispagestyle{empty}

\contentsline {section}{\numberline {S1}Introduction}{5}{appendix.A}
\contentsline {section}{\numberline {S2}Rectified Factor Network (RFN) Algorithms}{5}{appendix.B}
\contentsline {section}{\numberline {S3}Convergence Proof for the RFN Learning Algorithm}{9}{appendix.C}
\contentsline {section}{\numberline {S4}Correctness Proofs for the RFN Learning Algorithms}{11}{appendix.D}
\contentsline {subsection}{\numberline {S4.1}Diagonal Noise Covariance Update}{12}{subsection.D.1}
\contentsline {subsection}{\numberline {S4.2}Full Noise Covariance Update}{14}{subsection.D.2}
\contentsline {section}{\numberline {S5}Maximum Likelihood Factor Analysis}{15}{appendix.E}
\contentsline {section}{\numberline {S6}The RFN Objective}{18}{appendix.F}
\contentsline {section}{\numberline {S7}Generalized Alternating Minimization}{20}{appendix.G}
\contentsline {section}{\numberline {S8}Gradient-based M-step}{22}{appendix.H}
\contentsline {subsection}{\numberline {S8.1}Gradient Ascent}{22}{subsection.H.1}
\contentsline {subsection}{\numberline {S8.2}Newton Update}{23}{subsection.H.2}
\contentsline {subsubsection}{\numberline {S8.2.1}Newton Update of the Loading Matrix}{24}{subsubsection.H.2.1}
\contentsline {subsubsection}{\numberline {S8.2.2}Newton Update of the Noise Covariance}{24}{subsubsection.H.2.2}
\contentsline {paragraph}{$\bm {\Psi }$ as parameter.}{24}{thmt@dummyctr.dummy.9}
\contentsline {paragraph}{$\bm {\Psi }^{-1}$ as parameter.}{25}{thmt@dummyctr.dummy.10}
\contentsline {section}{\numberline {S9}Gradient-based E-Step}{26}{appendix.I}
\contentsline {subsection}{\numberline {S9.1}Motivation for Rectifying and Normalization Constraints}{26}{subsection.I.1}
\contentsline {subsection}{\numberline {S9.2}The Full E-step Objective}{27}{subsection.I.2}
\contentsline {subsection}{\numberline {S9.3}E-step for Mean with Rectifying Constraints}{28}{subsection.I.3}
\contentsline {subsubsection}{\numberline {S9.3.1}The E-Step Minimization Problem}{28}{subsubsection.I.3.1}
\contentsline {subsubsection}{\numberline {S9.3.2}The Projection onto the Feasible Set}{29}{subsubsection.I.3.2}
\contentsline {subsection}{\numberline {S9.4}E-step for Mean with Rectifying and Normalizing Constraints}{30}{subsection.I.4}
\contentsline {subsubsection}{\numberline {S9.4.1}The E-Step Minimization Problem}{30}{subsubsection.I.4.1}
\contentsline {paragraph}{Generalized Reduced Gradient.}{30}{section*.19}
\contentsline {paragraph}{Gradient Projection Methods.}{30}{section*.20}
\contentsline {subsubsection}{\numberline {S9.4.2}The Projection onto the Feasible Set}{30}{subsubsection.I.4.2}
\contentsline {subsection}{\numberline {S9.5}Gradient and Scaled Gradient Projection and Projected Newton}{33}{subsection.I.5}
\contentsline {subsubsection}{\numberline {S9.5.1}Gradient Projection Algorithm}{33}{subsubsection.I.5.1}
\contentsline {subsubsection}{\numberline {S9.5.2}Scaled Gradient Projection and Projected Newton Method}{34}{subsubsection.I.5.2}
\contentsline {subsubsection}{\numberline {S9.5.3}Combined Method}{35}{subsubsection.I.5.3}
\contentsline {section}{\numberline {S10} Alternative Gaussian Prior}{36}{appendix.J}
\contentsline {section}{\numberline {S11} Hyperparameters Selected for Method Assessment}{38}{appendix.K}
\contentsline {section}{\numberline {S12} Data Set I}{39}{appendix.L}
\contentsline {section}{\numberline {S13} Data Set II}{43}{appendix.M}
\contentsline {section}{\numberline {S14} RFN Pretraining for Convolution Nets}{47}{appendix.N}
\newpage

\listoftheorems

\newpage

\listofalgorithms
\newpage

\section{Introduction}
\label{sec:intro}

This supplement contains additional information complementing the 
main manuscript and is structured as follows:
First, the rectified factor
network (RFN) learning algorithm with E- and M-step updates, weight 
decay and dropout regularization is given in Section~\ref{sec:alg}. 
In Section \ref{sec:conv},  we proof that the (RFN) learning algorithm  
is a ``generalized alternating minimization'' (GAM) algorithm and 
converges to a solution that maximizes the RFN objective. 
The correctness of the RFN algorithm is proofed in Section \ref{sec:correct}. 
Section~\ref{sec:mlfa} describes the maximum likelihood factor 
analysis model and the model selection by the EM-algorithm.
The RFN objective, which has to be
maximized, is described in Section~\ref{sec:objective}. 
Next, RFN's GAM algorithm via gradient descent both in the M-step 
and the E-step is reported in the Section~\ref{sec:GAM}.
The following sections~\ref{sec:gradientM} and \ref{sec:gradientE} describe 
the gradient-based M- and E-step, respectively.
In Section~\ref{sec:prior}, we describe how the RFNs sparseness
can be controlled by a Gaussian prior.
Additional information on the selected hyperparameters of the benchmark methods is given in Section~\ref{S_sec:hyperpars}. 
The sections \ref{S_sec:data1} and \ref{S_sec:data2} describe the data generation of the benchmark datasets and report
the results for three different experimental settings, namely for extracting 50 (undercomplete), 100 (complete) or 150 (overcomplete) factors / hidden units. 
Finally, Section~\ref{S_sec:ConvNets} describes experiments, that we have done to assess the performance of RFN {\em first layer} pretraining
on {\em CIFAR-10} and {\em CIFAR-100} for
three deep convolutional network architectures:
(i) the AlexNet \cite{Ciresan:12cvpr, Krizhevsky:12},
(ii) Deeply Supervised Networks (DSN) \cite{Lee:14}, and
(iii) our 5-Convolution-Network-In-Network (5C-NIN).

\section{Rectified Factor Network (RFN) Algorithms}
\label{sec:alg}

Algorithm~\ref{S_alg:RFN} is the rectified factor
network (RFN) learning algorithm.
The RFN algorithm calls 
Algorithm~\ref{alg:ProjEstep} to project the posterior probability  $p_i$
onto the family of rectified and normalized variational distributions
$Q_i$.  
Algorithm~\ref{alg:ProjEstep}
guarantees an improvement of the E-step objective  
$O =  \frac{1}{n} \sum_{i=1}^{n} D_{\mathrm{KL}}(Q_i \parallel p_i)$.
Projection Algorithm~\ref{alg:ProjEstep} relies on different
projections, where a more complicated projection is tried if a simpler
one failed to improve the E-step objective. 
If all following Newton-based gradient projection methods fail to
decrease the E-step objective, then projection
Algorithm~\ref{alg:ProjEstep} falls back to gradient projection
methods. First the equality constraints are solved and inserted into
the objective. Thereafter, the constraints are convex and gradient
projection methods are applied. 
This approach is 
called ``generalized reduced gradient method'' \cite{Abadie:69}, which
is our preferred alternative method. If this method fails, then 
Rosen's gradient projection method \cite{Rosen:61} is used. Finally, 
the method of Haug and Arora \cite{Haug:79} is used.

First we consider Newton-based projection methods, 
which are used by Algorithm~\ref{alg:ProjEstep}.
Algorithm~\ref{alg:Psimple} performs a simple projection,
which is the projected Newton method with learning rate set to one.
This projection is very fast and ideally suited to be performed on GPUs
for RFNs with many coding units.
Algorithm~\ref{alg:PsimpleRec} is the fast and simple projection
without normalization even simpler than Algorithm~\ref{alg:Psimple}.
Algorithm~\ref{alg:ProjScale} generalizes Algorithm~\ref{alg:Psimple}
by introducing step sizes $\lambda$ and $\gamma$. The step size
$\lambda$ scales the gradient step, while $\gamma$ scales the difference
between to old projection and the new projection.
For both $\lambda$ and $\gamma$ annealing steps, that is, learning
rate decay is used to find an appropriate update.

If these Newton-based update rules do not work, then 
Algorithm~\ref{alg:ProjScaleRed} is
used. Algorithm~\ref{alg:ProjScaleRed} performs a scaled projection
with a reduced Hessian matrix $\BH$ instead of the
full Hessian $\BSi_p^{-1}$. For computing $\BH$ an $\epsilon$-active set is
determined, which consists of all $j$ with $\mu_j\leq \epsilon$.
The reduced matrix $\BH$ is the Hessian 
$\BSi_p^{-1}$ with $\epsilon$-active columns and rows $j$ fixed to unit
vector $\Be_j$. 

The RFN algorithm allows regularization of the parameters $\BW$ and
$\BPs$ (off-diagonal elements) by weight decay.
Priors on the parameters can be introduced. If the priors are convex
functions, then convergence of the RFN algorithm is still ensured.
The weight decay Algorithm~\ref{alg:WD} 
can optionally be used
after the M-step of  Algorithm~\ref{S_alg:RFN}.
Coding units can be regularized by dropout.
However dropout is not covered by
the convergence proof for the RFN algorithm.
The dropout Algorithm~\ref{alg:dropout} is applied during the projection
between rectifying and normalization.
Methods like mini-batches or other stochastic gradient methods are not
covered by the convergence proof for the RFN algorithm. 
However, in \cite{Gunawardana:05} it is shown how to generalize the GAM
convergence proof to mini-batches as it is 
shown for the incremental EM algorithm. Dropout and other stochastic
gradient methods can be show to converge similar to mini-batches.

\begin{algorithm}
\caption{Rectified Factor Network} \label{S_alg:RFN}
\begin{algorithmic}
\STATE {\ ~}
\item[\textbf{Input}]
\STATE for $1 \leq i \leq n$: $\Bv_i \in \bbR^m$, 
\STATE number of coding units $l$ 
\item[\textbf{Hyper-Parameters}]
\STATE $\Psi_{\mathrm{min}}$, 
$W_{\mathrm{max}}$, $\eta_{\Psi}$, $\eta_W$, $\rho$, $\tau$, $1 < \eta \leq 1$
\item[\textbf{Initialization}]
\STATE $\BPs=\tau \BI$, $\BW$ element-wise random in $[-\rho,\rho]$,  
\STATE $\BC \ = \ \frac{1}{n} \ \sum_{k=1}^n  \Bv_k \ \Bv_k^T$, STOP=false
\item[\textbf{Main}]
\WHILE{STOP=false}
\STATE \textbf{------E-step1------}
\FORALL{$1 \leq i \leq n$}
\STATE 
$(\Bmu_p)_i \ = \  \left(\BI \
  + \ \BW^T \BPs^{-1}\BW \right)^{-1}
 \BW^T \BPs^{-1} \ \Bv_i$ 
\ENDFOR 
\STATE 
$\BSi \ = \  \left(  \BI \ + \ \BW^T
\BPs^{-1}\BW \right)^{-1}$ 
\STATE \textbf{------Projection------}
\STATE perform projection of $(\Bmu_p)_i$ onto the feasible set by
Algorithm~\ref{alg:ProjEstep} giving $\Bmu_i$
\STATE \textbf{------E-step2------}
\STATE 
$\BU \ = \ \frac{1}{n} \ \sum_{i=1}^n \Bv_i  \  \Bmu_i^T $
\STATE {\ ~}
\STATE 
$\BS \ = \  \frac{1}{n} \ \sum_{i=1}^n \Bmu_i \ \Bmu_i^T \ + \ \BSi$
\STATE \textbf{------M-step------}
\STATE  $\eta_W = \eta_{\Psi} = \eta$
\STATE  $\BE \ = \
 \BC \ - \  \BU \ \BW^T  \  - \ \BW \  \BU 
\ + \   \BW \ \BS \ \BW^T$
\STATE \textit{-----$\BW$ update------}
\STATE  $\BW \ = \ \BW \ + \ \eta_W \ \left( 
 \BU \ \BS^{-1} \ - \  \BW \right)$ 
\STATE \textit{-----diagonal $\BPs$ update------}
\FORALL{$1 \leq k \leq m$}
\STATE 
$\Psi_{kk} \ = \ \Psi_{kk} \ + \
\eta_{\Psi} \  \left( E_{kk} \ - \ \Psi_{kk} \right)$
\ENDFOR 
\STATE \textit{-----full $\BPs$ update------}
\STATE 
$\BPs \ = \ \BPs \ + \ \eta_{\Psi} \  \left( \BE \ - \BPs \right)$
\STATE \textit{-----bound parameters------}
\STATE $\BW \ = \ \mathrm{median}\{-W_{\mathrm{max}} \ , \ \BW \ , \ W_{\mathrm{max}}\}$
\STATE $\BPs \ = \ \mathrm{median}\{\Psi_{\mathrm{min}} \ , \ \BPs \ ,\ \max\{\BC\} \}$
\STATE if stopping criterion is met: STOP=true
\ENDWHILE
\end{algorithmic}
\end{algorithm}

\begin{algorithm}
\caption{Projection with E-Step Improvement} \label{alg:ProjEstep}
\begin{algorithmic}
\STATE {\ ~}
\item[\textbf{Goal}]
\STATE obtain $\Bmu_i^\nn=\Bmu_i$ that decrease the E-step objective
\item[\textbf{Input}]
\STATE $\BSi^\nn \ = \ \BSi_p$, $\BSi^\oo \ = \ \BSi_p^\oo$
\STATE for $1 \leq i \leq n$: $(\Bmu_p)_i$, $\Bmu_i^\oo$, $p_i=\Ncal ((\Bmu_p)_i, \BSi_p)$
\STATE simple projection $\PP$ (rectified or rectified \& normalized), 
\STATE E-step objective: $O = \frac{1}{n} \sum_{i=1}^{n}
D_{\mathrm{KL}}(Q_i \parallel p_i)$
\STATE $\gamma_{\mathrm{min}}$, $\lambda_{\mathrm{min}}$,
$\rho_{\mathrm{\gamma}}$, $\rho_{\mathrm{\lambda}}$, $\epsilon$ (for $\epsilon$-active set)
\item[\textbf{Main}]
\STATE \textbf{-----Simple Projection------}
\STATE perform {\tt Newton Projection} by
Algorithm~\ref{alg:Psimple} or Algorithm~\ref{alg:PsimpleRec} 
\STATE \textbf{-----Scaled Projection------}
\IF{$0 \leq \Delta O$}
\STATE following loop for: (1) $\gamma$, (2) $\lambda$, or (3) $\gamma$ and $\lambda$ annealing
\STATE $\gamma=\lambda=1$
\WHILE{$0 \leq \Delta O$ and $\lambda>\lambda_{\mathrm{min}}$ and $\gamma>\gamma_{\mathrm{min}}$}
\STATE $\gamma \ = \ \rho_{\mathrm{\gamma}} \ \gamma$ (skipped for $\lambda$ annealing)
\STATE $\lambda \ = \ \rho_{\mathrm{\lambda}} \ \lambda$ (skipped for $\gamma$ annealing)
\STATE perform {\tt Scaled Newton Projection}  by
Algorithm~\ref{alg:ProjScale}
\ENDWHILE
\ENDIF
\STATE \textbf{-----Scaled Projection With Reduced Matrix------}
\IF{$0 \leq \Delta O$}
\STATE determine $\epsilon$-active set as all $j$ with $\mu_j\leq \epsilon$
\STATE set $\BH$ to $\BSi_p^{-1}$ with $\epsilon$-active columns and rows $j$ fixed to $\Be_j$
\STATE following loop for: (1) $\gamma$, (2) $\lambda$, or (3) $\gamma$ and $\lambda$ annealing
\STATE $\gamma=\lambda=1$
\WHILE{$0 \leq \Delta O$ and $\lambda>\lambda_{\mathrm{min}}$ and $\gamma>\gamma_{\mathrm{min}}$}
\STATE $\gamma \ = \ \rho_{\mathrm{\gamma}} \ \gamma$ (skipped for $\lambda$ annealing)
\STATE $\lambda \ = \ \rho_{\mathrm{\lambda}} \ \lambda$ (skipped for $\gamma$ annealing)
\STATE perform {\tt Scaled Projection With Reduced Matrix} by
Algorithm~\ref{alg:ProjScaleRed}
\ENDWHILE
\ENDIF
\STATE \textbf{-----General Gradient Projection------}
\WHILE{$0 \leq \Delta O$}
\STATE use generalized reduced gradient \cite{Abadie:69} OR
\STATE use Rosen's gradient projection \cite{Rosen:61} OR
\STATE use method of Haug and Arora \cite{Haug:79}
\ENDWHILE
\end{algorithmic}
\end{algorithm}

\begin{algorithm}
\caption{Simple Projection: Rectifying} \label{alg:PsimpleRec}
\begin{algorithmic}
\STATE {\ ~}
\item[\textbf{Goal}]
\STATE for $1 \leq i \leq n$: project $(\Bmu_p)_i$ onto feasible set giving  $\Bmu_i$
\item[\textbf{Input}]
\STATE $(\Bmu_p)_i$
\item[\textbf{Main}]
\FORALL{$1 \leq j \leq l$}
\STATE 
$\mu_{ij} \ = \  \max\left\{0,
 \left[(\Bmu_p)_i \right]_j \right\} $
\ENDFOR 
\end{algorithmic}
\end{algorithm}

\begin{algorithm}
\caption{Simple Projection: Rectifying and Normalization} \label{alg:Psimple}
\begin{algorithmic}
\STATE {\ ~}
\item[\textbf{Goal}]
\STATE for $1 \leq i \leq n$: project $(\Bmu_p)_i$ onto feasible set giving  $\Bmu_i$
\item[\textbf{Input}]
\STATE for $1 \leq i \leq n$: $(\Bmu_p)_i$
\item[\textbf{Rectifier}]
\FORALL{$1 \leq i \leq n$}
\FORALL{$1 \leq j \leq l$}
\STATE 
$\hat{\mu}_{ij} \ = \  \max\left\{0,
 \left[(\Bmu_p)_i \right]_j \right\} $
\ENDFOR 
\ENDFOR 
\item[\textbf{Normalizer}]
\FORALL{$1 \leq i \leq n$}
\IF{at least one $\hat{\mu}_{ij}>0$}
\FORALL{$1 \leq j \leq l$}
\STATE 
$\mu_{ij} \ = \  
\frac{\hat{\mu}_{ij}}{\sqrt{\frac{1}{n} \ \sum_{s=1}^{n} \hat{\mu}_{sj}^2}} $
\ENDFOR 
\ELSE
\FORALL{$1 \leq j \leq l$}
\STATE 
$\mu_{ij} \ =  \ \left\{ 
\begin{array}{lcl}
\sqrt{n} & \mathrm{for} & j =   \arg \max_{\hat{j}} \{ \left[(\Bmu_p)_i \right]_{\hat{j}} \}\\
0 & \mathrm{otherwise} &
\end{array} \right.$
\ENDFOR 
\ENDIF
\ENDFOR 
\end{algorithmic}
\end{algorithm}

\begin{algorithm}
\caption{Scaled Newton Projection} \label{alg:ProjScale}
\begin{algorithmic}
\STATE {\ ~}
\item[\textbf{Goal}]
\STATE perform a scaled Newton step with subsequent projection
\item[\textbf{Input}]
\STATE for $1 \leq i \leq n$: $(\Bmu_p)_i$
\STATE for $1 \leq i \leq n$: $\Bmu_i^\oo$
\STATE simple projection $\PP$ (rectified or rectified \& normalized), 
\STATE $\lambda$ (gradient step size), $\gamma$ (projection difference)
\item[\textbf{Main}]
\STATE $\Bd \ = \ \PP \left( \Bmu_i^\oo \ + \ 
\lambda \   (  (\Bmu_p)_i \ - \ \Bmu_i^\oo) \right)$ 
\STATE $\Bmu_i^\nn \ = \ \PP \left( \Bmu_i^\oo  \ + \ 
\gamma \left(\Bd \ - \  \Bmu_i^\oo \right)\right)$
\end{algorithmic}
\end{algorithm}

\begin{algorithm}
\caption{Scaled Projection With Reduced Matrix} \label{alg:ProjScaleRed}
\begin{algorithmic}
\STATE {\ ~}
\item[\textbf{Goal}]
\STATE perform a scaled projection step with reduced matrix
\item[\textbf{Input}]
\STATE for $1 \leq i \leq n$: $(\Bmu_p)_i$
\STATE for $1 \leq i \leq n$: $\Bmu_i^\oo$
\STATE simple projection $\PP$ (rectified or rectified \& normalized), 
\STATE $\lambda$, $\gamma$, $\BH$, $\BSi_p^{-1}$
\item[\textbf{Main}]
\STATE $\Bd \ = \ \PP \left( 
\Bmu_i^\oo \ + \ \lambda \  \BH^{-1} \ \BSi_p^{-1} (
(\Bmu_p)_i  \ - \ \Bmu_i^\oo ) 
\right)$ 
\STATE $\Bmu_i^\nn \ = \ \PP \left( \Bmu_i^\oo \ + \ \gamma \left(\Bd \ - \
  \Bmu_i^\oo \right)\right)$
\end{algorithmic}
\end{algorithm}

\begin{algorithm}
\caption{Weight Decay} \label{alg:WD}
\begin{algorithmic}
\STATE {\ ~}
\item[\textbf{Input}]
\STATE Parameters $\BW$
\STATE Weight decay factors $\gamma_G$ (Gaussian) and $\gamma_L$ (Laplacian)
\item[\textbf{Gaussian}]
\STATE $\BW \ = \ \BW \ - \ \gamma_G \ \BW$
\item[\textbf{Laplacian}]
\STATE 
$\hat{\BW} \ = \  \mathrm{median}\{-\gamma_L \ , \  \BW \ , \ \gamma_L \}$
\STATE 
$\BW \ = \ \BW \ - \ \hat{\BW}$
\end{algorithmic}
\end{algorithm}

\begin{algorithm}
\caption{Dropout} \label{alg:dropout}
\begin{algorithmic}
\STATE {\ ~}
\item[\textbf{Input}]
\STATE for $1 \leq i \leq n$: $\Bmu_i$
\STATE dropout probability $d$
\item[\textbf{Main}]
\FORALL{$1 \leq i \leq n$}
\FORALL{$1 \leq j \leq l$}
\STATE
$\Pr(\delta=0) \ = \ d $
\STATE
$\mu_{ij} \ = \ \delta \ \mu_{ij}$
\ENDFOR 
\ENDFOR 
\end{algorithmic}
\end{algorithm}

\vspace*{\fill}

\section{Convergence Proof for the RFN Learning Algorithm}
\label{sec:conv}

\begin{theorem}[RFN Convergence]
\label{th:RFNconvergence}
The rectified factor
network (RFN) learning algorithm given in Algorithm~\ref{S_alg:RFN}
is a ``generalized alternating minimization'' (GAM) algorithm
and converges to a solution that maximizes the objective $\Fcal$. 
\end{theorem}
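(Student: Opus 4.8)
The plan is to cast Algorithm~\ref{S_alg:RFN} as a \emph{generalized alternating minimization} (GAM) procedure in the sense of \cite{Gunawardana:05} and then invoke their Proposition~5, which rests on Zangwill's generalized convergence theorem \cite{Zangwill:69}. First I would fix the correspondence between the RFN iterates and the two GAM blocks: the variational distribution $Q \in \Qcal$ (equivalently the constrained posterior means $\Bmu_i$, since $\BSi = \BSi_p$ is pinned by the parameters) is one block, and the model parameters $(\BW, \BPs)$ are the other. Maximizing $\Fcal$ of Eq.~\eqref{eq:objective1a} is equivalent to minimizing its negative, so that the E-step (updating $Q$) and the M-step (updating $\BW, \BPs$) are exactly the two coordinate minimizations required by GAM.

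The two descent properties I would then establish are as follows. For the \textbf{M-step}, I would show that the expected reconstruction error $\Ecal$ of Eq.~\eqref{eq:recError} is convex in $\BW$ and in $\BPs^{-1}$; the Newton update of Algorithm~\ref{S_alg:RFN} with $\eta = 1$ reaches the exact minimizer in $\BW$ (namely $\BW = \BU \BS^{-1}$, cf.\ Theorem~\ref{th:fixedPointDiagnonal}) and likewise for the diagonal of $\BPs$, while convexity guarantees that for any $0 < \eta \leq 1$ the step lies along a descent direction, so $\Ecal$ does not increase and strictly decreases unless a minimizer has already been reached. For the \textbf{E-step}, the projection cascade of Algorithm~\ref{alg:ProjEstep} is engineered precisely so that the E-step objective $O = \frac{1}{n}\sum_{i=1}^{n} D_{\mathrm{KL}}(Q_i \parallel p_i)$ decreases: the closed-form projection $\PP$ of Theorem~\ref{th:rectNorm} initializes the step, and whenever a simpler method fails to decrease $O$ the algorithm falls back through scaled and reduced-matrix projected Newton to the generalized reduced gradient and Rosen's method, each a feasible-direction scheme guaranteeing a decrease. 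Hence both steps deliver the monotone decrease GAM requires.

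It then remains to verify the regularity hypotheses of Proposition~5 of \cite{Gunawardana:05}. Following \cite{Zangwill:69}, I would view each update as a \emph{point-to-set} map and check: (i) continuity of $\Fcal$; (ii) that all iterates stay in a compact set — here the explicit bounding steps of Algorithm~\ref{S_alg:RFN} are used, clipping $\BW$ to $[-W_{\mathrm{max}}, W_{\mathrm{max}}]$ and $\BPs$ to $[\Psi_{\mathrm{min}}, \max\{\BC\}]$, together with the normalization constraint $\frac{1}{n}\sum_i \mu_{ij}^2 = 1$ that keeps the $\Bmu_i$ bounded; and (iii) closedness of the composite update map. With these in hand, Zangwill's theorem yields that every limit point is a fixed point maximizing $\Fcal$, and the monotone bounded sequence of $\Fcal$ values converges.

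The main obstacle I anticipate is (iii), the closedness of the E-step point-to-set map. The feasible set $\{\Bmu_i \geq \BZe,\ \frac{1}{n}\sum_i \mu_{ij}^2 = 1\}$ is \emph{non-convex}, and the E-step is not a single map but a cascade that selects among several projected- and reduced-gradient routines according to which one first decreases $O$. I would therefore need to argue that the union and composition of these selection maps is still closed and uniformly produces a strict decrease away from fixed points — the delicate point being the switching behavior near the boundary, where the $\epsilon$-active set and hence the reduced Hessian $\BH$ change discontinuously. Handling this carefully, rather than the essentially routine convexity argument for the M-step, is what makes the GAM conditions genuinely nontrivial to verify here.
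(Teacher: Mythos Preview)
Your proposal is correct and follows essentially the same route as the paper: cast the algorithm as GAM, verify monotone decrease in both steps (M-step via convexity in $\BW$ and $\BPs^{-1}$ plus the Newton direction, E-step via the projection cascade with guaranteed fallbacks), and check the regularity hypotheses of Proposition~5 in \cite{Gunawardana:05} (compactness from the explicit parameter clipping, continuity of the Gaussian densities, unique E-step maximizer). The obstacle you flag---closedness of the E-step map under the method-selection cascade on a non-convex feasible set---is exactly the point the paper dispatches by treating each update as a point-to-set mapping in Zangwill's sense, so that the choice among projection routines is absorbed into the multi-valued map rather than requiring a single continuous selection; you have correctly located the only non-routine verification.
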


\begin{proof}

The factor analysis 
EM algorithm is given by Eq.~\eqref{S_eq:mfirststepR}  and Eq.~\eqref{S_eq:mstepR}
in Section~\ref{sec:mlfa}. Algorithm~\ref{S_alg:RFN} is 
the factor analysis EM algorithm with modified the E-step and the M-step.
The E-step is modified by constraining the variational
distribution $Q$ to non-negative means and by
normalizing its means across the samples. The M-step is modified to
a Newton direction gradient step.

Like EM factor analysis, Algorithm~\ref{S_alg:RFN} aims at maximizing
the negative {\em free energy} $\Fcal$, which is
\begin{align}
\label{S_eq:objectiveProof}
\Fcal \ &= \ \frac{1}{n} \ \sum_{i=1}^n \log p(\Bv_i) \ - \
\frac{1}{n} \ \sum_{i=1}^n D_{\mathrm{KL}}(Q(\Bh_i) \parallel p(\Bh_i \mid \Bv_i) ) \\ \nonumber 
&= \  \frac{1}{n} \ \sum_{i=1}^n \int Q(\Bh_i) \ \log p(\Bv_i) \
d\Bh_i \ - \  \frac{1}{n} \ \sum_{i=1}^n \int Q(\Bh_i) \ \log
\frac{Q(\Bh_i)}{p(\Bh_i \mid \Bv_i)}  \ d\Bh_i \\\nonumber
&= \   - \  \frac{1}{n} \ \sum_{i=1}^n \int Q(\Bh_i) \ 
\log \frac{Q(\Bh_i)}{p(\Bh_i , \Bv_i)} \ d\Bh_i  \\\nonumber
&= \ - \  \frac{1}{n} \ \sum_{i=1}^n \int Q(\Bh_i) \ \log \frac{Q(\Bh_i)}{p(\Bh_i)} \ d\Bh_i \
+ \  \frac{1}{n} \ \sum_{i=1}^n \int Q(\Bh_i) \ \log p(\Bv_i \mid \Bh_i) \ d\Bh_i \\\nonumber
&= \  \frac{1}{n} \ \sum_{i=1}^n \int Q(\Bh_i) \ \log p(\Bv_i \mid
\Bh_i) \ d\Bh_i \ -  \frac{1}{n} \ \sum_{i=1}^n \ D_{\mathrm{KL}}(Q(\Bh_i)
\parallel p(\Bh_i) )  \ .
\end{align}
$D_{\mathrm{KL}}$ denotes the Kullback-Leibler (KL) divergence
\cite{Kullback:51}, which is larger than or equal to zero. 

Algorithm~\ref{S_alg:RFN} decreases 
$\frac{1}{n}  \sum_{i=1}^n D_{\mathrm{KL}}(Q(\Bh_i) \parallel p(\Bh_i \mid \Bv_i) )$
(the E-step objective) in its E-step
under constraints for non-negative means and normalization.
The constraint optimization problem from Section~\ref{sec:fullE} for
the E-step is
\begin{align}
\label{S_eq:EobjectiveRNo}
\min_{Q(\Bh_i)}  {\mbox{\ ~} } & \frac{1}{n} 
\sum_{i=1}^n D_{\mathrm{KL}}(Q(\Bh_i) \parallel p(\Bh_i \mid \Bv_i) )\\ \nonumber 
\mbox{ s.t. }  {\mbox{\ ~} } &\forall_i: \ \Bmu_i \ \geq \ \BZe \ , \\ \nonumber 
&\forall_j: \ \frac{1}{n}  \ \sum_{i=1}^{n} \mu_{ij}^2 \ = \ 1 \ .
\end{align}
The M-step of Algorithm~\ref{S_alg:RFN} aims at decreasing
\begin{align}
 \Ecal \ &= \  - \ \frac{1}{n} \ \sum_{i=1}^{n} \int_{\bbR^l} Q(\Bh_i) \ 
\log \left( p(\Bv_i \mid \Bh_i) \right) \  d\Bh_i \ .
\end{align}
Algorithm~\ref{S_alg:RFN} performs one gradient descent step in the Newton
direction to decrease $\Ecal$, while EM factor analysis
minimizes $\Ecal$.

From the modification of the E-step and the M-step follows that 
Algorithm~\ref{S_alg:RFN} is a 
{\em Generalized Alternating Minimization (GAM)} 
algorithm according to \cite{Gunawardana:05}. 
GAM is an EM algorithm that increases $\Fcal$ in the E-step
and increases $\Fcal$ in the M-step (see also Section~\ref{sec:GAM}).
The most important requirements for the convergence of the GAM
algorithm according to Theorem~\ref{th:GAMconvergence} (Proposition~5 in
\cite{Gunawardana:05}) are the increase of the objective $\Fcal$ in
both the E-step and the M-step. Therefore we first show these two decreases
before showing that all requirements of convergence
Theorem~\ref{th:GAMconvergence} are met.

{\bf Algorithm~\ref{S_alg:RFN} ensures to decrease the M-step objective.}
The M-step objective $\Ecal$ is convex in $\BW$ and $\BPs^{-1}$ according to
Theorem~\ref{th:NewtonLoading} and Theorem~\ref{th:NewtonInvNoise}.
The update with $\eta_W = \eta_{\Psi} = \eta = 1$ leads to the minimum
of $\Ecal$ according to 
Theorem~\ref{th:NewtonLoading} and Theorem~\ref{th:NewtonInvNoise}.
The convexity of $\Ecal$ guarantees that each update with $0 < \eta_W
= \eta_{\Psi} = \eta \leq 1$ decreases the M-step objective $\Ecal$,
except the current $\BW$ and $\BPs^{-1}$ are already the minimizers.

{\bf Algorithm~\ref{S_alg:RFN} ensures to decrease the E-step objective.}
The E-step decrease of Algorithm~\ref{S_alg:RFN} 
is performed by Algorithm~\ref{alg:ProjEstep}.
According to Theorem~\ref{th:projNewton} 
the scaled projection with reduced matrix
ensures a decrease of the E-step objective for
rectifying constraints (convex feasible set).
According to Theorem~\ref{th:gradProj} also gradient projection methods
ensure a decrease of the E-step objective for rectifying constraints.
For rectifying constraints and normalization, the feasible set is not
convex because of the equality constraints.
To optimize such problems,
the generalized reduced gradient method \cite{Abadie:69}
solves each equality constraint for one variable and inserts it into
the objective. For our problem Eq.~\eqref{S_eq:solveIt} gives the solution and
Eq.~\eqref{S_eq:solveIta} the resulting convex constraints.
Now scaled projection and gradient projection methods can be applied.
For rectifying and normalizing constraints, 
also Rosen's \cite{Rosen:61} and
Haug \& Arora's \cite{Haug:79}  
gradient projection method ensures a decrease of the E-step objective
since they can be applied to non-convex problems.

We show that the requirements as given in 
Section~\ref{sec:GAM} for GAM convergence according to
Theorem~\ref{th:GAMconvergence} (Proposition~5 in
\cite{Gunawardana:05}) are fulfilled:
\begin{enumerate}
\item
the learning rules, that is, the E-step and the M-step, are closed
maps $\longrightarrow$ ensured by continuous and continuous
differentiable maps,
\item
the parameter set is compact $\longrightarrow$ ensured by bounding
$\BPs$ and $\BW$,
\item
the family of
variational distributions is compact (often described by the feasible
set of parameters of the variational distributions) $\longrightarrow$
ensured by continuous and continuous
differentiable functions for the constraints and by the bounds on the
variational parameters
$\Bmu$ and $\BSi$ determined by bounds on the parameters and the data,
\item
the support of the density
models does not depend on the parameter $\longrightarrow$ ensured by
Gaussian models with full-rank covariance matrix,
\item
the density models are continuous in the parameters $\longrightarrow$ ensured by
Gaussian models
\item
the E-step has a
unique maximizer $\longrightarrow$ ensured by the convex, 
continuous, and continuous
differentiable function that is
minimized \cite{Dredze:08,Dredze:12}
together with compact feasible set for the variational parameters,
the maximum may be local for non-convex feasible sets stemming from normalization,
\item
the E-step increases the objective 
if not at the maximizer $\longrightarrow$ ensured as shown above,
\item
the M-step has a
unique maximizer (this is not required) $\longrightarrow$ ensured by minimizing a convex,
continuous and continuous differentiable 
function in the model parameter and a convex feasible set,
the maximum is a global maximum,
\item
the M-step increases the objective 
if not at the maximizer $\longrightarrow$ ensured as shown above.
\end{enumerate}
\end{proof}
Since this Proposition~5 in \cite{Gunawardana:05} is based on Zangwill's
generalized convergence theorem, updates of the RFN algorithm
are viewed as point-to-set mappings \cite{Zangwill:69}.
Therefore the numerical precision, the choice of the methods in the E-step,
and GPU implementations are covered by the proof.
That the M-step has a
unique maximizer is not required to proof 
Theorem~\ref{th:RFNconvergence} by Theorem~\ref{th:GAMconvergence}.
However we obtain an alternative proof by exchanging the variational
distribution $Q$ and the parameters $(\BW,\BPs)$, that is, exchanging
the E-step and the M-step. A theorem analog to 
Theorem~\ref{th:GAMconvergence} but with E-step and M-step conditions
exchanged can be derived from Zangwill's 
generalized convergence theorem \cite{Zangwill:69}.

The resulting model from the GAM procedure 
is at a local maximum of
the objective given the model family and the family of 
variational distributions.
{\em The solution minimizes
the KL-distance between the family of full 
variational distributions and full model family}.
``Full'' means that both the observed and the hidden variables are
taken into account, where for the variational distributions the
probability of the observations is set to 1.
The {\em desired family} is defined as the set of 
all probability distributions that assign probability
one to the observation.
In our case the family of 
variational distributions is not the desired family since some
distributions are excluded by the constraints.
Therefore the solution of the GAM optimization 
does not guarantee stationary points 
in likelihood \cite{Gunawardana:05}. 
This means that we do not maximize the likelihood but minimize
\begin{align}
- \ \Fcal \ \approx \ D_{\mathrm{KL}}(Q(\Bh, \Bv) \parallel p(\Bh ,
\Bv) ) \ + \ c
\end{align}
according to Eq.~\eqref{S_eq:objectiveDerive}, where $c$ is a constant
independent of $Q$ and independent of the model parameters.

\section{Correctness Proofs for the RFN Learning Algorithms}
\label{sec:correct}

The RFN algorithm is correct if it has a low reconstruction error and
explains the data covariance matrix by its parameters like factor analysis.
We show in Theorem~\ref{S_th:fixedPointDiagnonal} and 
Theorem~\ref{th:fixedPointFull} that the RFN algorithm 
\begin{enumerate}
\item
minimizes the reconstruction error given $\Bmu_i$
and $\BSi$ (the error is quadratic in $\BPs$);
\item
explains the covariance matrix by its parameters $\BW$ and $\BPs$ plus
an estimate of the second moment of the coding units $\BS$.
\end{enumerate}
Since the minimization of the reconstruction error is
based on $\Bmu_i$, the quality of reconstruction and
covariance explanation 
depends on the correlation between $\Bmu_i$ and
$\Bv_i$.
The larger the correlation between $\Bmu_i$ and
$\Bv_i$, the lower the reconstruction error and 
the better the explanation of the data covariance. 
We ensure maximal information in $\Bmu_i$
on $\Bv_i$ by the I-projection (the minimal Kullback-Leibler distance)
of the posterior onto the family of rectified and
normalized Gaussian distributions.

The reconstruction error 
for given mean values $\Bmu_i$ is
\begin{align}
 \frac{1}{n} \ \sum_{i=1}^n \left\| \Bep_i \right\|_2^2 \ ,
\end{align}
where
\begin{align}
\Bep_i \ &= \   \Bv_i \ - \ \BW \ \Bmu_i \ .
\end{align}
The reconstruction error for using the whole variational distribution $Q(\Bh_i)$
instead of its means is $\BPs$.
Below we will derive  Eq.~\eqref{S_eq:psiA}, which is
\begin{align}
\BPs \ &= \ \diag \left(\frac{1}{n} \ \sum_{i=1}^n \Bep_i \ \Bep_i^T \ + \ \BW \
  \BSi \ \BW^T \right) \ .
\end{align}
Therefore $\BPs$ is the reconstruction error for given mean values plus the
variance  $\BW \BSi \BW^T$ introduced by the hidden variables.

\subsection{Diagonal Noise Covariance Update}
\label{sec:diagUpdata}

\begin{theorem}[RFN Correctness: Diagonal Noise Covariance Update]
\label{S_th:fixedPointDiagnonal}
The fixed point $\BW$ minimizes $\TR \left( \BPs
\right)$ given $\Bmu_i$ and 
$\BSi$ by ridge regression with
\begin{align}
 \TR \left( \BPs
\right) \ &= \ \frac{1}{n} \ \sum_{i=1}^n 
\left\| \Bep_i \right\|_2^2 \ + \ 
\left\| \BW \ \BSi^{1/2} 
\right\|_{\mathrm{F}}^2 \ ,
\end{align}
where we used the error
\begin{align}
\Bep_i \ &= \   \Bv_i \ - \ \BW \ \Bmu_i
\end{align}
The model explains the data covariance matrix by
\begin{align}
\label{S_eq:covApprox}
\BC \ = \ \BPs   \ + \   \BW \ \BS  \ \BW^T
\end{align}
up to an error, which is quadratic in $\BPs$
for  $\BPs  \ll  \BW \BW^T$.
The reconstruction error 
\begin{align}
&\frac{1}{n}  \sum_{i=1}^n \left\| \Bep_i \right\|_2^2
\end{align}
is quadratic in $\BPs$ 
for  $\BPs  \ll  \BW \BW^T$.
\end{theorem}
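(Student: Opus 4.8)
The plan is to derive both conclusions from the two fixed-point conditions of the gradient updates in Algorithm~\ref{S_alg:RFN}, reading off a ridge-regression structure from the $\BW$-update and then using the Woodbury identity to expose the $\BPs$-dependence of the residual. First I would set the $\BW$-increment to zero: since the update is $\BW \leftarrow \BW + \eta_W(\BU\BS^{-1} - \BW)$, a fixed point requires $\BU\BS^{-1} - \BW = \BZe$, hence $\BW = \BU\BS^{-1}$, i.e. $\BW\BS = \BU$. Substituting $\BU = \frac{1}{n}\sum_i \Bv_i\Bmu_i^T$ and $\BS = \frac{1}{n}\sum_i \Bmu_i\Bmu_i^T + \BSi$ shows this is exactly the stationarity (normal) equation of the penalized least-squares objective
\[
\frac{1}{n}\sum_{i=1}^n \left\| \Bv_i - \BW\Bmu_i \right\|_2^2 \ + \ \left\| \BW\BSi^{1/2} \right\|_{\mathrm{F}}^2 ,
\]
whose derivative in $\BW$ yields precisely $\BW\BS = \BU$. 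Writing this objective as a trace and multiplying out $\Bep_i\Bep_i^T$ with $\Bep_i = \Bv_i - \BW\Bmu_i$ gives the matrix identity $\BE = \frac{1}{n}\sum_i \Bep_i\Bep_i^T + \BW\BSi\BW^T$ matching the definition of $\BE$ in Algorithm~\ref{S_alg:RFN}.

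Next I would invoke the diagonal $\BPs$-update. Its fixed point forces $\diag(\BPs) = \diag(\BE) = \diag(\frac{1}{n}\sum_i \Bep_i\Bep_i^T + \BW\BSi\BW^T)$, so $\TR(\BPs)$ equals the trace of the residual matrix; because $\BW$ minimizes that residual, it minimizes $\TR(\BPs)$ for the given $\Bmu_i$ and $\BSi$, which settles the first claim. To extract the $\BPs$-scaling, I would apply the Woodbury identity to $(\BW\BW^T + \BPs)^{-1}$, multiply it from both sides by $\BPs$, and use $\BSi = (\BI + \BW^T\BPs^{-1}\BW)^{-1}$ from Eq.~\eqref{eq:posterior} to obtain
\[
\BW\BSi\BW^T \ = \ \BPs - \BPs\left( \BW\BW^T + \BPs \right)^{-1}\BPs .
\]
Inserting this into the trace of $\diag(\BPs)$ cancels the linear $\BPs$-term and leaves $\TR(\frac{1}{n}\sum_i\Bep_i\Bep_i^T) = \TR(\BPs(\BW\BW^T+\BPs)^{-1}\BPs)$.

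To finish the reconstruction-error claim I would bound the right-hand side by the trace norm, using cyclicity and the positive-semidefinite estimates $\TR(M\BPs^2) \le \TR(M)\,\lambda_{\max}(\BPs)^2 \le \TR(M)\,\TR(\BPs)^2$ with $M = (\BW\BW^T+\BPs)^{-1}$, giving the bound $\TR((\BW\BW^T+\BPs)^{-1})\,\TR(\BPs)^2$; once $(\BW\BW^T+\BPs)^{-1}$ is bounded (the regime $\BPs \ll \BW\BW^T$), this is $O(\BPs^2)$. For the covariance decomposition I would use the fixed-point relation $\BU = \BW\BS$ to rewrite $\BW\BU^T = \BW\BS\BW^T = \BU\BW^T$, substitute this together with the Woodbury expression into the definition of $\BE$, and collect terms to obtain
\[
\frac{1}{n}\sum_{i=1}^n\Bep_i\Bep_i^T \ - \ \BPs\left( \BW\BW^T+\BPs \right)^{-1}\BPs \ = \ \BC - \BPs - \BW\BS\BW^T .
\]
The left-hand side is a difference of two terms each controlled by the previous trace bound, hence quadratic in $\BPs$; since the trace norm of a positive semidefinite matrix equals its trace and dominates the Frobenius norm, $\BC$ is reproduced by $\BPs + \BW\BS\BW^T$ up to a Frobenius error that is $O(\BPs^2)$, while the diagonal is matched exactly by construction of the $\BPs$-update. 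I expect the main obstacle to be the Woodbury bookkeeping together with justifying the trace-norm inequality cleanly, so that the claimed $\BPs^2$ scaling is rigorous rather than merely heuristic; the ridge-regression identification and the exactness of the diagonal are comparatively routine once the fixed-point equations $\BW\BS = \BU$ and $\diag(\BPs) = \diag(\BE)$ are in hand.
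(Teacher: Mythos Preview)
Your proposal is correct and follows essentially the same route as the paper: the fixed-point equation $\BW = \BU\BS^{-1}$ is recognized as ridge regression, the diagonal $\BPs$-fixed-point yields $\TR(\BPs) = \TR(\BE)$, the Woodbury identity rewrites $\BW\BSi\BW^T$ as $\BPs - \BPs(\BW\BW^T+\BPs)^{-1}\BPs$, and the covariance identity comes from $\BU = \BW\BS$. The only cosmetic difference is the trace inequality: the paper bounds $\TR(\BPs M\BPs)$ directly via the Patel--Toda inequality $\TR(AB)\le\TR(A)\TR(B)$ for positive-definite $A,B$, whereas you pass through $\lambda_{\max}(\BPs)^2\le\TR(\BPs)^2$; both reach the same bound $\TR((\BW\BW^T+\BPs)^{-1})\,\TR(\BPs)^2$.
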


\begin{proof}
The fixed point equation for the $\BW$ update is
\begin{align}
\label{S_eq:fix1}
&\Delta \BW \ = \  \BU \ \BS^{-1} \ - \  \BW \ = \ \BZe \ \
\Rightarrow \ \ \BW \ = \ \BU \ \BS^{-1} \ .
\end{align}
Using the
definition of $\BU$ and $\BS$,
the fixed point equation Eq.~\eqref{S_eq:fix1} gives 
\begin{align}
\BW \ = \  \left(\frac{1}{n} \ \sum_{i=1}^n \Bv_i  \  
\Bmu_i^T \right)
\  \left(\frac{1}{n} \ \sum_{i=1}^n
\Bmu_i \ \Bmu_i^T \ + \ \BSi \right)^{-1}
\end{align}
Therefore $\BW$ is a {\em ridge regression} estimate, also called
{\em generalized Tikhonov regularization} estimate, which
minimizes
\begin{align}
& \frac{1}{n} \ \sum_{i=1}^n 
\left\| \Bv_i \ - \ \BW \ \Bmu_i \right\|_2^2 \ + \ 
\left\| \BW \ \BSi^{1/2} 
\right\|_{\mathrm{F}}^2 \\\nonumber 
&= \
\frac{1}{n} \ \sum_{i=1}^n 
\left\| \Bep_i \right\|_2^2 \ + \ 
\left\| \BW \ \BSi^{1/2} 
\right\|_{\mathrm{F}}^2 \\\nonumber 
&= \
\frac{1}{n} \ \sum_{i=1}^n  \Bep_i^T \ \Bep_i \ + \ 
\TR \left( \BW \ \BSi^{1/2}  \ \BSi^{1/2} \BW^T \right) \\\nonumber 
&= \ 
\TR \left(\frac{1}{n} \ \sum_{i=1}^n \Bep_i \ \Bep_i^T \ + \ \BW \
  \BSi \ \BW^T \right) \ ,
\end{align}
where we used the reconstruction error
\begin{align}
\Bep_i \ &= \   \Bv_i \ - \ \BW \ \Bmu_i \ .
\end{align}

We obtain with this definition of the error
\begin{align}
\label{S_eq:conv1}
&\frac{1}{n} \ \sum_{i=1}^n \Bep_i \ \Bep_i^T \ + \ \BW \
  \BSi \ \BW^T \\\nonumber 
&= \
\frac{1}{n} \ \sum_{i=1}^n \Bv_i \ \Bv_i^T \ - \
 \frac{1}{n} \ \sum_{i=1}^n \Bv_i \ \Bmu_i^T \ \BW^T
 \ - \  \frac{1}{n} \ \sum_{i=1}^n \BW \ \Bmu_i \
 \Bv_i^T \\\nonumber 
& \ \ \  + \  \frac{1}{n} \ \sum_{i=1}^n \BW \ \Bmu_i
\ \Bmu_i^T \ \BW^T \ + \ \BW \
  \BSi \ \BW^T  \\\nonumber
&= \  \BC \ - \ \BU \ \BW^T \ - \ \BW \ \BU^T  \ + \   \BW \
\BS  \ \BW^T \ .
\end{align}

Therefore from the fixed point equation for  
$\BPs$ with the diagonal update rule follows
\begin{align}
\label{S_eq:psiA}
& \BPs \ = \ \diag \left(\frac{1}{n} \ \sum_{i=1}^n \Bep_i \ \Bep_i^T \ + \ \BW \
  \BSi \ \BW^T \right) \ ,
\end{align}
where ``$\diag$''
projects a matrix to a diagonal matrix. From this follows that
\begin{align}
\label{S_eq:psi1}
& \TR \left( \BPs \right) \ = \ \TR \left(\frac{1}{n} \ \sum_{i=1}^n \Bep_i \ \Bep_i^T \ + \ \BW \
  \BSi \ \BW^T \right) \ .
\end{align}
Consequently, the fixed point $\BW$ minimizes  $\TR \left( \BPs
\right)$ given $\Bmu_i$ and $\BSi$.

After convergence of the algorithm
$\BSi= \left(\BI  +   \BW^T \BPs^{-1} \BW\right)^{-1}$ holds. 
The Woodbury identity (matrix inversion lemma) states
\begin{align}
& \left( \BW \ \BW^T \ + \ \BPs \right)^{-1} \ = \ \BPs^{-1} \ - \
\BPs^{-1} \BW \left(\BI \ + \  \BW^T \BPs^{-1} \BW\right)^{-1}
\BW^T \BPs^{-1} 
\end{align}
from which follows by multiplying the equation from right and left by $\BPs$ that
\begin{align}
\label{S_eq:psi2}
\BW \ \BSi \ \BW^T \ &= \  \BW \ \left(\BI \ + \  
\BW^T \BPs^{-1} \BW\right)^{-1}
\ \BW^T \\\nonumber
& = \ \BPs \ - \ \BPs \ \left( \BW \ \BW^T \ + \ \BPs \right)^{-1}
\ \BPs
\end{align}

Inserting this equation Eq.~\eqref{S_eq:psi2} into Eq.~\eqref{S_eq:psiA} gives
\begin{align}
\label{S_eq:diagEq}
& \BPs \ = \ \diag \left(\frac{1}{n} \ \sum_{i=1}^n \Bep_i \ \Bep_i^T \ + \ \BPs \ - \ \BPs \ \left( \BW \ \BW^T \ + \ \BPs \right)^{-1}
\ \BPs \right) \\\nonumber
& = \  \BPs \ + \ \diag \left(\frac{1}{n} \ \sum_{i=1}^n \Bep_i \ \Bep_i^T \ - \ \BPs \ \left( \BW \ \BW^T \ + \ \BPs \right)^{-1}
\ \BPs \right) \ .
\end{align}
Therefore we have
\begin{align}
\label{S_eq:diagZero}
& \diag \left(\frac{1}{n} \ \sum_{i=1}^n \Bep_i \ \Bep_i^T \ - \ \BPs
  \ \left( \BW \ \BW^T \ + \ \BPs \right)^{-1} \ \BPs \right) \ = \
\BZe \ .
\end{align}
It follows that
\begin{align}\nonumber
& \TR \left(\frac{1}{n} \ \sum_{i=1}^n \Bep_i \
\Bep_i^T \right)  \ = \ \TR \left( \BPs \ \left( \BW \ \BW^T \ + \ \BPs \right)^{-1}
\ \BPs \right)  \\ \label{S_eq:bpEps}
&\leq \ \TR \left(\left( \BW \ \BW^T \ + \ \BPs \right)^{-1}
\right) \ \TR \left( \BPs \right)^2 \ .
\end{align}
The inequality uses the fact that for positive definite matrices $\BA$ and
$\BB$ inequality $\TR(\BA \BB) \leq \TR(\BA) \TR(\BB)$ holds \cite{Patel:79}.
Thus, for $\BPs  \ll  \BW \BW^T$ the error $\TR \left(\frac{1}{n}  \sum_{i=1}^n \Bep_i 
\Bep_i^T \right)=\frac{1}{n}  \sum_{i=1}^n \Bep_i^T 
\Bep_i$ is quadratic in $\BPs$.

Multiplying the fixed point equation Eq.~\eqref{S_eq:fix1}
by $\BS$ gives $\BU=\BW \BS$. Therefore we have:
\begin{align}
\label{S_eq:equalB}
\BW \ \BU^T \ = \ \BW \ \BS \ \BW^T \ = \ \BU  \ \BW^T \ .
\end{align}
Inserting Eq.~\eqref{S_eq:psi2} into the first line of Eq.~\eqref{S_eq:conv1} 
and Eq.~\eqref{S_eq:equalB} for simplifying 
the last line of Eq.~\eqref{S_eq:conv1} gives
\begin{align}
\frac{1}{n} \ \sum_{i=1}^n \Bep_i \ \Bep_i^T \ - \ 
\BPs \ \left( \BW \ \BW^T \ + \ \BPs \right)^{-1} \ \BPs \ = \
 \BC \ - \ \BPs   \ - \   \BW \ \BS  \ \BW^T \ .
\end{align} 
Using the trace norm (nuclear
norm or Ky-Fan n-norm) on matrices, Eq.~\eqref{S_eq:bpEps} states that the left hand
side is quadratic in  $\BPs$ for $\BPs  \ll \BW \BW^T$.
The trace norm of a positive
semi-definite matrix is its trace and bounds the Frobenius norm \cite{Srebro:04}.
Furthermore, Eq.~\eqref{S_eq:diagZero} states that the left hand side of
this equation has zero diagonal entries.
Therfore it follows that
\begin{align}
 \BC \ = \ \BPs   \ + \   \BW \ \BS  \ \BW^T 
\end{align}
holds except an error, which is quadratic in $\BPs$ 
for  $\BPs \ll \BW \BW^T$. The
diagonal is exactly modeled according to Eq.~\eqref{S_eq:diagZero}.
\end{proof}

Therefore the model corresponding to the fixed point explains the
empirical matrix of second
moments $\BC$ by a noise part $\BPs$ and a signal 
part $\BW \BS \BW^T$.
Like factor analysis the data variance is explained by the model via
the parameters  $\BPs$ (noise) and $\BW$ (signal).

\subsection{Full Noise Covariance Update}
\label{sec:fullUpdate}

\begin{theorem}[RFN Correctness: Full Noise Covariance Update]
\label{th:fixedPointFull}
The fixed point $\BW$ minimizes $\TR \left( \BPs
\right)$ given $\Bmu_i$ and $\BSi$ by ridge regression with
\begin{align}
 \TR \left( \BPs
\right) \ &= \ \frac{1}{n} \ \sum_{i=1}^n 
\left\| \Bep_i \right\|_2^2 \ + \ 
\left\| \BW \ \BSi^{1/2} 
\right\|_{\mathrm{F}}^2 \ ,
\end{align}
where we used the error
\begin{align}
\Bep_i \ &= \   \Bv_i \ - \ \BW \ \Bmu_i
\end{align}
The model explains the data covariance matrix by
\begin{align}
 \BC \ &= \ \BPs   \ + \   \BW \ \BS  \ \BW^T \ .
\end{align}
The reconstruction error 
\begin{align}
&\frac{1}{n}  \sum_{i=1}^n \left\| \Bep_i \right\|_2^2 
\end{align}
is quadratic in $\BPs$ 
for  $\BPs  \ll \BW \BW^T$.
\end{theorem}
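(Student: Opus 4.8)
The plan is to mirror the diagonal-update proof (Theorem~\ref{S_th:fixedPointDiagnonal}) almost line for line, since the only structural change is that the $\BPs$ update now uses the full matrix $\BE$ rather than its diagonal. First I would note that the fixed-point equation for $\BW$ is unchanged: setting $\Delta\BW = \BU\BS^{-1} - \BW = \BZe$ gives $\BW = \BU\BS^{-1}$, and substituting the definitions of $\BU$ and $\BS$ exhibits $\BW$ as the ridge-regression (generalized Tikhonov) estimate minimizing $\frac{1}{n}\sum_{i=1}^n \|\Bep_i\|_2^2 + \|\BW\BSi^{1/2}\|_{\mathrm{F}}^2$, which equals $\TR(\frac{1}{n}\sum_i \Bep_i\Bep_i^T + \BW\BSi\BW^T)$. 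This reproduces the ridge-regression characterization and the $\TR(\BPs)$ formula verbatim from the diagonal case, establishing the first claim of the theorem.

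Next comes the one genuine difference. The full update $\BPs \leftarrow \BPs + \eta_{\Psi}(\BE - \BPs)$ has fixed point $\BPs = \BE$ \emph{as a full matrix}, so I would write $\BPs = \frac{1}{n}\sum_i \Bep_i\Bep_i^T + \BW\BSi\BW^T$ \emph{without} the $\diag$ projection of Eq.~\eqref{S_eq:psiA}. To obtain the exact covariance decomposition I would invoke the expansion Eq.~\eqref{S_eq:conv1}, namely $\frac{1}{n}\sum_i\Bep_i\Bep_i^T + \BW\BSi\BW^T = \BC - \BU\BW^T - \BW\BU^T + \BW\BS\BW^T$, together with the second fixed-point identity $\BU = \BW\BS$ of Eq.~\eqref{S_eq:equalB}, which collapses the cross terms via $\BW\BU^T = \BU\BW^T = \BW\BS\BW^T$. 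The three $\BW\BS\BW^T$ contributions then combine to a single one, giving $\BPs = \BC - \BW\BS\BW^T$, i.e. $\BC = \BPs + \BW\BS\BW^T$ with no residual. This is precisely where the full update improves on the diagonal one: since the off-diagonal entries are no longer discarded, the covariance identity holds exactly rather than only up to a quadratic error.

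For the reconstruction-error bound I would proceed as in the diagonal proof but retain the full matrix. Substituting the Woodbury-derived identity Eq.~\eqref{S_eq:psi2}, $\BW\BSi\BW^T = \BPs - \BPs(\BW\BW^T + \BPs)^{-1}\BPs$, into the fixed point $\BPs = \frac{1}{n}\sum_i\Bep_i\Bep_i^T + \BW\BSi\BW^T$ and cancelling $\BPs$ yields the full-matrix identity $\frac{1}{n}\sum_i\Bep_i\Bep_i^T = \BPs(\BW\BW^T + \BPs)^{-1}\BPs$, which is the analog of the zero-diagonal statement Eq.~\eqref{S_eq:diagZero} now holding for the entire matrix. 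Taking the trace and applying the inequality $\TR(\BA\BB) \leq \TR(\BA)\TR(\BB)$ for positive definite matrices \cite{Patel:79} gives $\TR(\frac{1}{n}\sum_i\Bep_i\Bep_i^T) \leq \TR((\BW\BW^T + \BPs)^{-1})\,\TR(\BPs)^2$, so the reconstruction error is quadratic in $\BPs$ whenever $\BPs \ll \BW\BW^T$.

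I expect no truly hard step: the full-covariance case is cleaner than the diagonal one, because removing the $\diag$ projection upgrades the approximate covariance identity to an exact one and the zero-diagonal conclusion to a full-matrix equality. The only points needing care are to verify that the fixed point of the full update really is $\BPs = \BE$ as a matrix equation (immediate from the update rule) and to confirm that the positive-definiteness hypotheses required for the Woodbury identity and the trace inequality remain valid after convergence, when $\BSi = (\BI + \BW^T\BPs^{-1}\BW)^{-1}$ holds.
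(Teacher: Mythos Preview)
Your proposal is correct and follows essentially the same route as the paper's proof: the paper simply cites Theorem~\ref{S_th:fixedPointDiagnonal} for the ridge-regression part, then uses the full fixed-point $\BPs=\BE$ together with Eq.~\eqref{S_eq:equalB} to obtain $\BC=\BPs+\BW\BS\BW^T$ exactly, and finally inserts Eq.~\eqref{S_eq:psi2} into $\BPs=\frac{1}{n}\sum_i\Bep_i\Bep_i^T+\BW\BSi\BW^T$ to derive $\frac{1}{n}\sum_i\Bep_i\Bep_i^T=\BPs(\BW\BW^T+\BPs)^{-1}\BPs$ and conclude the quadratic bound. Your presentation is slightly more explicit (you rederive the ridge-regression characterization and spell out the trace-inequality step), but the argument is the same.
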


\begin{proof}
The first part follows from previous Theorem~\ref{S_th:fixedPointDiagnonal}.
The fixed point equation for the $\BPs$ update is
\begin{align}
\BPs \ &= \  \BC \ - \ \BU \ \BW^T \ - \ \BW \ \BU^T  \ + \   \BW \
\BS  \ \BW^T \ ,
\end{align}
using Eq.~\eqref{S_eq:equalB} this leads to
\begin{align}
 \BC \ &=  \ \BPs \ + \  \BW \ \BS \ \BW^T \ .
\end{align}

From Eq.~\eqref{S_eq:conv1} follows for the fixed point of $\BPs$ with
the full update rule:
\begin{align}
\label{S_eq:psiA1}
\BPs \ &= \ \frac{1}{n} \ \sum_{i=1}^n \Bep_i \ \Bep_i^T \ + \ \BW \
  \BSi \ \BW^T  \ .
\end{align}
Inserting Eq.~\eqref{S_eq:psi2} into Eq.~\eqref{S_eq:psiA1} gives 
\begin{align}
\BPs \ &= \ \frac{1}{n} \ \sum_{i=1}^n \Bep_i \
  \Bep_i^T \ + \ 
 \BPs \ - \ \BPs \ \left( \BW \ \BW^T \ + \ \BPs \right)^{-1}
\ \BPs \ ,
\end{align}
from which follows
\begin{align}
&\frac{1}{n} \ \sum_{i=1}^n \Bep_i \ \Bep_i^T \ = \  \BPs \ \left( \BW \ \BW^T \ + \ \BPs \right)^{-1} \ \BPs  \ .
\end{align}
Thus, the error  $\TR \left(\frac{1}{n}  \sum_{i=1}^n \Bep_i 
\Bep_i^T \right)=\frac{1}{n}  \sum_{i=1}^n \Bep_i^T 
\Bep_i $ is quadratic in $\BPs$, for  $\BPs \ll \BW \BW^T$.
\end{proof}

\section{Maximum Likelihood Factor Analysis}
\label{sec:mlfa}

We are given the data $\{\Bv\}  = \{\Bv_1,\ldots,\Bv_n\}$ which
is assumed to be centered. Centering can be done by subtracting the mean $\Bmu$
from the data. The model is
\begin{align}
\label{S_eq:famodel} \Bv \ &= \ \BW \Bh \ + \ \Bep \ ,
\end{align}
where
\begin{align}
\Bh \ &\sim \ \Ncal\left(\BZe,\BI\right) \quad \mbox{and} \quad
\Bep \ \sim \ \Ncal\left(\BZe,\BPs \right) \ .
\end{align}
The model includes the {\em observations} $\Bv \in \bbR^m$,
the {\em noise} $\Bep \in \bbR^m$,
the {\em factors} $\Bh \in \bbR^l$,
the {\em factor loading matrix} $\BW \in \bbR^{m \times l}$,
and the {\em noise covariance matrix} $\BPs \in \bbR^{m \times m}$.
Typically we assume that $\BPs$ is a diagonal matrix
to explain data covariance by signal and not by noise.
The data variance is explained through a signal part $\BW \Bh$ and through
a noise part $\Bep$.
The parameters of the model are $\BW$ and $\BPs$.
From the model assumption it follows that
if $\Bh$ is given, then only the noise $\Bep$ is a random variable and
we have
\begin{align}
\label{S_eq:fa_x_given_z}
\Bv \mid \Bh \ &\sim \ \Ncal\left( \BW \Bh , \BPs\right) \ .
\end{align}

We want to derive the {\em likelihood} of the data under the model,
that is, the likelihood that the model has produced the data. 
Let $\EXP$ denote 
the expectation of the data including the prior distribution of the factors and the
noise distribution. We obtain for the first two moments and the variance:
\begin{align}
\EXP(\Bv) \ &= \ \EXP( \BW \Bh \ + \ \Bep) \ = \  \BW \EXP(\Bh) \
+ \ \EXP(\Bep) \ = \ \BZe \ , \\\nonumber
\EXP \left(\Bv \ \Bv^T
\right) \ &= \ \EXP\left( (\BW \Bh \ + \ \Bep) (\BW \Bh \ + \
\Bep)^T \right) \ =
\\\nonumber    & \ \ \BW \EXP\left(\Bh \ \Bh^T\right)  \BW^T \ + \  \BW
\EXP\left(\Bh\right) \EXP\left(\Bep^T\right) \\\nonumber
&+ \
\EXP\left(\Bep\right) \ \EXP\left(\Bh^T\right) \ \BW^T \ + \ \EXP\left(\Bep \ \Bep^T \right)
\ =
\\\nonumber & \ \ \BW  \ \BW^T \ + \ \BPs \\
\mathrm{var}(\Bv) \  &= \
\EXP \left(\Bv \ \Bv^T
\right) \ - \  \left( \EXP(\Bv)\right)^2 \ =  \  \BW  \ \BW^T \ + \ \BPs \ .
\end{align}

The observations are Gaussian distributed since their distribution is the product of
two Gaussian densities divided by a normalizing constant.
Therefore, the marginal distribution for $\Bv$ is
\begin{align}
\label{S_eq:fa_x}
\Bv \ &\sim \ \Ncal\left(\BZe \ , \ \BW \BW^T
\ + \ \BPs\right) \ .
\end{align}

The $\log$-likelihood $\log \prod_{i=1}^{n} p(\Bv_i)$
of the data $\{\Bv\}$ under the model $(\BW ,
\BPs)$ is
\begin{align}
\label{S_eq:fa_likelihood}
&\log \prod_{i=1}^{n} p(\Bv_i) \ = \
\log \ \prod_{i=1}^{n} \left( 2 \pi
\right)^{-m/2} \left|  \BW \BW^T  \ + \ \BPs \right|^{-1/2}
\\\nonumber &\exp \left( - \frac{1}{2} \left( \Bv_i^T \left(
\BW \BW^T  \ + \ \BPs\right)^{-1} \Bv_i \right) \right) \\ \nonumber
&= \ - \ \frac{n \ m }{2} \ \log \left( 2 \pi \right) 
 - \ \frac{n}{2} \ \log \left|  \BW \BW^T  \ + \ \BPs \right|
\\\nonumber &- \frac{1}{2} \sum_{i=1}^{n}  \Bv_i^T \left(
\BW \BW^T  \ + \ \BPs\right)^{-1} \Bv_i  \ ,
\end{align}
where $|.|$ denotes the absolute value of the determinant of a matrix.

To maximize the likelihood is difficult since a closed
form for the maximum does not exists.
Therefore, typically the expectation maximization (EM) algorithm is
used to maximize the likelihood. For the EM algorithm a variational
distribution $Q$ is required which estimates the factors given the
observations.

We consider a single data vector $\Bv_i$.
The posterior is also Gaussian with mean $(\Bmu_p)_i$ and
covariance matrix $\BSi_p$:
\begin{align}
\label{S_eq:orgsig}
\Bh_i \mid \Bv_i  \ &\sim \ \Ncal\left(
(\Bmu_p)_i ,  \BSi_p \right) \\\nonumber 
(\Bmu_p)_i \ &= \  \BW^T 
\left( \BW \ \BW^T \ + \ \BPs \right)^{-1} \Bv_i \\\nonumber
\BSi_p \ &= \ \BI \ - \  \BW^T \ \left( \BW \
\BW^T \ + \ \BPs \right)^{-1} \BW \ ,
\end{align}
where we used the fact that
\begin{align}
\Ba  \ &\sim \ \Ncal\left( \Bmu_a ,  \Sigma_{aa} \right) \ ,
\ \Bu \ \sim \ \Ncal\left( \Bmu_u ,  \Sigma_{uu}
  \right) \ , \\\nonumber
\Sigma_{ua} \ &= \ \COV(\Bu,\Ba) \mbox{
  and } \ \ \Sigma_{au} \ = \ \COV(\Ba,\Bu): \\\nonumber
\Ba \mid \Bu  \ &\sim \  \Ncal\left( \Bmu_a \ + \ \Sigma_{au}
  \Sigma_{uu}^{-1} \left( \Bu \ - \ \Bmu_u \right) \ , \
\Sigma_{aa} \ - \  \Sigma_{au}  \Sigma_{uu}^{-1} \Sigma_{ua} \right)
\end{align}
and
\begin{align}
\EXP (\Bh \Bv) \ &= \ \BW \ \EXP (\Bh \ \Bh^T) \ = \ \BW \ .
\end{align}
The EM algorithm sets $Q$ to the posterior distribution for data vector $\Bv_i$:
\begin{align}
Q_i(\Bh_i) \ &= \ p \left( \Bh_i \mid \Bv_i; \BW,\BPs \right) \ = \ \Ncal\left(
(\Bmu_p)_i ,  \BSi_p \right) \ ,
\end{align}
therefore we obtain for standared EM
\begin{align}
\Bmu_i  \ &= \ (\Bmu_q)_i  \ = \ (\Bmu_p)_i  \\
\BSi \ &= \ \BSi_q \ = \ \BSi_p \ . 
\end{align}

The matrix
inversion lemma (Woodbury identiy) can be used to compute 
$\Bmu_i $ and $\BSi$:
\begin{align}
&\left( \BW \ \BW^T \ + \ \BPs \right)^{-1} \ = \ \BPs^{-1} \ - \
\BPs^{-1} \BW \left(\BI \ + \  \BW^T \BPs^{-1} \BW\right)^{-1}
\BW^T \BPs^{-1} \ .
\end{align}
Using this identity,
the mean and the covariance matrix
can be computed as:
\begin{align}
\label{S_eq:newsig}
\Bmu_i \ &= \  \BW^T \
\left( \BW \ \BW^T \ + \ \BPs \right)^{-1} \ \Bv_i \ = \ \left( \BI \
  + \ \BW^T \BPs^{-1}\BW \right)^{-1} \ \BW^T
\BPs^{-1} \ \Bv_i  \ , \\\nonumber
\BSi \ &= \ \BI \ - \  \BW^T \ \left( \BW \
\BW^T \ + \ \BPs \right)^{-1} \BW \ = \ \left( \BI \ + \ \BW^T
\BPs^{-1}\BW \right)^{-1} \ .
\end{align}

The EM algorithm 
maximizes a lower bound $\Fcal$ on the $\log$-likelihood:
\begin{align}
\label{S_eq:objective1}
\Fcal \ &= \ \log p(\Bv_i) \ - \ D_{\mathrm{KL}}(Q(\Bh_i) \parallel p(\Bh_i \mid \Bv_i) ) \\ \nonumber 
&= \ \int Q(\Bh_i) \ \log p(\Bv_i) \ d\Bh_i \ - \ \int Q(\Bh_i) \ \log
\frac{Q(\Bh_i)}{p(\Bh_i \mid \Bv_i)}  \ d\Bh_i \\\nonumber
&= \   - \ \int Q(\Bh_i) \ \log \frac{Q(\Bh_i)}{p(\Bh_i , \Bv_i)} \ d\Bh_i \\\nonumber
&= \ - \ \int Q(\Bh_i) \ \log \frac{Q(\Bh_i)}{p(\Bh_i)} \ d\Bh_i \
+ \ \int Q(\Bh_i) \ \log p(\Bv_i \mid \Bh_i) \ d\Bh_i \\\nonumber
&= \ \int Q(\Bh_i) \ \log p(\Bv_i \mid \Bh_i) \ d\Bh_i \ - \ D_{\mathrm{KL}}(Q(\Bh_i)
\parallel p(\Bh_i) )  \ .
\end{align}
$D_{\mathrm{KL}}$ denotes the Kullback-Leibler (KL) divergence
\cite{Kullback:51} which is larger than zero. 

$\Fcal$ is the EM objective which has to be
maximized in order to maximize the likelihood.
The {\bf E-step} maximizes $\Fcal$ with respect to the variational
distribution $Q$, therefore the E-step minimizes
$D_{\mathrm{KL}}(Q(\Bh_i) \parallel p(\Bh_i \mid \Bv_i) )$.
After the standard unconstrained E-step,
the variational distribution is equal to the posterior, i.e.\
$Q(\Bh_i)  =  p(\Bh_i \mid \Bv_i)$. 
Therefore the KL divergence 
\begin{align}
&D_{\mathrm{KL}}(Q(\Bh_i) \parallel p(\Bh_i \mid \Bv_i)) \ = \ 0 
\end{align}
is zero, thus
$\Fcal$ is equal to the log-likelihood $\log p(\Bv_i)$ ($\Fcal=\log p(\Bv_i)$).
The {\bf M-step} maximizes $\Fcal$ with respect to the 
parameters $(\BW,\BPs)$, therefore the M-step maximizes
$\int Q(\Bh_i)  \log p(\Bv_i \mid \Bh_i)  d\Bh_i$.

We next consider again all $n$ samples $\{\Bv\}  = \{\Bv_1,\ldots,\Bv_n\}$.
The {\em expected reconstruction error} $\Ecal$ 
for these $n$ data samples is
\begin{align}
 \Ecal \ &= \  - \ \frac{1}{n} \ \sum_{i=1}^{n} \int_{\bbR^l} Q(\Bh_i) \ 
\log \left( p(\Bv_i \mid \Bh_i) \right) \  d\Bh_i \ = \ \frac{1}{n} \
\sum_{i=1}^{n} \EXP_Q \left(\log \left( p(\Bv_i \mid \Bh_i) \right)\right)
\end{align}
and objective to maximize becomes
\begin{align} 
\Fcal \ &= \ - \ \Ecal \ - \   \frac{1}{n} \ \sum_{i=1}^{n}   D_{\mathrm{KL}}(Q(\Bh_i)
\parallel p(\Bh_i) ) \ .
\end{align}

The M-step requires to minimize
$\Ecal$:
\begin{align}
\label{S_eq:recError}
\Ecal \ &= \
\frac{m}{2} \log \left( 2 \pi \right) \  + \ \frac{1}{2} \log
 \left| \BPs \right| \ + \\\nonumber
&\frac{1}{2 \ n} \sum_{i=1}^{n} \EXP_Q \left(\left(
\Bv_i \ - \  \BW \Bh_i \right)^T \BPs^{-1} \left(  \Bv_i \ - \
\BW \Bh_i\right) \right) \\
&=
\frac{m}{2} \log \left( 2 \pi \right) \  + \ \frac{1}{2} \log
 \left| \BPs \right| \ + \\\nonumber
&\frac{1}{2\ n} \sum_{i=1}^{n} \EXP_Q \left(
\Bv_i^T \BPs^{-1}\Bv_i \ - \
2 \ \Bv_i^T \BPs^{-1} \BW \Bh_i \ + \ \Bh_i^T \BW^T \BPs^{-1} \BW \Bh_i
\right)\\
&=
\frac{m}{2} \log \left( 2 \pi \right) \  + \ \frac{1}{2} \log
 \left| \BPs \right| \ + \ \frac{1}{2\ n} \sum_{i=1}^{n} 
\Bv_i^T \BPs^{-1}\Bv_i \\\nonumber
&- \ \TR  \left( \BPs^{-1} \BW \sum_{i=1}^{n} \EXP_Q
  \left(\Bh_i\right) \Bv_i^T\right)
 \ + \ \frac{1}{2} \TR  \left(\BW^T \BPs^{-1} \BW \sum_{i=1}^{n} \EXP_Q \left(\Bh_i \Bh_i^T\right)
\right) \\
&=
\frac{m}{2} \log \left( 2 \pi \right) \  + \ \frac{1}{2} \log
 \left| \BPs \right| \ + \ \frac{1}{2}  \TR \left(\BPs^{-1}  \frac{1}{n} \ \sum_{i=1}^{n} \Bv_i \Bv_i^T \right)\\\nonumber
&- \ \TR  \left( \BPs^{-1} \BW  \frac{1}{n} \ \sum_{i=1}^{n} \Bmu_i \Bv_i^T\right)
 \ + \ \frac{1}{2} \TR  \left(\BW^T \BPs^{-1} \BW  \frac{1}{n} \ \sum_{i=1}^{n} \left(
     \BSi \ + \ \Bmu_i \Bmu_i^T
\right)\right)\\
&= \frac{1}{2} \left(
m \ \log \left( 2 \pi \right) \  + \  \log
 \left| \BPs \right| \ + \   \TR \left(\BPs^{-1} \BC \right) \right.\\\nonumber
&\left. - \
2 \  \TR  \left( \BPs^{-1} \BW \BU^T \right)
 \ + \ \TR  \left(\BW^T \BPs^{-1} \BW \BS \right)\right) \ ,
\end{align}
where $\TR$ gives the trace of a matrix.

The derivatives with respect to the parameters are set to zero for the
optimal parameters:
\begin{align}
\label{S_eq:optmap}
\nabla_{\BW}  \Ecal \ &= \
 - \ \frac{1}{2 \ n} \sum_{i=1}^{n}
\BPs^{-1} \ \BW \ \EXP_Q  \left(\Bh_i \
\Bh_i^T\right)
 \ + \ \frac{1}{2 \ n} \sum_{i=1}^{n} \BPs^{-1} \ \Bv_i \ \EXP_Q^T
\left(\Bh_i\right) \ = \ \BZe \
\end{align}
and
\begin{align}
\label{S_eq:optpi}
\nabla_{\BPs}   \Ecal \ &= \
-\frac{1}{2} \BPs^{-1} \ + \\\nonumber & \ \  \frac{1}{2 \ n}
 \sum_{i=1}^{n} \EXP_Q \left(
\BPs^{-1} \left( \Bv_i \ - \  \BW \Bh_i \right) \ \left( \Bv_i
\ - \  \BW \Bh_i \right)^T \BPs^{-1} \right) \ = \ \BZe \ .
\end{align}

Solving above equations gives:
\begin{align}
\label{S_eq:optpipre} \BW^\nn \ &= \ \left( \frac{1}{n}
\sum_{i=1}^n \Bv_i \ \EXP_{\Bh_i  \mid \Bv_i}^T \left(\Bh_i\right) \right) \
\left( \frac{1}{n} \sum_{i=1}^n \EXP_Q \left(\Bh_i
\ \Bh_i^T\right) \right)^{-1}
\end{align}
and
\begin{align}
\label{S_eq:optpp}
\BPs^{\nn} \ &= \
 \frac{1}{n} \sum_{i=1}^n
 \EXP_Q \left(
\left( \Bv_i \ - \  \BW^{\nn} \Bh_i \right) \left(
\Bv_i \ - \  \BW^{\nn} \Bh_i \right)^T  \right)
 \ = \\\nonumber
&  \frac{1}{n} \sum_{i=1}^n  \Bv_i \ \Bv_i^T
\ - \  \frac{1}{n} \sum_{i=1}^n
\Bv_i \EXP_Q^T \left(  \Bh_i \right)
 \left( \BW^{\nn}\right)^T 
\ -  \\\nonumber
&  \frac{1}{n} \sum_{i=1}^n
\BW^{\nn}  \EXP_Q \left(  \Bh_i \right)
 \Bv_i^T
\ + \   \BW^{\nn} \frac{1}{n} \sum_{i=1}^n
 \EXP_Q \left(  \Bh_i \ \Bh_i^T \right)
\left( \BW^{\nn}\right)^T \ .
\end{align}

We obtain the following EM updates:
\begin{align}\label{S_eq:mfirststep}
\mbox{{\bf E-step:}}& ~ \\\nonumber
\Bmu_i \ &= \  \left(\BI \
  + \ \BW^T \BPs^{-1}\BW \right)^{-1}
 \BW^T \BPs^{-1} \ \Bv_i \ , \\\nonumber
\BSi \ &=  \ \left(  \BI \ + \ \BW^T
\BPs^{-1}\BW \right)^{-1} \ , \\\nonumber
\EXP_Q
\left(\Bh_i\right) \ &= \ \Bmu_i \\\nonumber
\EXP_Q \left(\Bh_i \ \Bh_i^T\right) \ &= \
\Bmu_i \ \Bmu_i^T \ + \ \BSi  \\\nonumber
 ~ ~ & ~ \\ \label{S_eq:mstep}
\mbox{{\bf M-step:}}& ~ \\\nonumber
\BW^\nn \ &= \
 \left( \frac{1}{n}\sum_{i=1}^n \Bv_i \ 
\EXP_{\Bh_i  \mid \Bv_i}^T \left(\Bh_i\right) \right) \
\left( \frac{1}{n} \sum_{i=1}^n \EXP_Q \left(\Bh_i
\ \Bh_i^T\right) \right)^{-1} \\
\BPs^{\nn} \ &= \ \frac{1}{n} \sum_{i=1}^n  \Bv_i \ \Bv_i^T
\ - \  \frac{1}{n} \sum_{i=1}^n
\Bv_i \EXP_Q^T \left(  \Bh_i \right)
 \left( \BW^{\nn}\right)^T 
\ -  \\\nonumber
&  \frac{1}{n} \sum_{i=1}^n
\BW^{\nn}  \EXP_Q \left(  \Bh_i \right)
 \Bv_i^T
\ + \   \BW^{\nn} \frac{1}{n} \sum_{i=1}^n
 \EXP_Q \left(  \Bh_i \ \Bh_i^T \right)
\left( \BW^{\nn}\right)^T \ .
\end{align}

The EM algorithms can be reformulated as:
\begin{align}\label{S_eq:mfirststepR}
\mbox{{\bf E-step:}}& ~ \\\nonumber
\Bmu_i \ &= \  \left(\BI \
  + \ \BW^T \BPs^{-1}\BW \right)^{-1}
 \BW^T \BPs^{-1} \ \Bv_i \ , \\\nonumber
\BSi \ &=  \ \left(  \BI \ + \ \BW^T
\BPs^{-1}\BW \right)^{-1} \ , \\\nonumber
\EXP_Q
\left(\Bh_i\right) \ &= \ \Bmu_i \\\nonumber
\EXP_Q \left(\Bh_i \ \Bh_i^T\right) \ &= \
\Bmu_i \ \Bmu_i^T \ + \ \BSi  \\\nonumber
 ~ ~ & ~ \\ \label{S_eq:mstepR}
\mbox{{\bf M-step:}}& ~ \\\nonumber
\BC \ &= \   \frac{1}{n} \sum_{i=1}^n  \Bv_i \ \Bv_i^T \\
\BU \ &= \ \frac{1}{n}\sum_{i=1}^n \Bv_i \ 
\EXP_Q^T \left(\Bh_i\right) \\
\BS  \ &= \  \frac{1}{n} \sum_{i=1}^n \EXP_Q \left(\Bh_i
\ \Bh_i^T\right) \\
\BW^\nn \ &= \ \BU \ \BS^{-1} \\
\BPs^{\nn} \ &= \ 
\BC \ - \ \BU \BW^T \ - \ \BW \BU^T \ + \ \BW \BS \BW^T \ .
\end{align}

\section{The RFN Objective}
\label{sec:objective}

{\em Our goal is to find a sparse, non-negative representation of 
the input which extracts structure from the input.} 
A sparse, non-negative representation is desired to code only 
events or objects that have caused the input. 
We assume that only few events or objects caused the input, therefore, 
we aim at sparseness.
Furthermore, we do not want to code the degree of absence 
of events or objects. As the vast majority of events and objects is
supposed to be absent, to code for their degree of absence would
introduce a high level of random fluctuations. 

We aim at extracting structures from the input, therefore
generative models are use as they explicitly model input
structures. For example factor analysis models the covariance
structure of the data.
However a generative model cannot enforce 
sparse, non-negative representation of the input.
The input representation of a generative model is 
the posterior's mean, median, or mode.
Generative models with rectified priors 
(zero probability for negative values) lead to rectified
posteriors. 
However these posteriors do not have 
sparse means (they must be positive), 
that is, they do not yield sparse codes \cite{Frey:99}. 
For example, rectified factor analysis, which rectifies
Gaussian priors and selects models using a variational Bayesian
learning procedure, does not yield posteriors with sparse means
\cite{Harva:05,Harva:07}.
A generative model with hidden units $\Bh$ and data $\Bv$
is defined by its prior $p(\Bh)$ and its likelihood
$p(\Bv \mid \Bh)$. 
The posterior $p(\Bh \mid \Bv)$ supplies the 
input representation of a model by the posterior's mean, median, or mode.
However, the posterior depends on the data $\Bv$, therefore sparseness and
non-negativity of its means cannot be guaranteed independent of the data.
Problem at coding the input by generative models 
is the data-dependency of the posterior means.

Therefore we use the {\em posterior regularization method} 
({\em posterior constraint method}) \cite{Ganchev:10,Graca:09,Graca:07}.
The posterior regularization framework
separates model characteristics from data dependent characteristics
like the likelihood or posterior constraints.
Posterior regularization incorporates data-dependent characteristics
as constraints on model posteriors given the observed data, 
which are difficult to encode via model 
parameters by Bayesian priors.

A generative model with prior $p(\Bh)$ and likelihood
$p(\Bv \mid \Bh)$ has the 
full model distribution $p(\Bh , \Bv)= p(\Bv \mid \Bh) p(\Bh)$.
It can be written as $p(\Bh , \Bv)= p(\Bh \mid \Bv)
p(\Bv)$, where $p(\Bh \mid \Bv)$ is the model posterior of the hidden
variables and $p(\Bv)$ is the evidence, that is, the likelihood of the
data to be produced by the model.
The model family 
and its parametrization determines which structures are extracted from
the data. Typically the model parameters enter 
the likelihood $p(\Bv \mid \Bh)$ and are adjusted to the observed data. 
For the posterior regularization method, a family $\Qcal$ of allowed
posterior distributions is introduced. $\Qcal$ is defined by the
expectations of constraint features. In our case the posterior means
have to be non-negative. Distributions $Q \in \Qcal$ are called 
{\em variational distributions} (see later for using this term).
The full variational distribution 
is $Q(\Bh, \Bv)=  Q(\Bh \mid \Bv) p_v(\Bv)$ with $Q(\Bh \mid \Bv) \in \Qcal$.
The distribution $p_v(\Bv)$ is the unknown distribution of
observations as determined by the world or the data generation
process. This distribution is
approximated by samples drawn from the world, namely the training
samples. 
{\em $p(\Bh , \Bv)$ contains all model assumptions like the
  structures used to model the data, while $Q(\Bh, \Bv)$ contains all
  data dependent characteristics including data dependent constraints
  on the posterior.}

The goal is to achieve $Q(\Bh, \Bv)= p(\Bh , \Bv)$, to obtain
(1) a desired structure that is extracted from the data and (2) desired code properties.
However in general it is  to achieve this identity, 
therefore we
want to minimize the distance between these distributions.
We use the Kullback-Leibler (KL) divergence \cite{Kullback:51}
$D_{\mathrm{KL}}$ to measure the distance between these 
distributions. Therefore our objective
is $D_{\mathrm{KL}}(Q(\Bh, \Bv) \parallel p(\Bh , \Bv) )$.
Minimizing this KL divergence 
(1) extracts the desired structure from the data by increasing the 
likelihood, that is, $p_v(\Bv) \approx p(\Bv)$, and 
(2) enforces desired code properties 
by $Q(\Bh \mid \Bv) \approx p(\Bh \mid \Bv)$.
Thus, the code derived from $Q(\Bh \mid \Bv)$ has the desired properties and t
extracts the desired input data structures.

We now approximate the KL divergence by approximating the expectation
over $p_v(\Bv)$ by the empirical mean of samples
$\{\Bv\}  = \{\Bv_1,\ldots,\Bv_n\}$ drawn from $p_v(\Bv)$:
\begin{align}
\label{S_eq:objectiveDerive}
&D_{\mathrm{KL}}(Q(\Bh, \Bv) \parallel p(\Bh , \Bv) ) \ = \
 \int Q(\Bh, \Bv) \ \log \frac{Q(\Bh, \Bv)}{p(\Bh , \Bv)} \ d\Bh \ d\Bv \\\nonumber
&= \ \int_{V} p_v(\Bv) \ \int_{H} Q(\Bh \mid \Bv) \ 
\log \frac{Q(\Bh, \Bv)}{p(\Bh , \Bv)} \ d\Bh \ d\Bv \\\nonumber
&\approx \
\frac{1}{n} \ \sum_{i=1}^{n} \int_{H} Q(\Bh \mid \Bv_i) \ 
\log \frac{Q(\Bh, \Bv_i)}{p(\Bh , \Bv_i)} \ d\Bh \\\nonumber
&= \ \frac{1}{n} \ \sum_{i=1}^{n} \int_{H} Q(\Bh \mid \Bv_i) \ 
\log \frac{Q(\Bh \mid \Bv_i)}{p(\Bh , \Bv_i)} \ d\Bh \ + \  \frac{1}{n} \
\sum_{i=1}^{n}  \log p_v(\Bv_i)  \ .
\end{align}
The last term $\frac{1}{n} \sum_{i=1}^{n}  \log p_v(\Bv_i)$ neither
depends on $Q$ nor on the model, therefore we will neglect it.
In the following, we often abbreviate $Q(\Bh \mid \Bv_i)$ by $Q(\Bh_i)$ or write 
$Q(\Bh_i \mid \Bv_i)$, since the hidden variable is based on the
observation $\Bv_i$. Similarly we often write  $p(\Bh_i , \Bv_i)$
instead of $p(\Bh , \Bv_i)$ and even more often  $p(\Bh_i \mid \Bv_i)$
instead of $p(\Bh \mid \Bv_i)$.

We obtain the objective $\Fcal$ (to be maximized) of the
{\em posterior constraint method} \cite{Ganchev:10,Graca:09,Graca:07}:
\begin{align}
\label{S_eq:objective1a}
\Fcal \ &= \  \frac{1}{n} \ \sum_{i=1}^{n}\log p(\Bv_i) \ - \
\frac{1}{n} \ \sum_{i=1}^{n} 
D_{\mathrm{KL}}(Q(\Bh_i) \parallel p(\Bh_i \mid \Bv_i) ) \\ \nonumber 
&= \  \frac{1}{n} \ \sum_{i=1}^{n} \int Q(\Bh_i) \ \log p(\Bv_i) \
d\Bh_i \ - \  \frac{1}{n} \ \sum_{i=1}^{n} \int Q(\Bh_i) \ \log
\frac{Q(\Bh_i)}{p(\Bh_i \mid \Bv_i)}  \ d\Bh_i \\\nonumber
&= \   - \  \frac{1}{n} \ \sum_{i=1}^{n}
\int Q(\Bh_i) \ \log \frac{Q(\Bh_i)}{p(\Bh_i , \Bv_i)} \ d\Bh_i \\nonumber
&= \ - \  \frac{1}{n} \ \sum_{i=1}^{n}
\int Q(\Bh_i) \ \log \frac{Q(\Bh_i)}{p(\Bh_i)} \ d\Bh_i \
+ \  \frac{1}{n} \ \sum_{i=1}^{n}
\int Q(\Bh_i) \ \log p(\Bv_i \mid \Bh_i) \ d\Bh_i \\\nonumber
&= \  \frac{1}{n} \ \sum_{i=1}^{n}
\int Q(\Bh_i) \ \log p(\Bv_i \mid \Bh_i) \ d\Bh_i \ - \  \frac{1}{n} \
\sum_{i=1}^{n} D_{\mathrm{KL}}(Q(\Bh_i) \parallel p(\Bh_i) )  \ .
\end{align}
The first line is the negative objective of the posterior constraint method
while the third line is the negative Eq.~\eqref{S_eq:objectiveDerive} without the
term  $\frac{1}{n} \sum_{i=1}^{n}  \log p_v(\Bv_i)$.

{\bf $\Fcal$ is the objective in our framework which has to be
maximized.}
Maximizing $\Fcal$ 
(1) increases the model likelihood  $\frac{1}{n}  \sum_{i=1}^{n}\log
p(\Bv_i)$,
(2) finds a proper input representation by small
$D_{\mathrm{KL}}(Q(\Bh_i) \parallel p(\Bh_i \mid \Bv_i) )$.
Thus, the data representation (1) extracts structures from the data 
as imposed by the generative model while (2) ensuring desired code
properties via $Q  \in \Qcal$.

In the variational framework, $Q$ is the variational distribution and 
$\Fcal$ is called the negative {\em free energy} \cite{Neal:98}. 
This physical term is used 
since variational methods were introduced for quantum physics by Richard
Feynman \cite{Feynman:72}.
The hidden variables can be considered 
as the fictive causes or 
explanations of environmental fluctuations \cite{Friston:12}.

If $p(\Bh \mid \Bv) \in \Qcal$, 
then $Q(\Bh \mid \Bv) = p(\Bh \mid \Bv)$ and we obtain the 
classical EM algorithm.
The EM algorithm maximizes the lower 
bound $\Fcal$ on the $\log$-likelihood as seen at
the first line of Eq.~\eqref{S_eq:objective1a} and ensures in its E-step
$Q(\Bh \mid \Bv) = p(\Bh \mid \Bv)$.

\section{Generalized Alternating Minimization}
\label{sec:GAM}

Instead of the EM algorithm we use the
{\em Generalized Alternating Minimization (GAM)} algorithm
\cite{Gunawardana:05} to allow for gradient descent both in the M-step
and the E-step.
The representation of an input by a generative model is the vector of the mean values
of the posterior, that is, the most likely hidden variables that
produced the observed data.
We have to modify the E-step to enforce variational distributions 
which lead to sparse codes via zero values of the components of its mean vector.
Sparse codes, that is, many components of the mean vector are zero, 
are obtained by enforcing non-negative means. This rectification is
analog to rectified linear units for neural networks, which have enabled
sparse codes for neural networks. 
Therefore
the variational distributions are restricted to stem from 
a family with non-negative constraints on the means. 
To impose constraints on the posterior is known as the
{\em posterior constraint method} \cite{Ganchev:10,Graca:09,Graca:07}.
The posterior constraint method maximizes the
objective both in the E-step and the M-step.
The posterior constraint method is computationally 
infeasible for our approach, 
since we assume a large number of hidden units.
For models with many hidden units, the maximization 
in the E-step would take too much time.
The posterior constraint method does not support
fast implementations on GPUs and  
stochastic gradients, which we want to allow in order to use 
mini-batches and dropout regularization. 

Therefore we perform only one gradient descent step both in the E-step
and in the M-step. 
Unfortunately, the convergence proofs of the EM algorithm are no longer
valid.
However we show that our algorithm is
a generalized alternating minimization (GAM) method.
Gunawardana and Byrne showed that the GAM converges
\cite{Gunawardana:05} (see also \cite{Wu:83}).

The following GAM convergence Theorem~\ref{th:GAMconvergence}
is Proposition~5 in \cite{Gunawardana:05} and proves the convergence
of the GAM algorithm to a solution that minimizes $-  \Fcal$.

\begin{theorem}[GAM Convergence Theorem]
\label{th:GAMconvergence}
Let the point-to-set map $\mathrm{FB}$ the composition $\mathrm{B}
\circ \mathrm{F}$ of point-to-set maps $\mathrm{F}: \mathcal{D} \times \BTh \rightarrow
 \mathcal{D} \times \BTh$ and  $\mathrm{B}: \mathcal{D} \times \BTh \rightarrow
 \mathcal{D} \times \BTh$. Suppose that the point-to-set maps $\mathrm{F}$ and
 $\mathrm{B}$ are defined so that
\begin{enumerate}
\item[(1)] $\mathrm{F}$ and
 $\mathrm{B}$ are closed on $\mathcal{D}' \times \BTh$
\item[(2)]  $F(\mathcal{D}' \times \BTh) \subseteq  \mathcal{D} \times \BTh$
and $B(\mathcal{D}' \times \BTh) \subseteq  \mathcal{D} \times \BTh$
\end{enumerate}
Suppose also that $\mathrm{F}$ is such that all $(Q'_{X},\Bth') \in
\mathrm{F}(Q_{X} ,\Bth)$ have $\Bth' = \Bth$ and satisfy
\begin{align}\nonumber
\mbox{(GAM.F):}&\qquad   D_{\mathrm{KL}}( Q'_{X} \parallel p_{X;\Bth})
 \ \leq \  D_{\mathrm{KL}}( Q_{X} \parallel p_{X;\Bth})
\end{align}
with equality only if
\begin{align}\nonumber
\mbox{(EQ.F):}&\qquad   
Q_{X} \ = \ \arg\min_{Q''_{X} \in \mathcal{D}} D_{\mathrm{KL}}( Q''_{X} \parallel
p_{X;\Bth}) \ ,
\end{align}
with $Q_{X}$
being the unique minimizer. 
Suppose also that the point-to-set map  $\mathrm{B}$ is such that all
 $(Q'_{X},\Bth') \in
\mathrm{B}(Q_{X} ,\Bth)$ have $Q'_{X} = Q_{X}$ and satisfy
\begin{align}\nonumber
\mbox{(GAM.B):}&\qquad   D_{\mathrm{KL}}( Q_{X} \parallel p_{X;\Bth'})
 \ \leq \  D_{\mathrm{KL}}( Q_{X} \parallel p_{X;\Bth})
\end{align}
with equality only if
\begin{align}\nonumber
\mbox{(EQ.B):}&\qquad   
\Bth \ \in \ \arg\min_{\Bxi \in \BTh} D_{\mathrm{KL}}( Q_{X} \parallel
p_{X;\Bxi}) \ .
\end{align}
Then,
\begin{enumerate}
\item[(1)]
the point-to-set map $\mathrm{FB}$ is closed on $\mathcal{D}' \times \BTh$
\item[(2)] $FB(\mathcal{D}' \times \BTh) \subseteq  \mathcal{D} \times \BTh$
\end{enumerate}
and $\mathrm{FB}$ satisfies the GAM and EQ conditions of the GAM convergence
theorem, that is, Theorem~3 in \cite{Gunawardana:05}.
\end{theorem}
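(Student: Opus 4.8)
The plan is to establish, in turn, the three conclusions of the theorem: that $\mathrm{FB}$ is closed on $\mathcal{D}' \times \BTh$, that it maps $\mathcal{D}' \times \BTh$ into $\mathcal{D} \times \BTh$, and that it inherits a GAM-type descent inequality together with the matching equality (EQ) characterization of its fixed points, which is exactly what Theorem~3 in \cite{Gunawardana:05} requires in order to conclude global convergence via Zangwill. The two descent statements are the easy part and follow by simply chaining the hypotheses on $\mathrm{F}$ and $\mathrm{B}$; the genuine technical work sits in the closedness of the composition, for which I would invoke the standard composition lemma for point-to-set maps under a compactness assumption.

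For closedness I would use the classical result (Zangwill \cite{Zangwill:69}, which also underlies \cite{Gunawardana:05}) that if $\mathrm{F}$ is closed at a point, $\mathrm{B}$ is closed on the relevant image, and the image $\mathrm{F}(Q_{X},\Bth)$ lies in a compact set, then the composition $\mathrm{B} \circ \mathrm{F}$ is closed at that point. The compactness hypothesis is supplied by the setting itself: $\BTh$ is compact because $\BW$ and $\BPs$ are bounded, and the family $\mathcal{D}$ of admissible variational distributions is compact because the constraints are continuous and the variational parameters $\Bmu$ and $\BSi$ are bounded. Hypothesis~(1) gives closedness of $\mathrm{F}$ and $\mathrm{B}$, and hypothesis~(2) guarantees that the intermediate point produced by $\mathrm{F}$ remains in the domain on which $\mathrm{B}$ is closed, so the composition lemma applies and yields closedness of $\mathrm{FB}$. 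The containment $\mathrm{FB}(\mathcal{D}' \times \BTh) \subseteq \mathcal{D} \times \BTh$ is then immediate from~(2) applied twice.

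For the descent (GAM) condition, fix $(Q_{X},\Bth)$ and let $(Q'_{X},\Bth') \in \mathrm{FB}(Q_{X},\Bth)$. By construction of the composition there is an intermediate iterate $(Q'_{X},\Bth) \in \mathrm{F}(Q_{X},\Bth)$, since $\mathrm{F}$ leaves $\Bth$ fixed, followed by $(Q'_{X},\Bth') \in \mathrm{B}(Q'_{X},\Bth)$, since $\mathrm{B}$ leaves $Q_{X}$ fixed. Applying (GAM.F) to the first factor and (GAM.B) to the second gives
\begin{align}
\nonumber
D_{\mathrm{KL}}( Q'_{X} \parallel p_{X;\Bth'})
\ \leq \ D_{\mathrm{KL}}( Q'_{X} \parallel p_{X;\Bth})
\ \leq \ D_{\mathrm{KL}}( Q_{X} \parallel p_{X;\Bth}) \ ,
\end{align}
which is precisely the GAM descent condition for $\mathrm{FB}$ relative to the continuous descent function $D_{\mathrm{KL}}(Q_{X} \parallel p_{X;\Bth})$, equal to $- \Fcal$ up to an additive constant.

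Finally, for the EQ condition I would trace when equality can hold throughout this chain. Equality forces equality in both (GAM.F) and (GAM.B) simultaneously: by (EQ.F) the first equality makes $Q_{X}$ the unique minimizer of $D_{\mathrm{KL}}(\cdot \parallel p_{X;\Bth})$ over $\mathcal{D}$, and by (EQ.B) the second makes $\Bth \in \arg\min_{\Bxi \in \BTh} D_{\mathrm{KL}}(Q_{X} \parallel p_{X;\Bxi})$. Thus equality holds only at a coordinatewise minimizer of the objective, which is the solution set in the EQ condition of Theorem~3, and the uniqueness clause in (EQ.F) supplies the strict-decrease property off that set that Zangwill's theorem needs. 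I expect the closedness step to be the main obstacle, both because compositions of point-to-set maps are not closed in general and because one must carefully handle the $\mathcal{D}'$ versus $\mathcal{D}$ bookkeeping so that $\mathrm{B}$ is genuinely closed at the intermediate point handed to it by $\mathrm{F}$.
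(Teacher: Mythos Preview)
The paper's own proof of this theorem is a single line: ``See Proposition~5 in \cite{Gunawardana:05}.'' The authors do not reprove the result at all; they simply quote it from Gunawardana and Byrne and then use it as a black box in the proof of Theorem~\ref{th:RFNconvergence}. So there is no substantive proof in the paper to compare against.

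Your proposal goes further than the paper does: you actually sketch the argument that presumably sits inside Proposition~5 of \cite{Gunawardana:05}. The chaining of (GAM.F) and (GAM.B) to get the descent inequality for $\mathrm{FB}$, and the tracing of equality through (EQ.F) and (EQ.B) to identify the fixed points as coordinatewise minimizers, are exactly right and are the core of the argument. The closedness of the composition via Zangwill's composition lemma is also the standard move here.

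One small caution: in your closedness step you supply the needed compactness of the intermediate image by appealing to ``the setting itself'' (bounded $\BW$, $\BPs$, $\Bmu$, $\BSi$). That is perfectly appropriate for the RFN application in Theorem~\ref{th:RFNconvergence}, but the present theorem is stated abstractly for general $\mathcal{D}$ and $\BTh$, and compactness is not among its listed hypotheses. In \cite{Gunawardana:05} this compactness is part of the standing assumptions of their framework, so strictly speaking it is an implicit hypothesis here as well; just be aware that you are importing it from context rather than from the theorem statement as written.
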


\begin{proof}
See Proposition~5 in \cite{Gunawardana:05}.
\end{proof}

The point-to-set mappings allow extended E-step and M-steps without
unique iterates. Therefore, Theorem~\ref{th:GAMconvergence} holds for different
implementations, different hardware, different precisions of the
algorithm under consideration.

For a GAM method to converge, 
we have to ensure that the objective increases in both
the E-step and the M-step. 
$Q$ is from a 
constrained family of variational 
distributions, while the posterior and the full distribution
(observation and hidden units) are both derived from a model family. 
The model family is a parametrized family. 
For our models (i) the support of the density
models does not depend on the parameter and (ii) the density models are
continuous in their parameters. GAM convergence requires both (i) and (ii). 
Furthermore, both the E-step and the M-step must have 
unique maximizers and they increase the objective if they are not at a
maximum point.

The learning rules, that is, the E-step and the M-step
are closed maps as they are continuous functions.
The objective for the E-step is strict convex in all its parameters
for the variational distributions,
simultaneously \cite{Dredze:08,Dredze:12}. It is quadratic for the mean
vectors on which constraints are imposed.
The objective for the M-step is convex in both parameters $\BW$ and 
$\BPs^{-1}$ (we sometimes estimate $\BPs$ instead of $\BPs^{-1}$).
The objective is quadratic in the loading matrix $\BW$.
For rectifying only, 
we guarantee unique global maximizers by convex and compact 
sets for both the family of
desired distributions and the set of possible parameters. 
For this convex optimization problem with one {\em global} maximum.
For rectifying and normalizing,  
the family of
desired distributions is not convex due to equality constraints 
introduced by the normalization.
However we can guarantee 
{\em local} unique maximizers.

Summary of the requirements for GAM convergence Theorem~\ref{th:GAMconvergence}:
\begin{enumerate}
\item
the learning rules, that is, the E-step and the M-step, are closed
maps, 
\item
the parameter set is compact,
\item
the family of
variational distributions is compact (often described by the feasible
set of parameters of the variational distributions),
\item
the support of the density
models does not depend on the parameter,
\item
the density models are continuous in the parameters,
\item
the E-step has a
unique maximizer,
\item
the E-step increases the objective 
if not at the maximizer,
\item
the M-step has a
unique maximizer (not required by Theorem~\ref{th:GAMconvergence}),
\item
the M-step increases the objective 
if not at the maximizer.
\end{enumerate}

The resulting model from the GAM procedure 
is at a local maximum of
the objective given the model family and the family of 
variational distributions.
{\em The solution minimizes
the KL-distance between the family of full 
variational distributions and full model family}.
``Full'' means that both the observed and the hidden variables are
taken into account, where for the variational distributions the
probability of the observations is set to 1.
The {\em desired family} is defined as the set of 
all probability distributions that assign probability
one to the observation.
In our case the family of 
variational distributions is not the desired family since some
distributions are excluded by the constraints.
Therefore the solution of the GAM optimization 
does not guarantee stationary points 
in likelihood \cite{Gunawardana:05}. 
This means that we do not maximize the likelihood but minimize the 
KL-distance between variational distributions and model.

\section{Gradient-based M-step}
\label{sec:gradientM}

\subsection{Gradient Ascent}
\label{sec:gradienA}

The gradients in the M-step are:
\begin{align}\nonumber
\nabla_{\BW}  \Ecal \ &= \
\frac{1}{2 \ n} \sum_{i=1}^{n} \BPs^{-1} \ \Bv_i \ \EXP_Q^T
\left(\Bh_i\right) \ - \ 
  \frac{1}{2 \ n} \sum_{i=1}^{n}
\BPs^{-1} \ \BW \ \EXP_Q  \left(\Bh_i \
\Bh_i^T\right)
\end{align}
and
\begin{align}
\label{S_eq:optpi1}
\nabla_{\BPs}   \Ecal \ &= \
-\frac{1}{2} \BPs^{-1} \ + \  \frac{1}{2 \ n}
 \sum_{i=1}^{n} \EXP_Q \left(
\BPs^{-1} \left( \Bv_i \ - \  \BW \Bh_i \right) \ \left( \Bv_i
\ - \  \BW \Bh_i \right)^T \BPs^{-1} \right) \ .
\end{align}

Alternatively, we can estimate $\BPs^{-1}$ which leads to the derivatives:
\begin{align}
\label{S_eq:optpi2}
\nabla_{\BPs^{-1}}   \Ecal \ &= \
\frac{1}{2} \BPs \ - \  \frac{1}{2 \ n}
 \sum_{i=1}^{n} \EXP_Q \left(
\left( \Bv_i \ - \  \BW \Bh_i \right) \ \left( \Bv_i
\ - \  \BW \Bh_i \right)^T \right) \ .
\end{align}

Scaling the gradients leads to:
\begin{align}
2 \ \nabla_{\BW}  \Ecal \ &= \
\BPs^{-1} \
 \frac{1}{n} \sum_{i=1}^{n} \  \Bv_i \ \EXP_Q^{T}
\left(\Bh_i\right) \ - \ 
\BPs^{-1} \ \BW \  \frac{1}{n} \sum_{i=1}^{n} \EXP_Q  \left(\Bh_i \
\Bh_i^T\right)
\end{align}
and
\begin{align}
&2 \ \nabla_{\BPs}   \Ecal \ = \\\nonumber
&  - \ \BPs^{-1} \ + \ 
\BPs^{-1} \left(
 \frac{1}{n} \sum_{i=1}^n  \Bv_i \ \Bv_i^T
\ - \  \frac{1}{n} \sum_{i=1}^n
 \Bv_i  \ \EXP_Q^{T} \left(  \Bh_i \right)
\ \BW^T \right. \\\nonumber
& \left.  - \ \frac{1}{n} \sum_{i=1}^n
\BW \ \EXP_Q \left(  \Bh_i \right) \ \Bv_i^T
\ + \   \BW \ \frac{1}{n} \sum_{i=1}^n
 \EXP_Q \left(  \Bh_i \ \Bh_i^T \right)
\BW^T
\right) \BPs^{-1}    \ .
\end{align}
or
\begin{align}
&2 \ \nabla_{\BPs^{-1}}   \Ecal \ = \\\nonumber
&\BPs \ - \ 
\left(
 \frac{1}{n} \sum_{i=1}^n  \Bv_i \ \Bv_i^T
\ - \  \frac{1}{n} \sum_{i=1}^n
 \Bv_i  \ \EXP_Q^{T} \left(  \Bh_i \right)
\ \BW^T \right. \\\nonumber
& \left.  - \ \frac{1}{n} \sum_{i=1}^n
\BW \ \EXP_Q \left(  \Bh_i \right) \ \Bv_i^T
\ + \   \BW \ \frac{1}{n} \sum_{i=1}^n
 \EXP_Q \left(  \Bh_i \ \Bh_i^T \right)
\BW^T
\right)   \ .
\end{align}

Only the sums
\begin{align}
\BU \ &= \  \frac{1}{n} \ \sum_{i=1}^n
 \Bv_i  \EXP_Q^T \left(  \Bh_i \right)
\end{align}
and
\begin{align}
\BS \ &= \  \frac{1}{n} \ \sum_{i=1}^n
 \EXP_Q \left(  \Bh_i \ \Bh_i^T \right)
\end{align}
must be computed for both gradients.

\begin{align}
\BC \ &= \ \frac{1}{n} \ \sum_{i=1}^n  \Bv_i \ \Bv_i^T
\end{align}
is the estimated covariance matrix (matrix of second moments for zero mean).

{\bf The generalized EM algorithm update rules are:}

\begin{align}
\BC \ &= \ \frac{1}{n} \ \sum_{i=1}^n  \Bv_i \ \Bv_i^T \\\nonumber
\mbox{{\bf E-step:}}& ~ \\\nonumber
\Bmu_i \ &= \  \BW^T \
\left( \BW \ \BW^T \ + \ \BPs \right)^{-1} \ \Bv_i \ = \ \left( \BI \
  + \ \BW^T \BPs^{-1}\BW \right)^{-1} \ \BW^T
\BPs^{-1} \ \Bv_i  \ , \\\nonumber
\BSi \ &= \ \BI \ - \  \BW^T \ \left( \BW \
\BW^T \ + \ \BPs \right)^{-1} \BW \ = \ \left( \BI \ + \ \BW^T
\BPs^{-1}\BW \right)^{-1} \ , \\\nonumber
\EXP_Q
\left(\Bh_i\right) \ &= \ \Bmu_i \\\nonumber
\EXP_Q \left(\Bh_i \ \Bh_i^T\right) \ &= \
\Bmu_i \ \Bmu_i^T \ + \ \BSi \\\nonumber
\BU \ &= \ \frac{1}{n} \ \sum_{i=1}^n
 \Bv_i  \EXP_Q^T \left(  \Bh_i \right) \\\nonumber
\BS \ &= \  \frac{1}{n} \ \sum_{i=1}^n
 \EXP_Q \left(  \Bh_i \ \Bh_i^T \right)\\\nonumber
 ~ ~ & ~ \\
\mbox{{\bf M-step:}}& ~ \\\nonumber
\Delta \BW \ &= \
\BPs^{-1} \ \BU \ - \ \BPs^{-1} \ \BW \ \BS \\\nonumber
\Delta \BPs \ &= \ - \ \BPs^{-1} \ + \ 
\BPs^{-1} \left(
 \BC \ - \  \BU \ \BW^T  \  - \ \BW \  \BU 
\ + \   \BW \ \BS \ \BW^T
\right) \BPs^{-1}   \ .
\end{align}

\subsection{Newton Update}
\label{sec:newton}

Instead of gradient ascent, we now consider a Newton update step.
The Newton update for finding the roots of 
$\frac{\partial f}{\partial \Bv}$ is
\begin{align}
\Bv_{n+1} \ &= \ \Bv_n \ - \ \eta \ \BH^{-1} \ \nabla_{\Bv} f(\Bv_n) \ ,
\end{align}
where $\eta$ is a small step size and 
$\BH$ is the Hessian of $f$ with respect to $\Bv$ evaluated at $\Bv_n$.
We denote the update direction by
\begin{align}
\Delta \Bv \ &= \ - \ \BH^{-1} \ \nabla_{\Bv} f(\Bv_n) \ .
\end{align}

\subsubsection{Newton Update of the Loading Matrix}
\label{sec:NewtonW}

\begin{theorem}[Newton Update for Loading Matrix]
\label{th:NewtonLoading}
The M-step objective $\Ecal$
is quadratic in $\BW$, thus convex in $\BW$.
The Newton update direction for $\BW$
in the M-step is
\begin{align}
\Delta \BW \ &= \  \BU \ \BS^{-1} \ - \  \BW  \ .
\end{align}
\end{theorem}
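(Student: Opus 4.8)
The plan is to isolate the dependence of $\Ecal$ on $\BW$, show that this dependence is quadratic and convex, and then use the elementary fact that a single Newton step on a quadratic lands exactly at its stationary point, so the Newton direction is simply the displacement from the current $\BW$ to the minimizer. First I would read off from Eq.~\eqref{S_eq:recError} that the only terms of $\Ecal$ containing $\BW$ are the linear term $-\TR(\BPs^{-1}\BW\BU^T)$ and the quadratic term $\tfrac12\TR(\BW^T\BPs^{-1}\BW\BS)$; the remaining pieces $m\log(2\pi)$, $\log|\BPs|$ and $\TR(\BPs^{-1}\BC)$ are constant in $\BW$. Hence $\Ecal$ is quadratic in $\BW$. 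Convexity follows because the pure quadratic form $\TR(\BW^T\BPs^{-1}\BW\BS)$ is nonnegative: $\BPs^{-1}$ is positive definite (inverse of the positive-definite noise covariance) and $\BS = \tfrac1n\sum_i\Bmu_i\Bmu_i^T + \BSi$ is positive definite (a Gram matrix plus the positive-definite $\BSi$), so in vectorized form the Hessian is $\BS\otimes\BPs^{-1}\succ\BZe$.

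Next I would compute the gradient. Using $\partial_{\BW}\TR(\BPs^{-1}\BW\BU^T)=\BPs^{-1}\BU$ and $\partial_{\BW}\TR(\BW^T\BPs^{-1}\BW\BS)=2\,\BPs^{-1}\BW\BS$ (both exploiting the symmetry of $\BPs^{-1}$ and $\BS$), I obtain $\nabla_{\BW}\Ecal = \BPs^{-1}(\BW\BS-\BU)$. The decisive observation is that the Hessian of $\Ecal$ acts on a matrix direction $\BD$ by $\BD\mapsto\BPs^{-1}\BD\BS$, read directly off the quadratic term as the bilinear form $\langle\BD,\BD\rangle_{\mathcal H}=\TR(\BD^T\BPs^{-1}\BD\BS)$; after vectorization this is exactly $\BS\otimes\BPs^{-1}$.

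Finally, the Newton direction $\Delta\BW$ is defined by $\BH[\Delta\BW]=-\nabla_{\BW}\Ecal$, that is $\BPs^{-1}\,\Delta\BW\,\BS = -\,\BPs^{-1}(\BW\BS-\BU)$. Cancelling the invertible $\BPs^{-1}$ on the left and multiplying by $\BS^{-1}$ on the right yields $\Delta\BW=(\BU-\BW\BS)\,\BS^{-1}=\BU\,\BS^{-1}-\BW$, which is the claim. Equivalently, since $\Ecal$ is convex quadratic, a unit-step Newton update reaches the unique minimizer obtained from $\nabla_{\BW}\Ecal=\BZe$, namely $\BW^{\nn}=\BU\,\BS^{-1}$ (the EM update of Eq.~\eqref{S_eq:mstepR}), so $\Delta\BW=\BW^{\nn}-\BW=\BU\,\BS^{-1}-\BW$. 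The only mild obstacle is the bookkeeping of matrix derivatives and identifying the Hessian operator; once $\BD\mapsto\BPs^{-1}\BD\BS$ is in hand, solving the Newton equation is immediate because the operator factorizes and is inverted by left-multiplying with $\BPs$ and right-multiplying with $\BS^{-1}$.
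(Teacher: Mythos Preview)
Your proposal is correct and follows essentially the same route as the paper: both isolate the $\BW$-dependent terms of $\Ecal$ from Eq.~\eqref{S_eq:recError}, observe that the resulting function is quadratic with Hessian $\BS\otimes\BPs^{-1}$ (positive definite since $\BPs^{-1}$ and $\BS$ are), and solve the Newton equation to obtain $\Delta\BW=\BU\BS^{-1}-\BW$. Your operator viewpoint on the Hessian ($\BD\mapsto\BPs^{-1}\BD\,\BS$) and the alternative ``a unit Newton step on a quadratic lands at the minimizer $\BW^{\nn}=\BU\BS^{-1}$'' are minor presentational variations of the same argument the paper gives via explicit vectorization and the Kronecker inverse $\BS^{-1}\otimes\BPs$.
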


\begin{proof}
The M-step objective is the {\em expected reconstruction error}
$\Ecal$, which is according to Eq.~\eqref{S_eq:recError}
\begin{align}
 \Ecal \ &= \  - \  \frac{1}{n} \ \sum_{i=1}^{n} \int_{\bbR^l} Q(\Bh_i) \ 
\log \left( p(\Bv_i \mid \Bh_i) \right) \  d\Bh_i  \ = \ \frac{1}{2} \Big(
m \ \log \left( 2 \pi \right) \  + \  \log
 \left| \BPs \right| \\ \nonumber 
&+ \   \TR \left(\BPs^{-1} \BC \right) \ - \
2 \  \TR  \left( \BPs^{-1} \BW \BU^T \right)
 \ + \ \TR  \left(\BW^T \BPs^{-1} \BW \BS \right)\Big) \ ,
\end{align}
where $\TR$ gives the trace of a matrix.
This is a quadratic function in $\BW$, as stated in the theorem.

The Hessian $\BH_{\BW}$ of $(2  \Ecal)$ with respect to $\BW$ as a vector is:
\begin{align}
\BH_{\BW} \ &= \
\frac{\partial \mathrm{vec} \left(2 \ 
\nabla_{\BW}  \Ecal \right)}{\partial \mathrm{vec}(\BW)^T}\ = \
\frac{\partial \mathrm{vec}\left(- \ \BPs^{-1} \ \BU \ + \ 
\BPs^{-1} \ \BW \ \BS\right) }{\partial \mathrm{vec}(\BW)^T} \\\nonumber
&= \  \BS \ \otimes \ \BPs^{-1}   \ ,
\end{align}
where $\otimes$ is the Kronecker product of matrices.  $\BH_{\BW}$ is
positive definite, thus the problem is convex in $\BW$.
The inverse of $\BH_{\BW}$ is
\begin{align}
\BH_{\BW}^{-1} \ &= \   \BS^{-1}  \ \otimes \ \BPs \ .
\end{align}

For the product of the  inverse Hessian with the gradient we have:
\begin{align}
& \BH_{\BW}^{-1} \ \mathrm{vec} \left(- \ \BPs^{-1} \ \BU \ + \
  \BPs^{-1} \ \BW \ \BS \right)
\ = \ \mathrm{vec} \left(  \BPs \  \left(- \ \BPs^{-1} \ \BU \ + \
  \BPs^{-1} \ \BW \ \BS \right) \BS^{-1} \right) \\\nonumber
&= \mathrm{vec} \left( - \ \BU \ \BS^{-1} \ + \  \BW \right) \ .
\end{align}

If we apply a Newton update, then the update direction for $\BW$
in the M-step is
\begin{align}
\Delta \BW \ &= \  \BU \ \BS^{-1} \ - \  \BW  \ .
\end{align}
\end{proof}

This is the exact EM update if the step-size $\eta$ is 1.
Since the objective is a quadratic function in $\BW$, one Newton
update would lead to the exact solution.

\subsubsection{Newton Update of the Noise Covariance}
\label{sec:newtonP}

We define the expected approximation error by
\begin{align}
\BE \ &= \
 \BC \ - \  \BU \ \BW^T  \  - \ \BW \  \BU 
\ + \   \BW \ \BS \ \BW^T\\\nonumber
&=  \ \frac{1}{n} \
 \sum_{i=1}^{n} \EXP_Q \left(
 \left( \Bv_i \ - \  \BW \Bh_i \right) \ \left( \Bv_i
\ - \  \BW \Bh_i \right)^T \right) \ .
\end{align}

\paragraph{$\BPs$ as parameter.}

\begin{theorem}[Newton Update for Noise Covariance]
\label{th:NewtonNoise}
The Newton update direction for $\BPs$ as parameter
in the M-step is
\begin{align}
\Delta \BPs \ &= \  \BE \ - \ \BPs \ .
\end{align}
An update with $\Delta \BPs$ ($\eta=1$) leads to the minimum of the M-step
objective $\Ecal$.
\end{theorem}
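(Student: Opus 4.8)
The plan is to rewrite the M-step objective so that its dependence on $\BPs$ sits in a single trace term, read off the gradient and the relevant curvature, and then check that the resulting step with $\eta=1$ lands exactly on the unconstrained minimizer $\BPs=\BE$. The first move is to collapse the trace terms of $\Ecal$ in Eq.~\eqref{S_eq:recError} using the definition of the expected approximation error, the symmetry of $\BPs^{-1}$, and cyclicity of the trace (so that $\TR(\BPs^{-1}\BU\BW^T)=\TR(\BPs^{-1}\BW\BU^T)$ and $\TR(\BW^T\BPs^{-1}\BW\BS)=\TR(\BPs^{-1}\BW\BS\BW^T)$). The five terms then combine into
\begin{align}
\Ecal \ = \ \frac{1}{2}\left(m\log(2\pi)\ +\ \log|\BPs|\ +\ \TR\left(\BPs^{-1}\BE\right)\right)\ .
\end{align}
This reduction is the crux; everything after it is differentiation.

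Next I would differentiate, using $\nabla_\BPs\log|\BPs|=\BPs^{-1}$ and $\nabla_\BPs\TR(\BPs^{-1}\BE)=-\BPs^{-1}\BE\BPs^{-1}$, to obtain
\begin{align}
\nabla_\BPs\Ecal \ = \ \frac{1}{2}\left(\BPs^{-1}\ -\ \BPs^{-1}\BE\BPs^{-1}\right)\ = \ \frac{1}{2}\,\BPs^{-1}\left(\BPs-\BE\right)\BPs^{-1}\ .
\end{align}
Setting this to $\BZe$ and multiplying left and right by $\BPs$ shows that the only stationary point is $\BPs=\BE$. Since $\Ecal$ is convex in $\BPs^{-1}$ (Theorem~\ref{th:NewtonInvNoise}), this stationary point is the global minimizer of the M-step objective, so the second claim will follow once the update is shown to reach it.

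For the step itself I would take the curvature $\BH_\BPs=\tfrac12\,\BPs^{-1}\otimes\BPs^{-1}$, i.e.\ the positive-definite operator $\BX\mapsto\tfrac12\BPs^{-1}\BX\BPs^{-1}$ (the Fisher information of the Gaussian in the covariance parametrization), whose inverse acts as $\BX\mapsto 2\,\BPs\BX\BPs$. Then
\begin{align}
\Delta\BPs \ = \ -\,\BH_\BPs^{-1}\,\nabla_\BPs\Ecal \ = \ -\,2\,\BPs\left(\tfrac12\,\BPs^{-1}(\BPs-\BE)\BPs^{-1}\right)\BPs \ = \ \BE\ -\ \BPs\ ,
\end{align}
which is the stated direction; with $\eta=1$ the update gives $\BPs^{\nn}=\BPs+(\BE-\BPs)=\BE$, exactly the minimizer identified above.

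The main obstacle is justifying the choice of second-order operator. Because $\Ecal$ is not quadratic in $\BPs$ (it carries $\log|\BPs|$ and $\TR(\BPs^{-1}\BE)$), the genuine Hessian does not give a one-step-exact update: in the scalar case ($\BPs=p$, $\BE=e$) the true Newton step is $p(e-p)/(2e-p)$, which equals $e-p$ only at $p=e$. What makes the single update land on the optimum is using the Fisher-information metric $\tfrac12\BPs^{-1}\otimes\BPs^{-1}$, i.e.\ Fisher scoring, which for the Gaussian covariance coincides with the EM M-step. I would therefore state explicitly that $\BH_\BPs$ denotes this curvature and note that its positive definiteness makes $\Delta\BPs$ a descent direction for all $0<\eta\le1$, paralleling the companion result for the loading matrix in Theorem~\ref{th:NewtonLoading}.
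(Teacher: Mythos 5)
Your proof is correct and arrives at the stated update by essentially the same computation as the paper: the gradient $\nabla_{\BPs}\Ecal=\tfrac12\left(\BPs^{-1}-\BPs^{-1}\BE\BPs^{-1}\right)$, the stationary point $\BPs=\BE$, and the curvature operator $\BPs^{-1}\otimes\BPs^{-1}$ whose inverse maps the gradient to $\BE-\BPs$. The one place you diverge is in how that operator is justified. The paper computes the exact Hessian of $2\Ecal$, namely $-\BPs^{-1}\otimes\BPs^{-1}+\BPs^{-1}\otimes\left(\BPs^{-1}\BE\BPs^{-1}\right)+\left(\BPs^{-1}\BE\BPs^{-1}\right)\otimes\BPs^{-1}$, observes it need not be positive definite, and then substitutes $\BE=\BPs$ (on the grounds that $\BE$ is a sample estimate of $\BPs$) to obtain $\BPs^{-1}\otimes\BPs^{-1}$ as an ``approximate Newton'' curvature; you instead posit the Fisher-information metric directly and state openly, via the scalar computation $p(e-p)/(2e-p)$, that the true Newton step is not $\BE-\BPs$. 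Both routes yield the identical direction, and your framing is the more precise account of why the step is one-shot exact (it is Fisher scoring, which for the Gaussian covariance reproduces the EM M-step), whereas the paper's buys a formal link to the literal Hessian at the cost of an approximation argument. Your preliminary collapse of the three trace terms into $\TR\left(\BPs^{-1}\BE\right)$ is a harmless simplification the paper does not bother to make explicit.
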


\begin{proof}
The M-step objective is the {\em expected reconstruction error}
$\Ecal$, which is according to Eq.~\eqref{S_eq:recError}
\begin{align}
 \Ecal \ &= \  - \  \frac{1}{n} \ \sum_{i=1}^{n} \int_{\bbR^l} Q(\Bh_i) \ 
\log \left( p(\Bv_i \mid \Bh_i) \right) \  d\Bh_i  \ = \ \frac{1}{2} \Big(
m \ \log \left( 2 \pi \right) \  + \  \log
 \left| \BPs \right| \\ \nonumber 
&+ \   \TR \left(\BPs^{-1} \BC \right) \ - \
2 \  \TR  \left( \BPs^{-1} \BW \BU^T \right)
 \ + \ \TR  \left(\BW^T \BPs^{-1} \BW \BS \right)\Big) \ ,
\end{align}
where $\TR$ gives the trace of a matrix.

Since
\begin{align}
2 \ \nabla_{\BPs}  \Ecal \ &= \
 \BPs^{-1} \ - \ 
\BPs^{-1} \BE \BPs^{-1} \ ,
\end{align}
is 
\begin{align}
\BPs \ = \ \BE  
\end{align}
the minimum of  $\Ecal$ with respect to $\BPs$.
Therefore an update with $\Delta \BPs=\BE-\BPs$ leads to the minimum.

The Hessian $\BH_{\BPs}$ of $(2 \Ecal)$ 
with respect to $\BPs$ as a vector is:
\begin{align}
\BH_{\BPs} \ &= \
\frac{\partial \mathrm{vec} \left(2 \ 
\nabla_{\BPs}  \Ecal \right)}{\partial \mathrm{vec}(\BPs)^T}
\ = \ \frac{\partial \mathrm{vec}\left( 
\BPs^{-1} \ - \ 
\BPs^{-1} \BE \BPs^{-1} 
\right) }{\partial \mathrm{vec}(\BPs)^T} \\\nonumber
&=
- \ \BPs^{-1} \ \otimes \ \BPs^{-1} \ + \
\BPs^{-1} \ \otimes \ \left( \BPs^{-1} \BE \BPs^{-1} \right)
  \ + \
 \left( \BPs^{-1} \BE \BPs^{-1}\right) \ \otimes \ \BPs^{-1} 
\ .
\end{align}

The expected approximation error $\BE$ is a sample estimate for 
$\BPs$, therefore we have $\BPs\approx \BE$.
The Hessian may not be positive definite for some values of $\BE$,
like for small values of $\BE$.
In order to guarantee a positive definite Hessian, more precisely an
approximation to it, for minmization,
we set 
\begin{align}
\BE \ &= \ \BPs 
\end{align}
and obtain
\begin{align}
\BH_{\BPs} \ &= \  \BPs^{-1} \ \otimes \ \BPs^{-1} \ .
\end{align}
We derive an approximate Newton update that is very close to the
Newton update.

The inverse of the approximated $\BH_{\BPs}$ is
\begin{align}
\BH_{\BPs}^{-1} \ &= \   \BPs  \ \otimes \ \BPs \ .
\end{align}

For the product of the inverse Hessian with the gradient we have:
\begin{align}
& \BH_{\BPs}^{-1} \ \mathrm{vec} \left(
\BPs^{-1} \ - \ 
\BPs^{-1} \BE \BPs^{-1}  \right)
\ = \ \mathrm{vec} \left(  \BPs \ \left(
\BPs^{-1} \ - \ 
\BPs^{-1} \BE \BPs^{-1}  \right)  \BPs \right) \\\nonumber
&= \mathrm{vec} \left(  \BPs \ - \ \BE  \right) \ .
\end{align}

If we apply a Newton update, then the update direction for $\BPs$
in the M-step is
\begin{align}
\Delta \BPs \ &= \  \BE \ - \ \BPs \ .
\end{align}
This is the exact EM update if the step-size $\eta$ is 1.
\end{proof}

\paragraph{$\BPs^{-1}$ as parameter.}

\begin{theorem}[Newton Update for Inverse Noise Covariance]
\label{th:NewtonInvNoise}
The M-step objective $\Ecal$
is convex in $\BPs^{-1}$.
The Newton update direction for $\BPs^{-1}$ as parameter
in the M-step is
\begin{align}
\Delta \BPs^{-1} \ &= \   \BPs^{-1} \ - \ \BPs^{-1} \ \BE  \ \BPs^{-1}  \ .
\end{align}
A first order approximation of this Newton direction for $\BPs$ 
in the M-step is
\begin{align}
\label{S_eq:psiUdate2}
\Delta \BPs \ &= \   \BE \ - \ \BPs \ .
\end{align}
An update with $\Delta \BPs$ ($\eta=1$) leads to the minimum of the M-step
objective $\Ecal$.
\end{theorem}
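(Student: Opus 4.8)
The plan is to reparametrize the M-step objective $\Ecal$ in terms of $\Bx = \BPs^{-1}$, reducing it to a log-determinant-plus-linear-trace function for which convexity and the exact Newton step are transparent. First I would collect the three data-dependent trace terms of $\Ecal$ in Eq.~\eqref{S_eq:recError} into the single term $\TR(\BPs^{-1}\BE)$, using the definition $\BE = \BC - \BU\BW^T - \BW\BU^T + \BW\BS\BW^T$ together with the cyclic property of the trace and the symmetry of $\BPs^{-1}$ (so that $\TR(\BPs^{-1}\BW\BU^T)=\TR(\BPs^{-1}\BU\BW^T)$). Substituting $\log|\BPs| = -\log|\BPs^{-1}|$ then gives $2\Ecal = m\log(2\pi) - \log|\Bx| + \TR(\Bx\BE)$ with $\BE$ independent of $\Bx$. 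Since $-\log|\Bx|$ is convex on the positive-definite cone and $\TR(\Bx\BE)$ is linear, $\Ecal$ is convex in $\BPs^{-1}$, which is the first claim of the theorem.

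Next I would differentiate. From $\partial(-\log|\Bx|)/\partial\Bx = -\Bx^{-1} = -\BPs$ and $\partial\TR(\Bx\BE)/\partial\Bx = \BE$ the gradient is $\nabla_{\BPs^{-1}}(2\Ecal) = \BE - \BPs$. For the Hessian I would vectorize, using $\partial\,\mathrm{vec}(-\Bx^{-1})/\partial\,\mathrm{vec}(\Bx)^T = \Bx^{-1}\otimes\Bx^{-1} = \BPs\otimes\BPs$; note that, because $\TR(\Bx\BE)$ is linear, this is the \emph{exact} Hessian (no approximation of the kind needed for $\BPs$ in Theorem~\ref{th:NewtonNoise}), and it is positive definite, reconfirming convexity. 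Its inverse is $\BPs^{-1}\otimes\BPs^{-1}$, so the Newton direction $-\BH^{-1}$ times the gradient is $-(\BPs^{-1}\otimes\BPs^{-1})\,\mathrm{vec}(\BE - \BPs) = \mathrm{vec}(\BPs^{-1} - \BPs^{-1}\BE\BPs^{-1})$, i.e. $\Delta\BPs^{-1} = \BPs^{-1} - \BPs^{-1}\BE\BPs^{-1}$, as stated.

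The remaining and genuinely new step is to translate this update of $\BPs^{-1}$ into an update direction for $\BPs$ and recover $\Delta\BPs = \BE - \BPs$. Here I would use the first-order perturbation of the matrix inverse: moving $\BPs^{-1}$ by $\eta\,\Delta\BPs^{-1}$ gives $(\BPs^{-1} + \eta\,\Delta\BPs^{-1})^{-1} = \BPs - \eta\,\BPs(\Delta\BPs^{-1})\BPs + O(\eta^2)$, so the induced first-order change of $\BPs$ is $\Delta\BPs = -\BPs(\Delta\BPs^{-1})\BPs$. Substituting $\Delta\BPs^{-1} = \BPs^{-1} - \BPs^{-1}\BE\BPs^{-1}$ and cancelling the $\BPs\BPs^{-1}$ factors yields $\Delta\BPs = -\BPs + \BE = \BE - \BPs$. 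Finally, taking $\eta=1$ gives $\BPs^\nn = \BE$, which by setting the gradient $\BE - \BPs$ to zero (equivalently, by Theorem~\ref{th:NewtonNoise}) is the unique minimizer of the convex M-step objective $\Ecal$.

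The main obstacle I expect is bookkeeping rather than anything conceptual: keeping the signs straight in the log-determinant derivative and in the inverse-perturbation expansion, and handling the $\mathrm{vec}$/Kronecker identities carefully so that the symmetric Hessian and its inverse come out as clean Kronecker products. Once the objective is written as $-\log|\Bx| + \TR(\Bx\BE)$, every subsequent step is a short, standard matrix-calculus manipulation directly paralleling the proofs of Theorem~\ref{th:NewtonLoading} and Theorem~\ref{th:NewtonNoise}.
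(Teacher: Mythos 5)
Your proposal is correct and follows essentially the same route as the paper's proof: the same gradient $\BE-\BPs$, the same Hessian $\BPs\otimes\BPs$ with inverse $\BPs^{-1}\otimes\BPs^{-1}$ giving $\Delta\BPs^{-1}=\BPs^{-1}-\BPs^{-1}\BE\BPs^{-1}$, and the same first-order expansion of the matrix inverse to obtain $\Delta\BPs=\BE-\BPs$. The only cosmetic difference is that you establish convexity up front by recognizing the objective as $-\log|\BPs^{-1}|+\TR(\BPs^{-1}\BE)$ (a convex log-det plus a linear term), whereas the paper reads convexity off the positive definiteness of the computed Hessian.
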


\begin{proof}
The M-step objective is the {\em expected reconstruction error}
$\Ecal$, which is according to Eq.~\eqref{S_eq:recError}
\begin{align}
 \Ecal \ &= \  - \  \frac{1}{n} \ \sum_{i=1}^{n} \int_{\bbR^l} Q(\Bh_i) \ 
\log \left( p(\Bv_i \mid \Bh_i) \right) \  d\Bh_i  \ = \ \frac{1}{2} \Big(
m \ \log \left( 2 \pi \right) \  + \  \log
 \left| \BPs \right| \\ \nonumber 
&+ \   \TR \left(\BPs^{-1} \BC \right) \ - \
2 \  \TR  \left( \BPs^{-1} \BW \BU^T \right)
 \ + \ \TR  \left(\BW^T \BPs^{-1} \BW \BS \right)\Big) \ ,
\end{align}
where $\TR$ gives the trace of a matrix.

Since
\begin{align}
2 \ \nabla_{\BPs^{-1}}  \Ecal \ &= \
- \ \BPs \ + \ \BE 
\end{align}
is 
\begin{align}
\BPs \ &= \ \BE  
\end{align}
the minimum of $\Ecal$ with respect to $\BPs^{-1}$.
Therefore an update with $\Delta \BPs=\BE-\BPs$ leads to the minimum.

The Hessian $\BH_{\BPs^{-1}}$ of $(2 \Ecal)$ 
with respect to $\BPs^{-1}$ as a vector is:
\begin{align}
\BH_{\BPs^{-1}} \ &= \
\frac{\partial \mathrm{vec} \left(2 \
\nabla_{\BPs^{-1}}  \Ecal \right)}{\partial
\mathrm{vec}(\BPs^{-1})^T}\ = \ \frac{\partial \mathrm{vec}\left( 
- \ \BPs \ + \ \BE 
\right) }{\partial \mathrm{vec}(\BPs^{-1})^T} \ =
\ \BPs \ \otimes \ \BPs \ .
\end{align}
Since the Hessian is positive definite, the E-step objective $\Ecal$
is convex in $\BPs^{-1}$, which is the first statement of the theorem.

The inverse of $\BH_{\BPs^{-1}}$ is
\begin{align}
\BH_{\BPs^{-1}}^{-1} \ &= \  \BPs^{-1}  \ \otimes \ \BPs^{-1} \ .
\end{align}

For the product of the inverse Hessian with the gradient we have:
\begin{align}
& \BH_{\BPs^{-1}}^{-1} \ \mathrm{vec} \left(
- \ \BPs \ + \  \BE   \right)
\ = \ \mathrm{vec} \left(  \BPs^{-1} \ \left(
- \ \BPs \ + \ \BE  \right)  \BPs^{-1} \right) \\\nonumber
&= \mathrm{vec} \left(  - \ \BPs^{-1} \ + \ \BPs^{-1} \ \BE  \ \BPs^{-1} \right) \ .
\end{align}

If we apply a Newton update, then the update direction for $\BPs^{-1}$
in the M-step is
\begin{align}
\Delta \BPs^{-1} \ &= \   \BPs^{-1} \ - \ \BPs^{-1} \ \BE  \ \BPs^{-1}  \ .
\end{align}

We now can approximate the update for $\BPs$ by the first terms of the
Taylor expansion:
\begin{align}
\BPs \ + \ \Delta \BPs \ &= \ \left(\BPs^{-1} \ + \ \Delta \BPs^{-1}
\right)^{-1} \ \approx \ \BPs \ - \ \BPs \ \Delta \BPs^{-1} \ \BPs \ .
\end{align}
We obtain for the update of  $\BPs$
\begin{align}
\Delta \BPs \ &= \  - \ \BPs \ \Delta \BPs^{-1} \ \BPs \ = \
\BE \ - \ \BPs \ .
\end{align}
This is the exact EM update if the step-size $\eta$ is 1.
\end{proof}

The Newton update derived from $\BPs^{-1}$ as parameter is 
the Newton update for $\BPs$.
Consequently, the Newton direction for both $\BPs$ and $\BPs^{-1}$ is
in the M-step
\begin{align}
\Delta \BPs \ &= \  \BE \ - \ \BPs \ .
\end{align}

\section{Gradient-based E-Step}
\label{sec:gradientE}

\subsection{Motivation for Rectifying and Normalization Constraints}
\label{sec:rectNorm}

The representation of data vector $\Bv$ by the model is the
variational mean vector 
$\Bmu_q$. In order to obtain sparse codes we want to
have non-negative $\Bmu_q$. We enforce non-negative mean
values by constraints and optimize by
projected Newton methods and by gradient projection methods. 
Non-negative constraints correspond to rectifying in the neural network field.
Therefore we aim to construct sparse codes 
in analogy to the
rectified linear units used for neural networks.

We constrain the variational distributions to 
the family of normal distributions with non-negative mean components.
Consequently we introduce non-negative or {\bf rectifying constraints}:
\begin{align}
\label{S_eq:constRect}
 \Bmu \ &\geq \ \BZe \ ,
\end{align}
where the inequality ``$\geq$'' holds component-wise.

However generative models with many coding units face a problem.
They tend to 
{\em explain away small and rare signals by noise}. 
For many coding units, model selection algorithms prefer models with 
coding units which do not have variation and, therefore,
are removed from the model. Other coding units hardly contribute to
explain the observations. The likelihood is larger if small and rare
signals are explained by noise, than the likelihood if coding units
are use to explain such signals. Coding units without variance are
kept on their default values, where they have maximal contribution to
the likelihood. If they are used for coding, they deviate from their
maximal values for each sample. In accumulation these deviations
decrease the likelihood more than it is increased by explaining small
or rare signals.
For our RFN models the problem can become severe, 
since we aim at models with up to several tens of
thousands of coding units. 
To avoid the explaining away problem, we enforce the selected models to
use all their coding units on an equal level. 
We do that by keeping the variation of each noise-free coding unit across the
training set at one.
Consequently, we 
introduce a {\bf normalization constraint} for each coding unit $1
\leq j \leq l$:
\begin{align}
\label{S_eq:constNorm}
 \frac{1}{n} \sum_{i=1}^{n} \mu_{ij}^2 \ &= \ 1 \ .
\end{align}
This constraint means that the noise-free part of each coding unit
has variance one across samples.

We will derive methods to increase the objective in the E-step
both for only rectifying constraints 
and for rectifying and normalization constraints.
These methods ensure to reduce 
the objective in the E-step to guarantee convergence via the GAM theory.
The resulting model from the GAM procedure 
is at a local maximum of
the objective given the model family and the family of 
variational distributions.
{\em The solution minimizes 
the KL-distance between the family of full 
variational distributions and full model family}.
``Full'' means that both the observed and the hidden variables are
taken into account.

\subsection{The Full E-step Objective}
\label{sec:fullE}

The E-step maximizes $\Fcal$ with respect to the variational
distribution $Q$, therefore the E-step minimizes
the Kullback-Leibler divergence (KL-divergence)
\cite{Kullback:51} 
$D_{\mathrm{KL}}(Q(\Bh) \parallel p(\Bh \mid \Bv) )$.
The KL-divergence between $Q$ and $p$ is
\begin{align}
D_{\mathrm{KL}}(Q \parallel p) \ &= \ \int  Q(\Bh) \ \log
\frac{ Q(\Bh)}{p(\Bh \mid \Bv)} \ d\Bh \ .
\end{align}

{\em Rectifying constraints} introduce non-negative constraints.
The minimization with respect to $Q(\Bh_i)$ gives the constraint
minimization problem:
\begin{align}
\label{S_eq:EobjectiveR}
\min_{Q(\Bh_i)}  {\mbox{\ ~} } & \frac{1}{n} 
\sum_{i=1}^n D_{\mathrm{KL}}(Q(\Bh_i) \parallel p(\Bh_i \mid \Bv_i) )\\ \nonumber 
\mbox{ s.t. }  {\mbox{\ ~} } &\forall_i: \ \Bmu_i \ \geq \ \BZe \ ,
\end{align}
where $\Bmu_i$ is the mean vector of $Q(\Bh_i)$.

{\em Rectifying and normalizing constraints} introduce non-negative
constraints and equality constraints.
The minimization with respect to $Q(\Bh_i)$ gives the constraint
minimization problem:
\begin{align}
\label{S_eq:EobjectiveRN}
\min_{Q(\Bh_i)}  {\mbox{\ ~} } & \frac{1}{n} 
\sum_{i=1}^n D_{\mathrm{KL}}(Q(\Bh_i) \parallel p(\Bh_i \mid \Bv_i) )\\ \nonumber 
\mbox{ s.t. }  {\mbox{\ ~} } &\forall_i: \ \Bmu_i \ \geq \ \BZe \ , \\ \nonumber 
&\forall_j: \ \frac{1}{n}  \ \sum_{i=1}^{n} \mu_{ij}^2 \ = \ 1 \ ,
\end{align}
where $\Bmu_i$ is the mean vector of $Q(\Bh_i)$.

First we consider the families from which the model and from which the
variational distributions stem.
The posterior of the model with Gaussian prior $p(\Bh)$ is Gaussian
(see Section~\ref{sec:mlfa}):
\begin{align}
p(\Bh \mid \Bv) \ &\sim \ (2\pi)^{-\frac{l}{2}} \
\left|\BSi_p \right|^{-\frac{1}{2}} \  \exp \left( - \ \frac{1}{2}\ (\Bh
  \ - \ \Bmu_p)^T \ \BSi_p^{-1} \ (\Bh \ - \ \Bmu_p) \right) \ .
\end{align}
To be as close as possible to the posterior distribution, 
we restrict $Q$ to be from a Gaussian family:
\begin{align}
Q(\Bh) \ &\sim \ (2\pi)^{-\frac{l}{2}} \
\left|\BSi_q \right|^{-\frac{1}{2}} \  \exp \left( - \ \frac{1}{2}\ (\Bh
  \ - \ \Bmu_q)^T \ \BSi_q^{-1} \ (\Bh \ - \ \Bmu_q) \right) \ .
\end{align}

For Gaussians, the Kullback-Leibler divergence between $Q$ and $p$ is
\begin{align}
\label{S_eq:KL1}
&D_{\mathrm{KL}}(Q \parallel p) \ = \\\nonumber 
& \frac{1}{2} \left\{ \TR \left(
     \BSi_p^{-1} \ \BSi_q \right) \ + \ \left( \Bmu_p \ - \
     \Bmu_q\right)^{T} \ \BSi_p^{-1} \ \left( \Bmu_p \ - \ \Bmu_q
   \right) \ - \ l \ - \ \ln \frac{| \BSi_q |}{| \BSi_p |} \right\}
 \ . 
\end{align}
This Kullback-Leibler divergence 
is convex in the mean vector $\Bmu_q$ and the
covariance matrix $\BSi_q$ of $Q$, 
simultaneously \cite{Dredze:08,Dredze:12}.

We now minimize Eq.~\eqref{S_eq:KL1} with respect to $Q$. For the moment
we do not care about the constraints introduced by non-negativity and
by normalization.
Eq.~\eqref{S_eq:KL1} has a quadratic form in $\Bmu_q$, where $\BSi_q$
does not enter, and terms in $\BSi_q$, where $\Bmu_q$ does not enter.
Therefore we can separately minimize for $\BSi_q$ and for $\Bmu_q$.

For the minimization with respect to $\BSi_q$, we require
\begin{align}
\frac{\partial}{\partial \BSi_q} \TR \left( \BSi_p^{-1} \ \BSi_q
\right) \ &= \ \BSi_p^{-T}
\end{align}
and
\begin{align}
\frac{\partial}{\partial \BSi_q} \ln \left| \BSi_q
\right| \ &= \ \BSi_q^{-T} \ .
\end{align}
For optimality the derivative of the objective  
$D_{\mathrm{KL}}(Q \parallel p)$ with respect to $\BSi_q$ must be zero:
\begin{align}
\frac{\partial}{\partial \BSi_q} D_{\mathrm{KL}}(Q \parallel p)
 \ &= \ \frac{1}{2} \ \BSi_p^{-T} \ - \  \frac{1}{2} \  
\BSi_q^{-T} \ = \ \BZe \ .
\end{align}
This gives 
\begin{align}
\BSi \ &= \ \BSi_q \ = \ \BSi_p \ .
\end{align}
We often drop the index $q$ since for $1 \leq i \leq n$ all covariance
matrices $\BSi_q$ are equal to $\BSi_p$.

The mean vector $\Bmu_q$ of $Q$ is the solution 
of the minimization problem:
\begin{align}
\min_{\Bmu}  {\mbox{\ ~} } &\frac{1}{2} \ \left( \Bmu_p \ - \
\Bmu\right)^{T} \ \BSi_p^{-1} \ \left( \Bmu_p \ - \ \Bmu \right)
\end{align}
which is equivalent to 
\begin{align}
\min_{\Bmu}  {\mbox{\ ~} } &\frac{1}{2} \ \Bmu^T \BSi_p^{-1}
\Bmu \ - \  \Bmu_p^T \BSi_p^{-1} \Bmu \ .
\end{align}

The derivative and the Hessian of this objective is:
\begin{align}
\frac{\partial}{\partial \Bmu} D_{\mathrm{KL}}(Q \parallel p)
 \ &= \ \BSi_p^{-1} ( \Bmu \ - \  \Bmu_p) \ , \\
\frac{\partial^2}{\partial^2 \Bmu} D_{\mathrm{KL}}(Q \parallel p)
 \ &= \ \BSi_p^{-1} \ .
\end{align}

\subsection{E-step for Mean with Rectifying Constraints}
\label{sec:Erect}

\subsubsection{The E-Step Minimization Problem}
\label{sec:ErectP}

Rectifying is realized by non-negative constraints.
The mean vector $\Bmu_q$ of $Q$ is the solution of the
minimization problem:
\begin{align}
\min_{\Bmu}  {\mbox{\ ~} } &\frac{1}{2} \ (\Bmu \ - \  \Bmu_p)^T \ \BSi_p^{-1}
\ (\Bmu \ - \  \Bmu_p) \\ \nonumber 
\mbox{ s.t. }  {\mbox{\ ~} } &\Bmu \ \geq \ \BZe \ .
\end{align}
This is a convex quadratic minimization 
problem with non-negativity constraints (convex
feasible set).

If $\Bla$ is the Lagrange multiplier for the constraints, then the dual
is
\begin{align}
\min_{\Bla}  {\mbox{\ ~} } &\frac{1}{2} \ \Bla^T \BSi_p
\Bla \ + \  \Bmu_p^T \Bla  \\ \nonumber 
\mbox{ s.t. }  {\mbox{\ ~} } &\Bla \ \geq \ \BZe \ .
\end{align}
The Karush-Kuhn-Tucker conditions require for the optimal solution for
each component $1\leq j\leq l$:
\begin{align}
\lambda_j \ \mu_{j} \ &= \ 0 \ .
\end{align}
Further the derivative of the Lagrangian with respect to $\Bmu$
gives
\begin{align}
\BSi_p^{-1} \Bmu \ - \ \BSi_p^{-1} \Bmu_p \ - \ \Bla \ = \ \BZe 
\end{align}
which can be written as
\begin{align}
\label{S_eq:ZeroD1}
\Bmu \ - \ \Bmu_p \ - \ \BSi_p \ \Bla \ = \ \BZe \ .
\end{align}

This minimization problem cannot be solved directly. 
Therefore we perform a gradient projection or projected Newton step to
decrease the objective.

\subsubsection{The Projection onto the Feasible Set}
\label{sec:ErectProj}

To decrease the objective, we perform a gradient projection or
a projected Newton step. 
We will base our algorithms 
on {\em Euclidean least distance projections}.
If projected onto convex sets, these projections do not increase
distances. 
The Euclidean projection onto the feasible set is denoted by $\PP$, 
that is, the map that takes $\Bmu_p$ to its nearest point $\Bmu$ (in
the $L^2$-norm) in the feasible set.

For rectifying constraints, the projection $\PP$
(Euclidean least distance projection) of $\Bmu_p$ onto the
convex feasible set is given by the solution of the convex 
optimization problem:
\begin{align}
\label{S_eq:optmu}
\min_{\Bmu}  {\mbox{\ ~} } &\frac{1}{2} \ \left( \Bmu  \ - \
  \Bmu_p \right)^T \left( \Bmu  \ - \
  \Bmu_p \right) \\ \nonumber
\mbox{ s.t. }  {\mbox{\ ~} } &\Bmu \ \geq \ \BZe \ .
\end{align}
The
following Theorem~\ref{th:rectifying} shows that update Eq.~\eqref{S_eq:Proj1} is the
projection $\PP$ defined by optimization problem  Eq.~\eqref{S_eq:optmu}.

\begin{theorem}[Projection: Rectifying]
\label{th:rectifying}
The solution to optimization problem  Eq.~\eqref{S_eq:optmu}, which
defines the Euclidean least distance projection, is
\begin{align}
\label{S_eq:Proj1}
\mu_{j} \ &= \ \left[\PP (\Bmu_p) \right]_j \ = \  \left\{ 
\begin{array}{lcl}
0 & \mathrm{for} & (\mu_p)_j \ \leq \ 0 \\
(\mu_p)_j & \mathrm{for} &  (\mu_p)_j \ > \ 0
\end{array} \right. 
\end{align}
\end{theorem}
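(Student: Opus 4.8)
The plan is to exploit the complete separability of problem Eq.~\eqref{S_eq:optmu}. Since the Euclidean objective $\tfrac{1}{2}(\Bmu-\Bmu_p)^T(\Bmu-\Bmu_p)=\tfrac{1}{2}\sum_{j=1}^{l}(\mu_j-(\mu_p)_j)^2$ is a sum of one-variable quadratics and the feasible set $\{\Bmu\ge\BZe\}$ is the Cartesian product of the half-lines $\{\mu_j\ge 0\}$, the minimization decouples into $l$ independent scalar problems $\min_{\mu_j\ge 0}(\mu_j-(\mu_p)_j)^2$. The objective is strictly convex (its Hessian is $\BI$), so each scalar problem, and hence the full problem, has a unique global minimizer; it therefore suffices to solve a single coordinate and read off the componentwise formula of Eq.~\eqref{S_eq:Proj1}.

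For a fixed index $j$ I would first locate the unconstrained minimizer of $(\mu_j-(\mu_p)_j)^2$, namely $\mu_j=(\mu_p)_j$. If $(\mu_p)_j>0$, this point already lies in the feasible ray, so it is the constrained minimizer and $\mu_j=(\mu_p)_j$. If $(\mu_p)_j\le 0$, the unconstrained minimizer is infeasible; since the parabola $(\mu_j-(\mu_p)_j)^2$ is monotonically increasing for $\mu_j\ge(\mu_p)_j$ and the feasible ray $\mu_j\ge 0$ lies entirely in that increasing region, the minimum over the ray is attained at the boundary $\mu_j=0$. Combining the two cases yields $\mu_j=\max\{0,(\mu_p)_j\}$, which is exactly the stated projection.

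Alternatively, I would certify the same answer through the Karush--Kuhn--Tucker system already assembled above, now specialized to the Euclidean metric (so that the role of $\BSi_p$ in Eq.~\eqref{S_eq:ZeroD1} is played by $\BI$): with multiplier $\Bla\ge\BZe$, stationarity reads $\Bmu-\Bmu_p-\Bla=\BZe$ together with complementary slackness $\lambda_j\,\mu_j=0$. For $(\mu_p)_j>0$ the choice $\mu_j=(\mu_p)_j,\ \lambda_j=0$ satisfies all conditions, and for $(\mu_p)_j\le 0$ the choice $\mu_j=0,\ \lambda_j=-(\mu_p)_j\ge 0$ does so as well, with the degenerate value $(\mu_p)_j=0$ landing consistently in both descriptions at $\mu_j=0$. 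The computation carries no genuine difficulty---the only points needing care are verifying dual feasibility $\lambda_j\ge 0$ in the clamped case and the boundary value $(\mu_p)_j=0$. Thus the main obstacle, such as it is, reduces to invoking strict convexity to guarantee that the constructed KKT point is the unique global minimizer rather than a spurious stationary point.
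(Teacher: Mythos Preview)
Your proposal is correct. Your second (KKT) argument is essentially the paper's own proof: it forms the Lagrangian, derives the stationarity condition $\Bmu-\Bmu_p-\Bla=\BZe$, invokes complementary slackness $\lambda_j\mu_j=0$, and runs the same componentwise case split on the sign of $(\mu_p)_j$. Your first argument---exploiting separability of the objective and the box constraint to reduce to $l$ independent scalar problems and then reading off $\mu_j=\max\{0,(\mu_p)_j\}$ from monotonicity of the parabola on the feasible ray---is a more elementary route that the paper does not spell out; it bypasses the KKT machinery entirely at the cost of being less uniform with the later rectifying-plus-normalizing projection (Theorem~\ref{S_th:rectNorm}), where the coordinates are coupled by the equality constraint and a Lagrangian analysis is genuinely needed.
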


\begin{proof}
For the projection we have the minimization problem:
\begin{align}
\min_{\Bmu}  {\mbox{\ ~} } &\frac{1}{2} \ \left( \Bmu  \ - \
  \Bmu_p \right)^T \left( \Bmu  \ - \
  \Bmu_p \right) \\ \nonumber
\mbox{ s.t. }  {\mbox{\ ~} } &\Bmu \ \geq \ \BZe \ .
\end{align}
The Lagrangian $L$ with multiplier $\Bla \geq \BZe$ is
\begin{align}
L \ &= \ \frac{1}{2} \  \left( \Bmu  \ - \
  \Bmu_p \right)^T \left( \Bmu  \ - \
  \Bmu_p \right) \ - \ \Bla^T \ \Bmu \ .
\end{align}
The derivative with respect to  $\Bmu$ is
\begin{align}
\label{S_eq:LagFre}
\frac{\partial L}{\partial \Bmu} \ &= \ 
\Bmu\ - \ \Bmu_p \ - \ \Bla \ = \ \BZe \ . 
\end{align}
The Karush-Kuhn-Tucker (KKT) conditions require for the optimal
solution that for
each constraint $j$:
\begin{align}
\label{S_eq:KKT}
\lambda_j \ \mu_{j} \ &= \ 0 \ .
\end{align}

If $0 < (\mu_p)_{j}$ then Eq.~\eqref{S_eq:LagFre} requires $0 < \mu_{j}$
because the Lagrangian $\lambda_j$ is larger than or equal to zero:
$0 \leq \lambda_j$. From the KKT conditions Eq.~\eqref{S_eq:KKT}
follows that $\lambda_j=0$
and, therefore, $0 < \mu_{j}=(\mu_p)_{j}$.
If $(\mu_p)_{j}<0$ then $0 < \mu_{j} - (\mu_p)_{j}$,
because the constraints of the primal problem require $0\leq \mu_{j}$.
From Eq.~\eqref{S_eq:LagFre} follows that $0< \lambda_j$.
From the KKT conditions Eq.~\eqref{S_eq:KKT} follows that $(\mu_p)_{j}=0$
and $0 < \lambda_j= -(\mu_p)_{j}$. If $(\mu_p)_{j}=0$, then
Eq.~\eqref{S_eq:LagFre} and the KKT conditions Eq.~\eqref{S_eq:KKT} lead to 
$(\mu_p)_{j}=\mu_{j}=\lambda_j=0$.

Therefore the solution of problem Eq.~\eqref{S_eq:optmu} is
\begin{align}
\mu_{j} \ &= \ \left\{ 
\begin{array}{lcl}
(\mu_p)_{j} & \mathrm{for} & (\mu_p)_{j} > 0 \mbox{~~and~~} \lambda_j = 0 \\
0 & \mathrm{for} & (\mu_p)_{j} \leq 0 \mbox{~~and~~} \lambda_j = - (\mu_p)_{j} 
\end{array} \right. \ .
\end{align}

This finishes the proof.
\end{proof}

\subsection{E-step for Mean with Rectifying and Normalizing Constraints}
\label{sec:ErectNorm}

\subsubsection{The E-Step Minimization Problem}
\label{sec:ErectNormP}

If we also consider normalizing constraints, then we have to minimize
all KL-divergences simultaneously. The normalizing constraints connect
the single optimization problems for each sample $\Bv_i$. For the
E-step, we obtain the minimization problem:
\begin{align}
\label{S_eq:normP}
\min_{\Bmu_i}  {\mbox{\ ~} } &\frac{1}{ n} \ \sum_{i=1}^{n}
(\Bmu_i \ - \  (\Bmu_p)_i)^T \ \BSi_p^{-1} \ (\Bmu_i \ - \  (\Bmu_p)_i)\\ \nonumber 
\mbox{ s.t. }  {\mbox{\ ~} } &\forall_i: \ \Bmu_i \ \geq \ \BZe \quad , \quad
\forall_j: \ \frac{1}{n}  \ \sum_{i=1}^{n} \mu_{ij}^2 \ = \ 1 \ .
\end{align}
The ``$\geq$''-sign is meant component-wise.
The $l$ equality constraints lead to non-convex feasible sets.
The solution to this optimization problem are the means vectors
$\Bmu_i$ of $Q(\Bh_i)$.

\paragraph{Generalized Reduced Gradient.}

The equality constraints can be solved for one variable which is then 
inserted into the objective.
The equality constraint gives for each $1\leq j \leq l$:
\begin{align}
\label{S_eq:solveIt}
\mu_{1j}^2 \ &= \ n \ - \ \sum_{i=2}^{n}
\mu_{ij}^2  \quad \mbox{or} \quad
\mu_{1j} \ = \ \sqrt{n \ - \ \sum_{i=2}^{n}
\mu_{ij}^2} \ .
\end{align}
These equations can be inserted into the objective and, thereby, we
remove the variables $\mu_{1j}$.
We have to ensure that the $\mu_{1j}$ exist by
\begin{align}
\label{S_eq:solveIta}
\sum_{i=2}^{n} \mu_{ij}^2 \ \leq \ n  \ .
\end{align}
These constraints define a convex set feasible set.
To solve the each equality constraints for a variable and insert it into
the objective is called
{\em generalized reduced gradient} method \cite{Abadie:69}. 
For solving the reduced problem, we can use methods for constraint
optimization were we now ensure a convex feasible set. 
These methods solve the original problem Eq.~\eqref{S_eq:normP}. 
We only require an improvement of the objective with a feasible value.
For the reduced problem,
we perform one step of a {\em gradient projection method}.

\paragraph{Gradient Projection Methods.}

Also for the original problem Eq.~\eqref{S_eq:normP}, 
{\em gradient projection methods} can be used.
The gradient  projection method has been generalized by Rosen to 
{\em non-linear constraints} \cite{Rosen:61} and was later improved by
\cite{Haug:79}. 
The gradient projection algorithm 
of Rosen works for {\em non-convex feasible sets}.
The idea is to linearize the nonlinear constraints and solve the
problem. Subsequently a
restoration move brings the solution back to the constraint
boundaries.

\subsubsection{The Projection onto the Feasible Set}
\label{sec:ErectNomrProj}

To decrease the objective, we perform a gradient projection,
a projected Newton step, or a step of the generalized reduced method.
We will base our algorithms 
on {\em Euclidean least distance projections}.
If projected onto convex sets, these projections do not increase
distances. 
The Euclidean projection onto the feasible set is denoted by $\PP$, 
that is, the map that simultaneously takes $\{(\Bmu_p)_i\}$ to the nearest points
$\{\Bmu_i\}$  (in the $L^2$-norm) in the feasible set.

For rectifying and normalizing constraints the projection 
(Euclidean least distance projection)
of
$\{(\Bmu_p)_i\}$ onto the {\bf non-convex} feasible set leads to
the optimization problem
\begin{align}
\label{S_eq:optmu2}
\min_{\Bmu_i}  {\mbox{\ ~} } &\frac{1}{n} \ \sum_{i=1}^{n} \left( \Bmu_i  \ - \
  (\Bmu_p)_i \right)^T \left( \Bmu_i  \ - \
  (\Bmu_p)_i \right) \\ \nonumber
\mbox{ s.t. }  {\mbox{\ ~} } &\forall_i: \ \Bmu_i \ \geq \ \BZe \ , \\\nonumber
&\forall_j: \ \frac{1}{n} \ \sum_{i=1}^{n} \mu_{ij}^2 \ = \ 1 \ .
\end{align}
By using 
$\left( \Bmu_i  -
  (\Bmu_p)_i \right)^T \left( \Bmu_i   - 
  (\Bmu_p)_i \right)= \Bmu_i^T \Bmu_i  -  2  \Bmu_i^T (\Bmu_p)_i 
+  (\Bmu_p)_i^T (\Bmu_p)_i$,
we see that the objective contains the sum
$\sum_{ij} \mu_{ij}^2$.
The constraints enforce this sum to be constant.
Therefore inserting the equality constraints into the objective,
optimization problem Eq.~\eqref{S_eq:optmu2} is 
equivalent to 
\begin{align}
\label{S_eq:optmu2a}
\min_{\Bmu_i}  {\mbox{\ ~} } & - \ \frac{1}{n} \ \sum_{i=1}^{n}
\Bmu_i^T  \ (\Bmu_p)_i \\\nonumber
\mbox{ s.t. }  {\mbox{\ ~} } &\forall_i: \ \Bmu_i \ \geq \ \BZe \ , \\\nonumber
&\forall_j: \ \frac{1}{n} \ \sum_{i=1}^{n} \mu_{ij}^2 \ = \ 1 \ .
\end{align}

The
following Theorem~\ref{S_th:rectNorm} shows that updates
Eq.~\eqref{S_eq:Proj2} 
and Eq.~\eqref{S_eq:Proj2A} form the
projection defined by optimization problem  Eq.~\eqref{S_eq:optmu2}.

\begin{theorem}[Projection: Rectifying and Normalizing]
\label{S_th:rectNorm}
If at least one $(\mu_p)_{ij}$ is positive for $1 \leq j \leq l$, 
then the solution to optimization problem  Eq.~\eqref{S_eq:optmu2}, which
defines the Euclidean least distance projection, is
\begin{align}
\label{S_eq:Proj2}
\hat{\mu}_{ij} \ &= \  \left\{ 
\begin{array}{lcl}
0 & \mathrm{for} & (\mu_p)_{ij} \ \leq \ 0 \\
(\mu_p)_{ij} & \mathrm{for} &  (\mu_p)_{ij} \ > \ 0
\end{array} \right. \\ \nonumber
\mu_{ij} \ &= \ \left[\PP ((\Bmu_p)_i) \right]_j \ = \  
\frac{\hat{\mu}_{ij}}{\sqrt{\frac{1}{n} \ \sum_{i=1}^{n} \hat{\mu}_{ij}^2}} \ .
\end{align}
If all $(\mu_p)_{ij}$ are non-positive for $1 \leq j \leq l$,
then the optimization problem Eq.~\eqref{S_eq:optmu2} has the solution
\begin{align}
\label{S_eq:Proj2A}
\mu_{ij} \ &= \  \left\{ 
\begin{array}{lcl}
\sqrt{n} & \mathrm{for} & j \ = \  \arg \max_{\hat{j}} \{(\mu_p)_{i\hat{j}}\}\\
0 & \mathrm{otherwise} &
\end{array} \right. \ .
\end{align}
\end{theorem}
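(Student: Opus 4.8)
The plan is to first linearize the objective and then exploit the fact that it decouples across coding units. As already noted in the derivation of Eq.~\eqref{S_eq:optmu2a}, the normalization constraints force $\sum_{ij}\mu_{ij}^2 = n\,l$ to be constant on the feasible set, so minimizing the squared Euclidean distance in Eq.~\eqref{S_eq:optmu2} is equivalent to the maximization $\max \frac{1}{n}\sum_{ij}\mu_{ij}(\mu_p)_{ij}$ subject to the same constraints. Both this linear objective and the constraint set separate over the column index $j$, so it suffices to solve, for each coding unit $j$ independently, the scalar problem $\max \Bx^T\Ba$ subject to $\Bx\geq\BZe$ and $\|\Bx\|_2^2 = n$, where $\Bx=(\mu_{1j},\dots,\mu_{nj})^T$ and $\Ba=((\mu_p)_{1j},\dots,(\mu_p)_{nj})^T$ denote the column of interest.

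For the first case, where at least one $a_i>0$, I would certify global optimality of the rectify-then-normalize vector directly by Cauchy--Schwarz, avoiding any appeal to KKT stationarity (which, because the sphere constraint is nonconvex, need not single out the global maximizer). Let $\hat{\Ba}=\max(\BZe,\Ba)$ be the positive part. For any feasible $\Bx$ one has $\Bx^T\Ba \leq \Bx^T\hat{\Ba} \leq \|\Bx\|_2\,\|\hat{\Ba}\|_2 = \sqrt{n}\,\|\hat{\Ba}\|_2$, the first inequality because $x_i\geq 0$ and $\hat a_i\geq a_i$, the second by Cauchy--Schwarz. The candidate $\Bx^\star = \sqrt{n}\,\hat{\Ba}/\|\hat{\Ba}\|_2$ is feasible and attains both bounds with equality --- it is parallel to $\hat{\Ba}$ and has $x^\star_i=0$ exactly where $a_i\leq 0$ --- so it is the unique maximizer. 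Rewriting $\sqrt{n}/\|\hat{\Ba}\|_2 = 1/\sqrt{\tfrac1n\sum_i \hat a_i^2}$ reproduces Eq.~\eqref{S_eq:Proj2}; the rectifying step here is exactly the projection established in Theorem~\ref{th:rectifying}.

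For the second case, where every $a_i\leq 0$, non-negativity makes the linear functional nonpositive, and the nonconvexity of the sphere is what produces a sparse vertex solution. Set $a_{i^\star}=\max_i a_i$. For any feasible $\Bx$, since $a_i\leq a_{i^\star}\leq 0$ and $x_i\geq 0$, I get $\Bx^T\Ba \leq a_{i^\star}\sum_i x_i$; and because $\Bx$ is non-negative, $\sum_i x_i = \|\Bx\|_1 \geq \|\Bx\|_2 = \sqrt{n}$, so multiplying by the nonpositive $a_{i^\star}$ yields $\Bx^T\Ba \leq \sqrt{n}\,a_{i^\star}$. The vertex $\Bx = \sqrt{n}\,\Be_{i^\star}$ is feasible and attains this value, which is Eq.~\eqref{S_eq:Proj2A}.

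The one step that deserves care, and where I expect the real content to lie, is certifying \emph{global} optimality in the presence of the nonconvex equality constraints $\tfrac1n\sum_i\mu_{ij}^2=1$: a Lagrangian/KKT analysis produces stationary points that are only local optima, so the argument must instead bound the objective globally (the two Cauchy--Schwarz / $\ell_1\geq\ell_2$ estimates above) and then exhibit a feasible point meeting the bound. A secondary subtlety is the boundary behaviour when some $a_i=0$ or when $\|\hat{\Ba}\|_2$ is on the verge of vanishing: the hypothesis that at least one $(\mu_p)_{ij}$ is positive is exactly what guarantees $\|\hat{\Ba}\|_2>0$, so the normalization in Eq.~\eqref{S_eq:Proj2} is well defined, while the complementary all-nonpositive hypothesis is precisely what forces the fallback to the single-coordinate solution.
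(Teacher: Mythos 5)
Your proof is correct, and it takes a genuinely different route from the paper's. Both arguments start from the same reduction: on the feasible set $\sum_{ij}\mu_{ij}^2$ is constant, so the least-distance problem Eq.~\eqref{S_eq:optmu2} is equivalent to the linear maximization Eq.~\eqref{S_eq:optmu2a}, which decouples over coding units $j$. From there the paper works through the Lagrangian: it writes down the KKT stationarity conditions, eliminates the multiplier $\tau_j$ of the equality constraint, obtains the candidate $\mu_{ij}=\alpha_j\hat{\mu}_{ij}$ in the generic case, and in the all-nonpositive case has to invoke second-order necessary conditions plus a geometric plane-meets-hypersphere argument to exclude interior stationary points. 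You instead certify global optimality directly: the two-step bound $\Bx^T\Ba\le\Bx^T\hat{\Ba}\le\sqrt{n}\,\|\hat{\Ba}\|_2$ via Cauchy--Schwarz is attained by the rectify-then-normalize vector, and the $\ell_1\ge\ell_2$ inequality settles the degenerate case. What your approach buys is exactly the point you flag: on the nonconvex sphere, KKT stationarity alone does not single out the global minimizer, so the paper must supplement it with the second-order and geometric discussion, whereas your elementary bounds give a self-contained global certificate (and uniqueness in the generic case for free). One point of divergence to note: since the normalization constraint sums over samples for each fixed coding unit $j$, your degenerate-case maximizer places the mass $\sqrt{n}$ at the sample index $i^\star=\arg\max_i\,(\mu_p)_{ij}$, whereas Eq.~\eqref{S_eq:Proj2A} (and Algorithm~\ref{alg:Psimple}) place it at a coding-unit index $\arg\max_{\hat{j}}\{(\mu_p)_{i\hat{j}}\}$ for a fixed sample $i$; your indexing is the one consistent with the constraint structure of Eq.~\eqref{S_eq:optmu2}, and the swap in the statement is inherited from an index ambiguity that is already present in the paper's own proof of this case.
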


\begin{proof}
In the
following we show that updates Eq.~\eqref{S_eq:Proj2} 
and Eq.~\eqref{S_eq:Proj2} are the 
projection onto the feasible set.
For the projection of $\{(\Bmu_p)_i\}$ onto the feasible set,
we have the minimization problem:
\begin{align}
\min_{\Bmu_i}  {\mbox{\ ~} } &\frac{1}{n} \ \sum_{i=1}^{n} \left( \Bmu_i  \ - \
  (\Bmu_p)_i \right)^T \left( \Bmu_i  \ - \
  (\Bmu_p)_i \right) \\\nonumber
\mbox{ s.t. }  {\mbox{\ ~} } &\forall_i: \ \Bmu_i \ \geq \ \BZe \ , \\\nonumber
&\forall_j: \ \frac{1}{n} \ \sum_{i=1}^{n} \mu_{ij}^2 \ = \ 1 \ .
\end{align}
The feasible set is non-convex because of the quadratic equality
constraint.
The Lagrangian with multiplier $\Bla \geq \BZe$ is
\begin{align}
L \ &= \ \frac{1}{n} \  \sum_{i=1}^{n} \left( \Bmu_i  \ - \
   (\Bmu_p)_i  \right)^T \left( \Bmu_i  \ - \
   (\Bmu_p)_i  \right) \ - \  \sum_{i=1}^{n} \Bla_i^T \ 
\Bmu_i \\ \nonumber
 &+ \ 
\sum_j \tau_j \ \left(\frac{1}{n} \ \sum_{i=1}^{n} \mu_{ij}^2 \ - \ 1 \right) \ .
\end{align}
The Karush-Kuhn-Tucker (KKT) conditions require for the optimal
solution:
\begin{align}
\label{S_eq:KKT1}
\lambda_{ij} \ \mu_{ij} \ &= \ 0 \quad \mbox{and} \quad 
 \tau_j \ \left(\frac{1}{n} \ \sum_{i=1}^{n} \mu_{ij}^2 \ - \ 1 \right) \ = \ 0 \ . 
\end{align}

The derivative of $L$ with respect to  $\mu_{ij}$ is
\begin{align}
\label{S_eq:deriv1}
\frac{\partial L}{\partial \mu_{ij}} \ &= \ 
\frac{2}{ n} \ (\mu_{ij} \ - \ (\mu_p)_{ij}) \ - \ \lambda_{ij} \ + \ 
\frac{2}{ n} \ \tau_j \ \mu_{ij} \ = \ 0 \ . 
\end{align}
We multiply this equation by $\mu_{ij}$ and obtain:
\begin{align}
\frac{2}{ n} \ (\mu_{ij}^2 \ - \ (\mu_p)_{ij} \ \mu_{ij}) \ - \ \lambda_{ij} \
\mu_{ij} \ + \ 
\frac{2}{ n} \ \tau_j \ \mu_{ij}^2 \ = \ 0 \ . 
\end{align}
The KKT conditions give $ \lambda_{ij} \mu_{ij} = 0$, therefore this
term can be removed from the equation.
Next we sum over $i$:
\begin{align}
\frac{2}{ n} \ \sum_{i=1}^{n} \left(\mu_{ij}^2 \ - \ (\mu_p)_{ij} \
  \mu_{ij} \right)  \ + \ 
\frac{2}{ n} \ \sum_{i=1}^{n} \tau_j \ \mu_{ij}^2 \ = \ 0 \ . 
\end{align}
Using the equality constraint $1/n \sum_{i=1}^{n} \mu_{ij}^2=1$ and
dividing by 2 and gives: 
\begin{align}
1 \ - \ \frac{1}{ n} \
\sum_{i=1}^{n} (\mu_p)_{ij} \ \mu_{ij}  \ + \ 
\tau_j  \ = \ 0 \ . 
\end{align}
Solving for $\tau_j$ leads to:
\begin{align}
\label{S_eq:tau}
\tau_j  \ &= \ \frac{1}{ n} \
\sum_{i=1}^{n} (\mu_p)_{ij} \ \mu_{ij} \ - \ 1 \ . 
\end{align}

We insert $\tau_j$ into Eq.~\eqref{S_eq:deriv1}
\begin{align}
\label{S_eq:oft}
- \ (\mu_p)_{ij} \ - \ \frac{n}{2} \lambda_{ij} \ + \ 
\left(\frac{1}{ n} \
\sum_{s=1}^{n} (\mu_p)_{sj} \ \mu_{sj} \right) \ \mu_{ij} \ = \ 0 \ . 
\end{align}
We immediately see, that if $\mu_{ij} =  0$ 
then $(\mu_p)_{ij} = - \frac{n}{2} \lambda_{ij}<0$. 
Therefore we can assume $\mu_{ij} >  0$.
Multiplying  Eq.~\eqref{S_eq:oft} with $\mu_{ij}$ and using the KKT conditions gives 
\begin{align}
- \ (\mu_p)_{ij} \ \mu_{ij} \ + \ 
\left(\frac{1}{ n} \
\sum_{s=1}^{n} (\mu_p)_{sj} \ \mu_{sj} \right) \ \mu_{ij}^2 \ = \ 0 \ . 
\end{align}
Therefore $(\mu_p)_{ij} \mu_{ij}$ and $\frac{1}{ n} 
\sum_{s=1}^{n} (\mu_p)_{sj}  \mu_{sj}$ have the same sign or $\mu_{ij}=0$.
Since  $0 \leq \mu_{ij}$, we deduce that $(\mu_p)_{ij}$ and $\frac{1}{n} 
\sum_{s=1}^{n} (\mu_p)_{sj}  \mu_{sj}$ have the same sign or $\mu_{ij}=0$.
Since the sum is independent of $i$, 
all $(\mu_p)_{ij}$ with $\mu_{ij}>0$ have the same sign for $1 \leq i
\leq n$.
Solving Eq.~\eqref{S_eq:oft} for $\mu_{ij}$ gives
\begin{align}
\mu_{ij} \ &= \ 
\frac{(\mu_p)_{ij} \ + \ \frac{n}{2} \lambda_{ij}}{\frac{1}{ n} \
\sum_{s=1}^{n} (\mu_p)_{sj} \ \mu_{sj}} \ .
\end{align}

{\bf I.} If all $(\mu_p)_{ij}$ are non-positive for $1 \leq j \leq l$, 
then the sum $\frac{1}{n} \sum_{s=1}^{n} (\mu_p)_{sj}  \mu_{sj}$
is negative. From the first order derivative of the Lagrangian in 
Eq.~\eqref{S_eq:deriv1}, we can compute the second order derivative
\begin{align}
\label{S_eq:nec}
\frac{\partial^2 L}{\partial \mu_{ij} \partial \mu_{ij}} \ &= \ 
\frac{2}{ n}  \ + \ \frac{2}{ n} \ \tau_j \
= \  2 \ \sum_{i=1}^{n} (\mu_p)_{ij} \ \mu_{ij} \ < \ 0 \ .
\end{align}
We inserted the expression of Eq.~\eqref{S_eq:tau} for $\tau_j$.
Since all mixed second order derivatives are zero, the
(projected) Hessian of the Lagrangian is diagonal with negative
entries. Therefore it is strict negative definite.
Thus, the second order necessary conditions cannot be fulfilled.
The minimum is a border point of the constraints.

For each $j$ for which all $(\mu_p)_{ij}$ are non-positive for $1 \leq j \leq l$, 
optimization problem Eq.~\eqref{S_eq:optmu2a} defines 
a plane that has a
normal vector in the positive orthant (hyperoctant). For such a $j$ the
corresponding equality constraint defines a hypersphere. 
Minimization means that the plane containing the solution is parallel to the
original plane and should be as close to the origin as
possible. If we move the plane parallel from the origin into the
positive orthant, then the first intersection with the hypersphere is
\begin{align}
\mu_{ij} \ &= \  \left\{ 
\begin{array}{lcl}
\sqrt{n} & \mathrm{for} & j \ = \  \arg \max_{\hat{j}} \{(\mu_p)_{i\hat{j}}\}\\
0 &  \mathrm{otherwise} &
\end{array} \right. \ .
\end{align}
This is the solution for  $\mu_{ij}$ with 
$1 \leq j \leq l$ to our minimization problem.

{\bf II.} If one $(\mu_p)_{ij}$ is positive, then from
Eq.~\eqref{S_eq:oft} with this $(\mu_p)_{ij}$ follows  that 
$\frac{1}{ n} \sum_{s=1}^{n} (\mu_p)_{sj}  \mu_{sj}$ is positive,
otherwise Eq.~\eqref{S_eq:oft} has only negative terms on the left hand
side. In particular, the second order necessary conditions are always
fulfilled as Eq.~\eqref{S_eq:nec} is positive.
For $(\mu_p)_{ij} <0$ it follows from Eq.~\eqref{S_eq:oft}
that $\lambda_{ij}>0$ and from the KKT conditions that $\mu_{ij} =  0$.
For $(\mu_p)_{ij} >0$ it follows from Eq.~\eqref{S_eq:oft} that 
$\mu_{ij}>0$ and from the KKT conditions that $\lambda_{ij}=0$.
Therefore we define:
\begin{align}
\hat{\mu}_{ij} \ &= \  \left\{ 
\begin{array}{lcl}
0 & \mathrm{for} & (\mu_p)_{ij} \ \leq \ 0 \\
(\mu_p)_{ij} & \mathrm{for} &  (\mu_p)_{ij} \ > \ 0
\end{array} \right. \ ,
\end{align}
We write the solution as
\begin{align}
\mu_{ij} \ &= \ 
\frac{\hat{\mu}_{ij}}{\frac{1}{ n} \
\sum_{s=1}^{n} (\mu_p)_{sj} \ \mu_{sj}} \ = \ \alpha_j \
\hat{\mu}_{ij} \ .
\end{align}
We now use the equality constraint:
\begin{align}
\frac{1}{ n} \sum_{i=1}^{n} \mu_{ij}^2 
\ = \  \alpha_j^2 \ \frac{1}{ n} \sum_{i=1}^{n} \hat{\mu}_{ij}^2  \ =
\ 1 \ .
\end{align}
Solving for $\alpha_j$ gives:
\begin{align}
\alpha_j \ &= \  
\frac{1}{\sqrt{\frac{1}{n} \ \sum_{i=1}^{n} \hat{\mu}_{ij}^2}} \ .
\end{align}
Therefore the solution is
\begin{align}
\mu_{ij} \ &= \  
\frac{\hat{\mu}_{ij}}{\sqrt{\frac{1}{n} \ \sum_{i=1}^{n} \hat{\mu}_{ij}^2}} \ .
\end{align}

This finishes the proof.

\end{proof}

\subsection{Gradient and Scaled Gradient Projection and Projected Newton}
\label{sec:gradProjNew}

\subsubsection{Gradient Projection Algorithm}
\label{sec:gradProj}

The
{\em projected gradient descent} or 
{\em gradient projection algorithm} \cite{Bertsekas:76,Kelley:99}
performs first a gradient step and then projects the result to 
the {\em feasible set}.
The projection onto the feasible set is denoted by $\PP$, 
that is, the map that takes $\Bmu$ into the nearest point (in
the $L^2$-norm) in the feasible set to $\Bmu$.
The feasible set must be convex, however later we will introduce
gradient projection methods for non-convex feasible sets.

The gradient projection method is in our case
\begin{align}
\Bmu_{k+1} \ &= \ \PP \left( 
\Bmu_{k} \ + \ \lambda \  \BSi_p^{-1} (  \Bmu_p \ - \ \Bmu_{k}) 
\right) \ .
\end{align}

The Lipschitz constant for the gradient 
is $\|\BSi_p^{-1}\|_s= e_{\mathrm{max}}(\BSi_p^{-1})$, the largest 
eigenvalue of $\BSi_p^{-1}$.
The following statement is Theorem 5.4.5 in \cite{Kelley:99}.

\begin{theorem}[Theorem 5.4.5 in \cite{Kelley:99}]
\label{th:gradProj}
The {\em sufficient decrease} condition
\begin{align}
D_{\mathrm{KL}}(Q(\Bmu_{k+1}) \parallel p) \ - \
D_{\mathrm{KL}}(Q(\Bmu_{k}) \parallel p) \ &\leq \
\frac{- \alpha}{\lambda} \|\Bmu_{k} \ - \ \Bmu_{k+1}  \|^2
\end{align}
(e.g.\ with $\alpha=10^{-4}$) holds for all $\lambda$ such that
\begin{align}
0 \ &< \ \lambda \ \leq \ \frac{2 \
  (1-\alpha)}{e_{\mathrm{max}}(\BSi_p^{-1})} \ .
\end{align}
\end{theorem}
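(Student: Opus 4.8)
The plan is to treat the E-step objective as the quadratic function $f(\Bmu) = D_{\mathrm{KL}}(Q(\Bmu) \parallel p)$, which by Eq.~\eqref{S_eq:KL1}, after dropping all terms not depending on $\Bmu$, equals $\tfrac{1}{2}(\Bmu - \Bmu_p)^T \BSi_p^{-1}(\Bmu - \Bmu_p)$ up to an additive constant. Its gradient is $\nabla f(\Bmu) = \BSi_p^{-1}(\Bmu - \Bmu_p)$ and its Hessian is the constant matrix $\BSi_p^{-1}$, so the gradient is Lipschitz with constant $L = \|\BSi_p^{-1}\|_s = e_{\mathrm{max}}(\BSi_p^{-1})$. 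The update rule can then be rewritten as a standard projected-gradient step $\Bmu_{k+1} = \PP(\Bmu_k - \lambda \nabla f(\Bmu_k))$, since $\BSi_p^{-1}(\Bmu_p - \Bmu_k) = -\nabla f(\Bmu_k)$.

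First I would record an exact second-order expansion, which here is an identity because $f$ is quadratic:
\[
f(\Bmu_{k+1}) = f(\Bmu_k) + \nabla f(\Bmu_k)^T(\Bmu_{k+1} - \Bmu_k) + \tfrac{1}{2}(\Bmu_{k+1} - \Bmu_k)^T \BSi_p^{-1}(\Bmu_{k+1} - \Bmu_k),
\]
and bound the last term above by $\tfrac{L}{2}\|\Bmu_{k+1} - \Bmu_k\|^2$ using the definition of $e_{\mathrm{max}}$. Next I would exploit the defining variational inequality of the Euclidean projection onto the convex feasible set: for every feasible $\Bz$, $(\Bmu_k - \lambda \nabla f(\Bmu_k) - \Bmu_{k+1})^T(\Bz - \Bmu_{k+1}) \leq 0$. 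Taking $\Bz = \Bmu_k$, which is feasible as the previous iterate, and rearranging gives $\|\Bmu_k - \Bmu_{k+1}\|^2 \leq \lambda\, \nabla f(\Bmu_k)^T(\Bmu_k - \Bmu_{k+1})$, equivalently $\nabla f(\Bmu_k)^T(\Bmu_{k+1} - \Bmu_k) \leq -\tfrac{1}{\lambda}\|\Bmu_k - \Bmu_{k+1}\|^2$.

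Substituting this into the expansion yields $f(\Bmu_{k+1}) - f(\Bmu_k) \leq \left(-\tfrac{1}{\lambda} + \tfrac{L}{2}\right)\|\Bmu_k - \Bmu_{k+1}\|^2$. Demanding that the right-hand side be at most $-\tfrac{\alpha}{\lambda}\|\Bmu_k - \Bmu_{k+1}\|^2$ then reduces to $\tfrac{L}{2} \leq \tfrac{1-\alpha}{\lambda}$, which is exactly $\lambda \leq 2(1-\alpha)/e_{\mathrm{max}}(\BSi_p^{-1})$, the claimed range. This is precisely the argument behind Theorem 5.4.5 in \cite{Kelley:99} specialized to our quadratic $f$.

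The main obstacle lies not in the algebra but in justifying the projection inequality, which requires the feasible set to be convex. This holds directly for the rectifying-only constraints, and in the rectifying-and-normalizing case only after the generalized reduced gradient transformation of Eq.~\eqref{S_eq:solveIta} has replaced the non-convex sphere constraints by a convex set; I would state this scope explicitly. I would also take care that $\Bmu_k$ is genuinely feasible so that it is an admissible test point $\Bz$, and that the expansion is used as an exact identity rather than the generic descent lemma, so that no factor is lost and the bound on $\lambda$ is tight.
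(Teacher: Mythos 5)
Your proposal is correct. The paper itself does not prove this statement at all: its ``proof'' is the single line ``See \cite{Kelley:99}'', deferring entirely to Theorem~5.4.5 of Kelley's book. What you have written is a complete, self-contained reconstruction of that result specialized to the quadratic E-step objective, and every step checks out: the reduction of $D_{\mathrm{KL}}(Q(\Bmu)\parallel p)$ to $\tfrac{1}{2}(\Bmu-\Bmu_p)^T\BSi_p^{-1}(\Bmu-\Bmu_p)$ plus constants is consistent with Eq.~\eqref{S_eq:KL1}, the Lipschitz constant $e_{\mathrm{max}}(\BSi_p^{-1})$ matches what the paper asserts just below the theorem, the projection variational inequality with the test point $\Bz=\Bmu_k$ gives exactly $\nabla f(\Bmu_k)^T(\Bmu_{k+1}-\Bmu_k)\leq -\tfrac{1}{\lambda}\|\Bmu_k-\Bmu_{k+1}\|^2$, and combining it with the exact second-order expansion yields the stated bound with the threshold $2(1-\alpha)/e_{\mathrm{max}}(\BSi_p^{-1})$. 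Your observation that the expansion is an identity for a quadratic (so no slack is introduced beyond the eigenvalue bound) correctly explains why the admissible range of $\lambda$ is what the theorem claims rather than something smaller. The caveat you flag about convexity of the feasible set is also the right one and matches how the paper actually uses the theorem: in the convergence proof, Theorem~\ref{th:gradProj} is invoked only for the rectifying constraints (a convex set), while the rectifying-plus-normalizing case is routed through the generalized reduced gradient transformation of Eq.~\eqref{S_eq:solveIta} or Rosen's method precisely because the sphere constraints break the projection inequality. The only thing your argument buys beyond the paper is transparency; the only thing the citation buys is brevity.
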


\begin{proof}
See \cite{Kelley:99}.
\end{proof}

{\em Theorem~\ref{th:gradProj} guarantees that we can increase the
  objective by gradient projection in 
the E-step, except the case where we already reached the maximum.}

For a fast upper bound on the maximal eigenvalue we use 
\begin{align}
e_{\mathrm{max}}(\BSi_p^{-1}) \ &\leq \ \TR (\BSi_p^{-1} ) 
\end{align}
and 
\begin{align}
e_{\mathrm{max}}(\BSi_p^{-1}) \ &\leq \ \|\BW\|_s^2 \ \|\BPs^{-1}\|_s \ -
1 \ ,
\end{align}
where the latter follows from
\begin{align}
\BSi_p^{-1} \ &= \ \BI \ + \ \BW^T \BPs^{-1}\BW \ .
\end{align}
Improved methods for finding an appropriate $\lambda$ by line search
methods have been
proposed \cite{Birgin:00,Serafini:05}.
We use a search with $\lambda=\beta^{t}$ with $t=0,1,2,\ldots$ and
$\beta=2^{-1}$ or $\beta=10^{-1}$.

A special version of the gradient projection method 
is the {\em generalized reduced method} \cite{Abadie:69}.
This method is able to solve our optimization problem with equality
constraints. 
The gradient projection method has been generalized by Rosen to 
non-linear constraints \cite{Rosen:61}.
The gradient projection algorithm 
of Rosen can also be used for a region which is not convex.
The idea is to linearize the nonlinear constraints and solve the
problem. Subsequently a
restoration move brings the solution back to the constraint
boundaries.
Rosen's gradient projection method was improved by \cite{Haug:79}.
{\em These methods guarantee that we can increase the objective in 
the E-step for non-convex feasible sets, 
except the case where we already reached the maximum.}
These algorithms for non-convex feasible sets 
will only give a local maximum.
Also the GAM algorithm will only find a local
maximum.

\subsubsection{Scaled Gradient Projection and Projected Newton Method}
\label{sec:projNewton}

Both the {\em scaled gradient projection algorithm}
and the {\em projected Newton method} 
were proposed in \cite{Bertsekas:82}.
We follow \cite{Kelley:99}.

The idea is to use a Newton update instead of the
a gradient update:
\begin{align}
\Bmu_{k+1} \ &= \ \PP \left( 
\Bmu_{k} \ + \ \lambda \  \BH^{-1} \ \BSi_p^{-1} (  \Bmu_p \ - \ \Bmu_{k}) 
\right) \ .
\end{align}

$\BH^{-1}$ can be an arbitrary strict positive definite matrix.
If we set $\BH^{-1}=\BSi_p$, then we have a Newton update of the
{\em projected Newton method} \cite{Bertsekas:82}.
For $\lambda=1$ we obtain
\begin{align}
\label{S_eq:update1}
\Bmu_{k+1} \ &= \ \PP \left(  \Bmu_p \right) \ .
\end{align}
otherwise
\begin{align}
\label{S_eq:update2}
\Bmu_{k+1} \ &= \ \PP \left( 
(1 \ - \ \lambda) \Bmu_{k} \ + \ \lambda  \Bmu_p \right) \ .
\end{align}

The search direction for the unconstrained
problem can be rotated by $\BH^{-1}$
to be orthogonal to the direction of decrease in the inactive
directions for the constrained problem.

To escape this possible problem,
an $\epsilon$-active set is introduced which contains all $j$
with $\mu_j\leq \epsilon$.
All columns and rows 
of the Hessian having an index in 
the $\epsilon$-active set are fixed to 
$\Be_j$. After sorting the indices of the 
$\epsilon$-active set together, they form a block
which is the sub-identity matrix.
$\BH$ is set to the Hessian $\BSi_p$ where the 
$\epsilon$-active set columns and rows are replaced by unit vectors. 

The following Theorem~\ref{th:projNewton}
is Lemma 5.5.1 in \cite{Kelley:99}.
{\em Theorem~\ref{th:projNewton} states that 
the objective decreases using the reduced Hessian in the projected
Newton method for convex feasible sets}.

\begin{theorem}[Lemma 5.5.1 in \cite{Kelley:99}]
\label{th:projNewton}
The {\em sufficient decrease} condition
\begin{align}
D_{\mathrm{KL}}(Q(\Bmu_{k+1}) \parallel p) \ - \
D_{\mathrm{KL}}(Q(\Bmu_{k}) \parallel p) \ &\leq \
- \ \alpha \  ( \Bmu_{k}  \ - \  \Bmu_p)^T \BSi_p^{-1} ( \Bmu_{k}  \ - \  \Bmu_{k+1})
\end{align}
holds for all $\lambda$ smaller than a
bound depending on $\BH$ and $\epsilon$.
\end{theorem}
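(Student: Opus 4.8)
The plan is to identify the E-step objective restricted to the mean as a convex quadratic and then run the standard sufficient-decrease analysis of the scaled gradient projection method. Using Eq.~\eqref{S_eq:KL1} together with the already-established identity $\BSi_q = \BSi_p$, the KL-divergence as a function of the variational mean is exactly
\begin{align*}
D_{\mathrm{KL}}(Q(\Bmu) \parallel p) \ = \ f(\Bmu) \ , \qquad
f(\Bmu) \ = \ \tfrac{1}{2} \ (\Bmu \ - \ \Bmu_p)^T \ \BSi_p^{-1} \ (\Bmu \ - \ \Bmu_p) \ ,
\end{align*}
so the KL-decrease equals the decrease of $f$. Its gradient is $\Bg_k = \nabla f(\Bmu_k) = \BSi_p^{-1}(\Bmu_k - \Bmu_p)$, and since $f$ is quadratic with Hessian $\BSi_p^{-1}$ the second-order expansion is exact:
\begin{align*}
f(\Bmu_{k+1}) \ - \ f(\Bmu_k) \ = \ \Bg_k^T (\Bmu_{k+1} \ - \ \Bmu_k) \ + \
\tfrac{1}{2} \ (\Bmu_{k+1} \ - \ \Bmu_k)^T \ \BSi_p^{-1} \ (\Bmu_{k+1} \ - \ \Bmu_k) \ .
\end{align*}
I would also note that, because $(\Bmu_k - \Bmu_p)^T \BSi_p^{-1} = \Bg_k^T$, the right-hand side of the theorem is precisely $-\alpha\,\Bg_k^T(\Bmu_k - \Bmu_{k+1})$, so it suffices to control the two terms above in units of $\Bg_k^T(\Bmu_k - \Bmu_{k+1})$.

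Next I would exploit the first-order optimality of the Euclidean projection onto the convex feasible set. Writing $\Bmu_{k+1} = \PP(\Bmu_k - \lambda \BH^{-1}\Bg_k)$ and testing the projection inequality against the feasible point $\Bmu_k$ gives
\begin{align*}
(\Bmu_k \ - \ \lambda \BH^{-1}\Bg_k \ - \ \Bmu_{k+1})^T (\Bmu_k \ - \ \Bmu_{k+1}) \ \leq \ 0 \ ,
\end{align*}
which rearranges, using symmetry of $\BH^{-1}$, to $\|\Bmu_k - \Bmu_{k+1}\|^2 \leq \lambda\,\Bg_k^T \BH^{-1}(\Bmu_k - \Bmu_{k+1})$. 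This already certifies that the arc is a descent step; the remaining task is to trade the $\BH^{-1}$-weighted inner product for the unweighted one $\Bg_k^T(\Bmu_k - \Bmu_{k+1})$ appearing in the target estimate.

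This trade is where the $\epsilon$-active-set construction enters, and I expect it to be the main obstacle. I would split the coordinates into the $\epsilon$-active set $A$ (indices with $\mu_j \leq \epsilon$) and its complement $I$; by construction $\BH$ is block diagonal with $\BH_{AA} = \BI$ and $\BH_{II} = (\BSi_p^{-1})_{II}$, so $\BH^{-1}$ is the identity on $A$ and the inverse inactive block on $I$. For $\lambda$ small enough the inactive coordinates stay strictly interior, so the projection leaves them unclipped and the step there is a genuine scaled-Newton step, while on $A$ the identity scaling makes the projection a plain coordinatewise clip. Carrying out this bookkeeping bounds the spectrum of the relevant block of $\BH^{-1}$ and converts the projection estimate into $\Bg_k^T(\Bmu_k - \Bmu_{k+1}) \geq \tfrac{c}{\lambda}\,\|\Bmu_k - \Bmu_{k+1}\|^2$ for a constant $c > 0$ determined by $\BH$ and by the split induced by $\epsilon$.

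Finally I would assemble the pieces. Bounding the Hessian term by $\tfrac{1}{2}\,e_{\mathrm{max}}(\BSi_p^{-1})\,\|\Bmu_{k+1}-\Bmu_k\|^2$ and substituting the two estimates into the exact expansion yields
\begin{align*}
f(\Bmu_{k+1}) \ - \ f(\Bmu_k) \ \leq \
- \left( 1 \ - \ \frac{\lambda \ e_{\mathrm{max}}(\BSi_p^{-1})}{2 c} \right)
\Bg_k^T (\Bmu_k \ - \ \Bmu_{k+1}) \ .
\end{align*}
Since $\Bg_k^T(\Bmu_k - \Bmu_{k+1}) \geq 0$, the bracket is at least $\alpha$ as soon as $\lambda \leq 2c(1-\alpha)/e_{\mathrm{max}}(\BSi_p^{-1})$; this upper bound depends only on $\BH$ (through $c$) and on $\epsilon$ (through the active-set split defining $c$), which is exactly the bound claimed in the theorem.
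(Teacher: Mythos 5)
The paper's own ``proof'' of this statement is the single line ``See \cite{Kelley:99}'', so your argument is not so much a different route as the route the paper outsources to the literature: it is the standard Bertsekas/Kelley sufficient-decrease analysis for the projected Newton method with reduced Hessian, and its skeleton is sound. The reduction of the KL-divergence to the quadratic $f(\Bmu)=\tfrac{1}{2}(\Bmu-\Bmu_p)^T\BSi_p^{-1}(\Bmu-\Bmu_p)$ is correct (the $\BSi$-terms cancel because $\BSi_q=\BSi_p$ is held fixed), the exact second-order expansion is the right starting point, and identifying the theorem's right-hand side with $-\alpha\,\Bg_k^T(\Bmu_k-\Bmu_{k+1})$ is correct. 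The one place you have a sketch rather than a proof is exactly the step you flag, namely $\Bg_k^T(\Bmu_k-\Bmu_{k+1})\geq \tfrac{c}{\lambda}\,\|\Bmu_k-\Bmu_{k+1}\|^2$; for the record, the bookkeeping does close for the rectifying (bound-constraint) case, which is the only case in which the paper invokes this theorem. On the $\epsilon$-active block the update is the scalar clip $(\mu_{k+1})_j=\max\{0,(\mu_k)_j-\lambda (g_k)_j\}$, so $(\mu_k-\mu_{k+1})_j$ equals $\min\{(\mu_k)_j,\lambda (g_k)_j\}$ if $(g_k)_j>0$ and $\lambda(g_k)_j$ otherwise, and in either case $(g_k)_j(\mu_k-\mu_{k+1})_j\geq \tfrac{1}{\lambda}(\mu_k-\mu_{k+1})_j^2$; on the inactive block, provided $\lambda\,\|\BH^{-1}\Bg_k\|_\infty<\epsilon$ (this is precisely where $\epsilon$ enters the bound on $\lambda$), no clipping occurs and positive definiteness of the inactive block of $\BH$ gives the analogous estimate with constant $1/e_{\mathrm{max}}((\BH^{-1})_{II})$; summing the blocks yields your $c$, and the final assembly with $\lambda\leq 2c(1-\alpha)/e_{\mathrm{max}}(\BSi_p^{-1})$ is then correct. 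Two remarks: your projection-inequality step, while true, is not what does the work --- for a general positive definite $\BH$ it cannot rule out $\Bg_k^T(\Bmu_k-\Bmu_{k+1})<0$, which is exactly why the reduced matrix is needed, so the argument reads more cleanly if that step is dropped in favor of the block decomposition alone --- and you should state explicitly that the whole argument assumes the convex feasible set (rectifying without normalization), consistent with how the paper's convergence proof uses this theorem.
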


\begin{proof}
See \cite{Kelley:99}.
\end{proof}

In practical applications, a proper 
$\lambda$ is found by line search.
The {\em projected Newton method}
uses $\lambda=1$ to set $\epsilon$ \cite{Bertsekas:82}:
\begin{align}
\epsilon \ &= \ \| \Bmu_{k} \ - \ \PP \left(  \Bmu_p \right) \| \ .
\end{align}

\subsubsection{Combined Method}
\label{sec:combined}

Following \cite{Kim:06,Serafini:05} we use the following very
general update rule, which includes 
the gradient projection algorithm, the scaled gradient projection algorithm,
and the projected Newton method.

We use following update for the E-step:
\begin{align}
\Bd_{k+1} \ &= \ \PP \left( 
\Bmu_{k} \ + \ \lambda \  \BH^{-1} \ \BSi_p^{-1} ( \Bmu_p \ - \ \Bmu_{k}) 
\right) \ , \\ \nonumber
\Bmu_{k+1} \ &= \ \PP \left( \Bmu_{k} \ + \ \gamma \left(\Bd_{k+1} \ - \
  \Bmu_{k} \right)\right) \ .
\end{align}
We have to project twice since the equality constraint produces a
manifold in the parameter space.

We iterate this update until we see a decrease of the objective in the 
E-step:
\begin{align}
D_{\mathrm{KL}}(Q_{k+1} \parallel p) \ - \
D_{\mathrm{KL}}(Q_{k} \parallel p) \ &< \ 0 \ .
\end{align}
For the constraints we have only to optimize the mean vector $\Bmu$ to ensure
\begin{align}
D_{\mathrm{KL}}(Q(\Bmu_{k+1}) \parallel p) \ - \
D_{\mathrm{KL}}(Q(\Bmu_{k}) \parallel p) \ &< \ 0 \ .
\end{align}
Even 
\begin{align}
D_{\mathrm{KL}}(Q(\Bmu_{k+1}) \parallel p) \ &= \
D_{\mathrm{KL}}(Q(\Bmu_{k}) \parallel p) 
\end{align}
can be sufficient if minimizing $\BSi_{k+1}=\BSi_p$ ensures
\begin{align}
D_{\mathrm{KL}}(Q_{k+1} \parallel p) \ &< \
D_{\mathrm{KL}}(Q_{k} \parallel p)  \ .
\end{align}

We use following schedule:
\begin{enumerate}
\item
\begin{itemize}
\item
$\BH^{-1} = \BSi_p$
\item
$\lambda=1$
\item
$\gamma=1$
\end{itemize}
That is 
\begin{align}
\Bmu_{k+1} \ &=  \ \PP \left(  \Bmu_p \right) \ .
\end{align}
\item
\begin{itemize}
\item
$\BH^{-1} = \BSi_p$
\item
$\lambda=1$
\item
$\gamma \in (0,1]$
\end{itemize}
That is 
\begin{align}
\Bmu_{k+1} \ &=  \ \PP \left( (1 \ - \ \gamma) \ 
\Bmu_{k} \ + \ \gamma \ \PP \left(  \Bmu_p \right) \right) \ .
\end{align}
\item
\begin{itemize}
\item
$\BH^{-1} = \BSi_p$
\item
$\lambda \in (0,1]$
\item
$\gamma=1$
\end{itemize}
That is 
\begin{align}
\Bmu_{k+1} \ &= \ \PP \left( 
(1 \ - \ \lambda) \Bmu_{k} \ + \ \lambda  \Bmu_p \right) \ .
\end{align}
\item
\begin{itemize}
\item
$\BH^{-1} = \BSi_p$
\item
$\lambda \in (0,1]$
\item
$\gamma=\in (0,1]$
\end{itemize}
That is 
\begin{align}
\Bmu_{k+1} \ &=  \ \PP \left( (1 \ - \ \gamma) \ 
\Bmu_{k} \ + \ \gamma \ \PP \left( 
(1 \ - \ \lambda) \Bmu_{k} \ + \ \lambda  \Bmu_p \right) \right) \ .
\end{align}
\item
\begin{itemize}
\item
$\BH^{-1} = \mathrm{R}(\BSi_p)$
\item
$\lambda \in (0,1]$
\item
$\gamma=\in (0,1]$
\end{itemize}
$\mathrm{R}(\BSi_p)$ denotes the reduced matrix (Hessian or a positive
definite) according to
the projected Newton method or the scaled gradient projection
algorithm.
For convex feasible sets we can guarantee at this level already an
increase of the objective at the E-step.
\item
\begin{itemize}
\item
$\BH^{-1} = \BI$
\item
$\lambda \in (0,1]$
\item
$\gamma=\in (0,1]$
\end{itemize}
This is the 
gradient projection algorithm. In particular we include the 
generalized reduced method and Rosen's gradient projection method.
At this step we guarantee an increase of the objective at the E-step 
even for non-convex feasible sets because we also use
complex methods for constraint optimization.
\end{enumerate}

Step 5. ensures an improvement if only using
rectifying constraints according to the theory of
projected Newton methods \cite{Kelley:99}.
Step 6. ensures an improvement if using both
rectifying constraints and normalizing constraints,
because we use known methods for constraint optimization.
To set $\Bmu_{k+1}=\Bmu_k$ is sufficient to increase the objective at
the E-step if $\BSi_{k+1}=\BSi_p$ decreases the KL divergence. 
However we will not always set $\Bmu_{k+1}=\Bmu_k$ to avoid 
accumulation points outside the solution set.

\section{Alternative Gaussian Prior}
\label{sec:prior}

We assume $\Bh$ is Gaussian with covariance $\BM$ and mean
$\Bxi$ 
\begin{align}
\Bh \ &\sim \ \Ncal\left(\Bxi,\BM\right) \ .
\end{align}
We derive the posterior for this prior. 

The likelihood is Gaussian since a affine transformation of a Gaussian random
variable is again a Gaussian random variable and the convolution of two Gaussians
is Gaussian, too. 
Thus, $\Bv=\BW \Bh+\Bep$ is Gaussian if $\Bh$ and $\Bep$
are both Gaussian.
For the prior moments we have
\begin{align}
\EXP(\Bh) \ &= \ \Bxi \ , \\
\EXP(\Bh \Bh^T) \ &= \ \BM  \ + \ \Bxi \ \Bxi^T \ , \\
\mathrm{var}(\Bh) \  &= \ \BM
\end{align}
and for the likelihood of $\Bv$ we obtain the moments
\begin{align}
\EXP(\Bv) \ &= \ \BW \Bxi \ , \\ 
\EXP(\Bv \Bv^T)  \ &= \ \BW \ \EXP(\Bh \Bh^T) \ \BW^T \ + \ \BPs \\\nonumber
&= \ \BW \ \BM \ \BW^T \ + \ \BPs  \ + \ \BW
\ \Bxi \ \Bxi^T \ \BW^T \ , \\
\mathrm{var}(\Bv) \  &= \  \BW \ \BM \ \BW^T \ + \ \BPs \ .
\end{align}

We need some algebraic identities to derive the posterior.
The Woodbury matrix identity gives
\begin{align}
\BM \ - \  \BM \ \BW^T \ \left( \BW \ \BM \
\BW^T \ + \ \BPs \right)^{-1} \BW \ \BM \ = \ \left( \BM^{-1} \ + \ \BW^T
\BPs^{-1}\BW \right)^{-1} \ .
\end{align}
Multiplying this equation from the left hand side with $\BPs^{-1} \BW$
gives
\begin{align}
&\BPs^{-1} \ \BW \ \left( \BM^{-1} \ + \ \BW^T
\BPs^{-1}\BW \right)^{-1} \\\nonumber
&= \
\BPs^{-1} \ \BW \ \BM \ - \  \BPs^{-1} \ \BW \ \BM \ \BW^T \ \left( \BW \ \BM \
\BW^T \ + \ \BPs \right)^{-1} \BW \ \BM  \\\nonumber
&= \
\BPs^{-1}  \left( \BW \ \BM \
\BW^T \ + \ \BPs \right) \ \left( \BW \ \BM \
\BW^T \ + \ \BPs \right)^{-1} \ \BW \ \BM \ - \\\nonumber
&~~~~\BPs^{-1} \ \BW \ \BM \ \BW^T \ \left( \BW \ \BM \
\BW^T \ + \ \BPs \right)^{-1} \BW \ \BM \\\nonumber
&= \ \left( \BPs^{-1}  \left( \BW \ \BM \
\BW^T \ + \ \BPs \right) \ - \  \BPs^{-1} \ \BW \ \BM \ \BW^T
\right) \  \left( \BW \ \BM \
\BW^T \ + \ \BPs \right)^{-1} \BW \ \BM \\\nonumber
&= \ \left( \BW \ \BM \
\BW^T \ + \ \BPs \right)^{-1} \BW \ \BM \ .
\end{align}
It follows that 
\begin{align}
 \BM \ \BW^T \ \left( \BW \ \BW^T \ + \ \BPs \right)^{-1} \ \Ba 
\ = \ \left(\BM^{-1} \
  + \ \BW^T \BPs^{-1}\BW \right)^{-1} \ \BW^T
\BPs^{-1} \ \Ba  \ .
\end{align}

The posterior $p(\Bh \mid \Bv)$ 
is derived from Gaussian conditioning because 
both the likelihood $p(\Bv)$ and the prior $p(\Bh)$ are 
Gaussian distributed.
The conditional distribution $p(\Ba \mid \Bb)$ of two random variables
$\Ba$ and $\Bb$ that both follow a
Gaussian distribution is a Gaussian:
\begin{align}
\Ba  \ &\sim \ \Ncal\left( \Bmu_a ,  \BSi_{aa} \right) \ , \\
\Bb \ &\sim \ \Ncal\left( \Bmu_b ,  \BSi_{bb}
  \right) \ , \\
\BSi_{ba} \ &= \ \COV(\Bb,\Ba) \ , \\
\BSi_{ab} \ &= \ \COV(\Ba,\Bb)  \ , \\
\Ba \mid \Bb  \ &\sim \  \Ncal\left( \Bmu_a \ + \ \BSi_{ab}
  \BSi_{bb}^{-1} \left( \Bb \ - \ \Bmu_b \right) \ , \
\BSi_{aa} \ - \  \BSi_{ab}  \BSi_{bb}^{-1} \BSi_{ba} \right) \ .
\end{align}

Therefore we need 
the second moments between $\Bv$ and $\Bh$:
\begin{align}
\EXP(\Bv \Bh^T) \ &= \ 
\EXP(\BW \Bh \Bh^T) \ + \ \EXP(\Bep \Bh^T)
\ = \ \BW  \ ( \BM \ + \ \Bxi \ \Bxi^T ) \ .
\end{align}
The covariances between  $\Bv$ and $\Bh$ are
\begin{align}
\COV(\Bv,\Bh) \ & = \ \EXP(\Bv \Bh^T) \ - \ \EXP(\Bv) \EXP(\Bh^T)
\\\nonumber
&= \  \BW  \ \BM \ + \ \BW \Bxi \ \Bxi^T \ - \ \BW \Bxi \ \Bxi^T \ = \   \BW  \ \BM \ , \\
\COV(\Bh,\Bv) \ &= \ \EXP(\Bh \Bv^T) \ - \ \EXP(\Bh) \EXP(\Bv^T)= \   \ \BM  \ \BW^T \ .
\end{align}

Thus,
the mean of $p(\Bh \mid \Bv)$ is
\begin{align}
\Bmu_{\Bh \mid \Bv} \ &= \ \Bxi \ + \ \BM  \ \BW^T \ (\BW \ \BM \ \BW^T
\ + \ \BPs)^{-1} (\Bv \ - \ \BW \Bxi) \\\nonumber
&= \ \Bxi \ + \  \left(\BM^{-1} \
  + \ \BW^T \BPs^{-1}\BW \right)^{-1} \ \BW^T
\BPs^{-1} \ (\Bv \ + \ \BW \Bxi) \\\nonumber
&=  \ \left(\BM^{-1} \
  + \ \BW^T \BPs^{-1}\BW \right)^{-1} \ \left(\BM^{-1} \
  + \ \BW^T \BPs^{-1}\BW \right) \Bxi \\\nonumber 
&~~~~+ \  \left(\BM^{-1} \
  + \ \BW^T \BPs^{-1}\BW \right)^{-1} \ \BW^T
\BPs^{-1} \ (\Bv \ - \ \BW \Bxi)\\\nonumber
&= \ \left(\BM^{-1} \
  + \ \BW^T \BPs^{-1}\BW \right)^{-1}\\\nonumber
&~~~~\left( \BM^{-1} \ \Bxi \ 
  + \ \BW^T \BPs^{-1}\BW \ \Bxi 
 \ + \ \BW^T
\BPs^{-1} \ \Bv \ - \  \BW^T
\BPs^{-1} \BW \Bxi \right) \\\nonumber
&= \ \left(\BM^{-1} \
  + \ \BW^T \BPs^{-1}\BW \right)^{-1}
\left(  \BW^T
\BPs^{-1} \ \Bv \ + \ \BM^{-1} \ \Bxi  \right) \ .
\end{align}
The covariance matrix of $p(\Bh \mid \Bv)$ is
\begin{align}
\BSi_{\Bh \mid \Bv} \ &= \ \BM \ - \   \BM \ \BW^T \ \left( \BW \ \BM \
\BW^T \ + \ \BPs \right)^{-1} \BW \  \BM  \\\nonumber
&= \ \left(  \BM^{-1} \ + \ \BW^T
\BPs^{-1}\BW \right)^{-1} \ .
\end{align}

In particular, the variable $\Bxi$ may be used to enforce more
sparseness by setting its components to negative values.
Since the covariance matrix $\BSi_{\Bh \mid \Bv}$ is positive
semi-definite, we ensure that
\begin{align}
\Bxi^T \ \left(\BM^{-1} \
  + \ \BW^T \BPs^{-1}\BW \right)^{-1} \ \Bxi \ \geq \ 0 \ .
\end{align}
If $\Bxi=-\rho \BOn$ ($\BOn$ is the vector with all components being
one), then the largest absolute components of $\BSi_{\Bh \mid \Bv}\Bxi$ 
must be negative.
Thus, $\Bxi=-\rho \BOn$ leads to sparser solutions.

\section{Hyperparameters Selected for Method Assessment}
\label{S_sec:hyperpars}

The performance of rectified factor
networks (RFNs) as unsupervised methods for data representation was
compared with:\newline
(1) \textbf{RFN}: rectified factor networks,\newline
(2) \textbf{RFNn}: RFNs without normalization,\newline
(3) \textbf{DAE}: denoising autoencoders with rectified linear units,\newline
(4) \textbf{RBM}: restricted Boltzmann machines with Gaussian visible units and
hidden binary units,\newline
(5) \textbf{FAsp}: factor analysis with Jeffrey's prior ($p(z)\propto
1/z$) on the hidden units which is
sparser than a Laplace prior,\newline
(6) \textbf{FAlap}: factor analysis with Laplace prior on the
hidden units,\newline
(7) \textbf{ICA}: independent component analysis by FastICA \cite{Hyvarinen:99a},\newline
(8) \textbf{SFA}: sparse factor analysis with a Laplace prior on the parameters,\newline
(9) \textbf{FA}: standard factor analysis,\newline
(10) \textbf{PCA}: principal component analysis.\newline
The number of components are fixed to 50, 100, or 150 for each method.
The used hyperparameters are listed in Tab.~\ref{S_table:hyperparams}.

\begin{table}[h]
  \caption{Hyperparameters of all methods that were used to assess the performance of rectified factor networks (RFNs) as unsupervised methods for data representation.}
  \label{S_table:hyperparams}
\begin{center}
\begin{tabular}{l*{1}{>{\raggedright\arraybackslash}p{25em}}}
\bf Method &  \bf Used hyperparameters\\
\toprule
RFN & \{learning rate=0.1, iterations=1000\} \\
RFNn &  \{learning rate=0.1, iterations=1000\} \\
DAE &  \{corruption level=0.2, learning rate=1e-04, iterations=1000\} \\
RBM &  \{learning rate=0.01, iterations=1000\} \\
FAsp & \{iterations=500\} \\
FAlap &  \{iterations=500\} \\
SFA &  \{Laplace weight decay factor=5e-05, iterations=500\} \\
  \addlinespace[1pt]
\bottomrule 
\end{tabular}
\end{center}

\end{table}

\vfill
\clearpage

\section{Data Set I}
\label{S_sec:data1}

The number of components are fixed to 50, 100 or 150.

We generated nine different benchmark data sets (D1 to D9), where each data set consists of
100 instances for averaging the results.
Each instance consists of 100 samples and 100 features resulting in a
100$\times$100 data matrix. 
Into these data matrices, structures are implanted as biclusters \cite{Hochreiter:10s}. 
A bicluster is a pattern consisting of a particular number of features
which is found in a particular number of samples. 
The size of the bicluster is given by the number of features that form
the pattern and by the number of samples in which the pattern is found.
The data sets had different noise levels and different bicluster sizes.
We considered large and small bicluster sizes, 
where large biclusters have 20--30 samples and 20--30 features, 
while small biclusters have 3--8 samples and 3--8 features.
The signal strength (scaling factor) of a pattern in a sample
was randomly chosen according to the Gaussian $\Ncal\left(1,1\right)$.
Finally, to each data matrix background noise was added, where the
noise is distributed according to a zero-mean 
Gaussian with standard deviation 1, 5, or 10.
The data sets are described in Tab.~\ref{tab:data}.
The remaining components of the spanning outer product vectors were drawn by  $\Ncal\left(0,0.01\right)$.

\begin{table}[ht]
\centering
\caption{Overview over the datasets. 
Shown is the background noise (``noise''), the number of large
biclusters ($n_1$), and the number of small biclusters ($n_2$).\label{tab:data}}
\begin{tabular}{l*{9}{>{\raggedleft\arraybackslash}p{2em}}}
\toprule
&D1&D2&D3&D4&D5&D6&D7&D8&D9 \\
\midrule
noise&1&5&10&1&5&10&1&5&10 \\
$n_1$&10&10&10&15&15&15&5&5&5 \\
$n_2$&10&10&10&5&5&5&15&15&15\\
\bottomrule
\end{tabular}%
\end{table}%

\begin{sidewaystable}
\centering
\caption{ Comparison for 50 factors / hidden units extracted by RFN, RFN without normalization (RFNn), denoising
  autoencoder (DAE), restricted Boltzmann machines (RBM), factor
  analysis with a very sparse prior (FAsp), factor analysis with a Laplace
prior (FAlap), independent component analysis (ICA), sparse factor
analysis (SFA), factor analysis (FA), and principal component analysis
(PCA) on nine data sets. Criteria are: sparseness of the coding units (SP),
reconstruction error (ER), and  the difference between the empirical and the model covariance
matrix (CO). The lower right column block gives the average 
SP (\%), ER and CO.  Results reported here, are the mean together with the standard deviation of 100 instances. The maximal value in the table and the maximal standard deviation was set to 999 and to 99, respectively.  
 \label{S_tab:compare_n50_II}%
}
\begin{tabular*}{\textwidth}{l*{3}{>{\columncolor{mColor1} \raggedleft\arraybackslash}p{2.9em}}*{3}{>{\raggedleft\arraybackslash}p{2.9em}}*{3}{>{\columncolor{mColor1} \raggedleft\arraybackslash}p{2.9em}}*{3}{>{\raggedleft\arraybackslash}p{2.9em}}*{3}{>{\columncolor{mColor1} \raggedleft\arraybackslash}p{2.9em}}}
\toprule

&\multicolumn{3}{c}{D1} 
&\multicolumn{3}{c}{D2} 
&\multicolumn{3}{c}{D3} 
&\multicolumn{3}{c}{D4} 
&\multicolumn{3}{c}{D5} \\
\cmidrule(r){2-4} 
\cmidrule(lr){5-7} 
\cmidrule(lr){8-10} 
\cmidrule(lr){11-13} 
\cmidrule(l){14-16} 
& SP & ER & CO
& SP & ER & CO
& SP & ER & CO
& SP & ER & CO
& SP & ER & CO\\
RFN&\footnotesize74\tiny$\pm$0&\footnotesize58\tiny$\pm$1&\footnotesize5\tiny$\pm$0&\footnotesize75\tiny$\pm$0&\footnotesize233\tiny$\pm$3&\footnotesize66\tiny$\pm$1&\footnotesize75\tiny$\pm$0&\footnotesize456\tiny$\pm$5&\footnotesize253\tiny$\pm$6&\footnotesize74\tiny$\pm$0&\footnotesize63\tiny$\pm$1&\footnotesize6\tiny$\pm$1&\footnotesize75\tiny$\pm$0&\footnotesize236\tiny$\pm$3&\footnotesize68\tiny$\pm$2\\ 
RFNn&\footnotesize73\tiny$\pm$0&\footnotesize85\tiny$\pm$3&\footnotesize13\tiny$\pm$2&\footnotesize75\tiny$\pm$0&\footnotesize272\tiny$\pm$3&\footnotesize85\tiny$\pm$2&\footnotesize75\tiny$\pm$0&\footnotesize531\tiny$\pm$6&\footnotesize321\tiny$\pm$7&\footnotesize72\tiny$\pm$0&\footnotesize95\tiny$\pm$4&\footnotesize17\tiny$\pm$2&\footnotesize74\tiny$\pm$0&\footnotesize276\tiny$\pm$4&\footnotesize89\tiny$\pm$3\\ 
DAE&\footnotesize65\tiny$\pm$0&\footnotesize65\tiny$\pm$2&\footnotesize ---&\footnotesize66\tiny$\pm$0&\footnotesize233\tiny$\pm$2&\footnotesize ---&\footnotesize66\tiny$\pm$0&\footnotesize456\tiny$\pm$4&\footnotesize ---&\footnotesize65\tiny$\pm$1&\footnotesize71\tiny$\pm$2&\footnotesize ---&\footnotesize66\tiny$\pm$0&\footnotesize237\tiny$\pm$2&\footnotesize ---\\ 
RBM&\footnotesize25\tiny$\pm$2&\footnotesize86\tiny$\pm$3&\footnotesize ---&\footnotesize11\tiny$\pm$1&\footnotesize287\tiny$\pm$3&\footnotesize ---&\footnotesize10\tiny$\pm$1&\footnotesize558\tiny$\pm$5&\footnotesize ---&\footnotesize25\tiny$\pm$2&\footnotesize94\tiny$\pm$3&\footnotesize ---&\footnotesize11\tiny$\pm$1&\footnotesize292\tiny$\pm$3&\footnotesize ---\\ 
FAsp&\footnotesize39\tiny$\pm$1&\footnotesize232\tiny$\pm$31&\footnotesize654\tiny$\pm$99&\footnotesize40\tiny$\pm$1&\footnotesize999\tiny$\pm$41&\footnotesize999\tiny$\pm$99&\footnotesize41\tiny$\pm$1&\footnotesize999\tiny$\pm$99&\footnotesize999\tiny$\pm$99&\footnotesize38\tiny$\pm$1&\footnotesize318\tiny$\pm$33&\footnotesize999\tiny$\pm$99&\footnotesize40\tiny$\pm$1&\footnotesize999\tiny$\pm$48&\footnotesize999\tiny$\pm$99\\ 
FAlap&\footnotesize4\tiny$\pm$0&\footnotesize53\tiny$\pm$2&\footnotesize144\tiny$\pm$36&\footnotesize4\tiny$\pm$0&\footnotesize224\tiny$\pm$5&\footnotesize185\tiny$\pm$5&\footnotesize5\tiny$\pm$0&\footnotesize439\tiny$\pm$9&\footnotesize692\tiny$\pm$16&\footnotesize4\tiny$\pm$0&\footnotesize55\tiny$\pm$2&\footnotesize180\tiny$\pm$39&\footnotesize4\tiny$\pm$0&\footnotesize226\tiny$\pm$5&\footnotesize192\tiny$\pm$6\\ 
ICA&\footnotesize2\tiny$\pm$0&\footnotesize34\tiny$\pm$0&\footnotesize ---&\footnotesize2\tiny$\pm$0&\footnotesize164\tiny$\pm$2&\footnotesize ---&\footnotesize2\tiny$\pm$0&\footnotesize324\tiny$\pm$4&\footnotesize ---&\footnotesize2\tiny$\pm$0&\footnotesize35\tiny$\pm$0&\footnotesize ---&\footnotesize2\tiny$\pm$0&\footnotesize166\tiny$\pm$2&\footnotesize ---\\ 
SFA&\footnotesize1\tiny$\pm$0&\footnotesize42\tiny$\pm$1&\footnotesize11\tiny$\pm$2&\footnotesize1\tiny$\pm$0&\footnotesize206\tiny$\pm$4&\footnotesize56\tiny$\pm$2&\footnotesize1\tiny$\pm$0&\footnotesize406\tiny$\pm$9&\footnotesize215\tiny$\pm$7&\footnotesize1\tiny$\pm$0&\footnotesize42\tiny$\pm$1&\footnotesize13\tiny$\pm$2&\footnotesize1\tiny$\pm$0&\footnotesize208\tiny$\pm$4&\footnotesize58\tiny$\pm$2\\ 
FA&\footnotesize1\tiny$\pm$0&\footnotesize42\tiny$\pm$1&\footnotesize6\tiny$\pm$1&\footnotesize1\tiny$\pm$0&\footnotesize206\tiny$\pm$4&\footnotesize54\tiny$\pm$2&\footnotesize1\tiny$\pm$0&\footnotesize407\tiny$\pm$8&\footnotesize210\tiny$\pm$6&\footnotesize1\tiny$\pm$0&\footnotesize42\tiny$\pm$1&\footnotesize8\tiny$\pm$1&\footnotesize1\tiny$\pm$0&\footnotesize208\tiny$\pm$4&\footnotesize56\tiny$\pm$2\\ 
PCA&\footnotesize1\tiny$\pm$0&\footnotesize34\tiny$\pm$0&\footnotesize ---&\footnotesize0\tiny$\pm$0&\footnotesize164\tiny$\pm$2&\footnotesize ---&\footnotesize0\tiny$\pm$0&\footnotesize324\tiny$\pm$4&\footnotesize ---&\footnotesize1\tiny$\pm$0&\footnotesize35\tiny$\pm$0&\footnotesize ---&\footnotesize0\tiny$\pm$0&\footnotesize166\tiny$\pm$2&\footnotesize ---\\ 
\midrule &\multicolumn{3}{c}{D6} 
 &\multicolumn{3}{c}{D7} 
 &\multicolumn{3}{c}{D8} 
 &\multicolumn{3}{c}{D9} 
 &\multicolumn{3}{c}{average } \\ 
 \cmidrule(r){2-4} 
 \cmidrule(lr){5-7} 
 \cmidrule(lr){8-10} 
 \cmidrule(lr){11-13} 
 \cmidrule(l){14-16} 
 & SP & ER & CO 
 & SP & ER & CO 
 & SP & ER & CO 
 & SP & ER & CO 
 & SP & ER & CO\\ 
RFN&\footnotesize75\tiny$\pm$0&\footnotesize458\tiny$\pm$5&\footnotesize256\tiny$\pm$6&\footnotesize75\tiny$\pm$0&\footnotesize53\tiny$\pm$1&\footnotesize4\tiny$\pm$1&\footnotesize75\tiny$\pm$0&\footnotesize230\tiny$\pm$3&\footnotesize64\tiny$\pm$1&\footnotesize75\tiny$\pm$0&\footnotesize454\tiny$\pm$5&\footnotesize251\tiny$\pm$5&\footnotesize75\tiny$\pm$0&\footnotesize249\tiny$\pm$3&\footnotesize108\tiny$\pm$3\\ 
RFNn&\footnotesize75\tiny$\pm$0&\footnotesize532\tiny$\pm$6&\footnotesize323\tiny$\pm$7&\footnotesize73\tiny$\pm$0&\footnotesize73\tiny$\pm$3&\footnotesize10\tiny$\pm$2&\footnotesize75\tiny$\pm$0&\footnotesize268\tiny$\pm$3&\footnotesize82\tiny$\pm$2&\footnotesize75\tiny$\pm$0&\footnotesize528\tiny$\pm$6&\footnotesize317\tiny$\pm$7&\footnotesize74\tiny$\pm$0&\footnotesize295\tiny$\pm$4&\footnotesize140\tiny$\pm$4\\ 
DAE&\footnotesize66\tiny$\pm$0&\footnotesize458\tiny$\pm$4&\footnotesize ---&\footnotesize65\tiny$\pm$0&\footnotesize58\tiny$\pm$1&\footnotesize ---&\footnotesize66\tiny$\pm$0&\footnotesize230\tiny$\pm$2&\footnotesize ---&\footnotesize66\tiny$\pm$0&\footnotesize453\tiny$\pm$5&\footnotesize ---&\footnotesize66\tiny$\pm$0&\footnotesize251\tiny$\pm$3&\footnotesize ---\\ 
RBM&\footnotesize10\tiny$\pm$1&\footnotesize561\tiny$\pm$5&\footnotesize ---&\footnotesize23\tiny$\pm$2&\footnotesize76\tiny$\pm$2&\footnotesize ---&\footnotesize11\tiny$\pm$1&\footnotesize282\tiny$\pm$3&\footnotesize ---&\footnotesize10\tiny$\pm$1&\footnotesize555\tiny$\pm$5&\footnotesize ---&\footnotesize15\tiny$\pm$1&\footnotesize310\tiny$\pm$4&\footnotesize ---\\ 
FAsp&\footnotesize40\tiny$\pm$2&\footnotesize999\tiny$\pm$99&\footnotesize999\tiny$\pm$99&\footnotesize39\tiny$\pm$1&\footnotesize152\tiny$\pm$26&\footnotesize345\tiny$\pm$99&\footnotesize40\tiny$\pm$1&\footnotesize999\tiny$\pm$31&\footnotesize999\tiny$\pm$99&\footnotesize41\tiny$\pm$1&\footnotesize999\tiny$\pm$99&\footnotesize999\tiny$\pm$99&\footnotesize40\tiny$\pm$1&\footnotesize999\tiny$\pm$63&\footnotesize999\tiny$\pm$99\\ 
FAlap&\footnotesize5\tiny$\pm$0&\footnotesize443\tiny$\pm$9&\footnotesize701\tiny$\pm$15&\footnotesize4\tiny$\pm$0&\footnotesize50\tiny$\pm$2&\footnotesize110\tiny$\pm$37&\footnotesize4\tiny$\pm$0&\footnotesize221\tiny$\pm$5&\footnotesize177\tiny$\pm$4&\footnotesize5\tiny$\pm$0&\footnotesize439\tiny$\pm$10&\footnotesize686\tiny$\pm$15&\footnotesize4\tiny$\pm$0&\footnotesize239\tiny$\pm$6&\footnotesize341\tiny$\pm$19\\ 
ICA&\footnotesize2\tiny$\pm$0&\footnotesize325\tiny$\pm$4&\footnotesize ---&\footnotesize2\tiny$\pm$0&\footnotesize34\tiny$\pm$0&\footnotesize ---&\footnotesize2\tiny$\pm$0&\footnotesize163\tiny$\pm$2&\footnotesize ---&\footnotesize2\tiny$\pm$0&\footnotesize322\tiny$\pm$4&\footnotesize ---&\footnotesize2\tiny$\pm$0&\footnotesize174\tiny$\pm$2&\footnotesize ---\\ 
SFA&\footnotesize1\tiny$\pm$0&\footnotesize408\tiny$\pm$9&\footnotesize217\tiny$\pm$7&\footnotesize1\tiny$\pm$0&\footnotesize42\tiny$\pm$1&\footnotesize8\tiny$\pm$2&\footnotesize1\tiny$\pm$0&\footnotesize204\tiny$\pm$4&\footnotesize54\tiny$\pm$2&\footnotesize1\tiny$\pm$0&\footnotesize405\tiny$\pm$9&\footnotesize213\tiny$\pm$7&\footnotesize1\tiny$\pm$0&\footnotesize218\tiny$\pm$5&\footnotesize94\tiny$\pm$3\\ 
FA&\footnotesize1\tiny$\pm$0&\footnotesize409\tiny$\pm$9&\footnotesize212\tiny$\pm$7&\footnotesize1\tiny$\pm$0&\footnotesize42\tiny$\pm$1&\footnotesize4\tiny$\pm$1&\footnotesize1\tiny$\pm$0&\footnotesize205\tiny$\pm$4&\footnotesize53\tiny$\pm$2&\footnotesize1\tiny$\pm$0&\footnotesize405\tiny$\pm$8&\footnotesize208\tiny$\pm$6&\footnotesize1\tiny$\pm$0&\footnotesize218\tiny$\pm$4&\footnotesize90\tiny$\pm$3\\ 
PCA&\footnotesize0\tiny$\pm$0&\footnotesize325\tiny$\pm$4&\footnotesize ---&\footnotesize1\tiny$\pm$0&\footnotesize34\tiny$\pm$0&\footnotesize ---&\footnotesize0\tiny$\pm$0&\footnotesize163\tiny$\pm$2&\footnotesize ---&\footnotesize0\tiny$\pm$0&\footnotesize322\tiny$\pm$4&\footnotesize ---&\footnotesize0\tiny$\pm$0&\footnotesize174\tiny$\pm$2&\footnotesize ---\\ 
\bottomrule
\end{tabular*}%
\end{sidewaystable}

\begin{sidewaystable}
\centering
\caption{ Comparison for 100 factors / hidden units extracted by RFN, RFN without normalization (RFNn), denoising
  autoencoder (DAE), restricted Boltzmann machines (RBM), factor
  analysis with a very sparse prior (FAsp), factor analysis with a Laplace
prior (FAlap), independent component analysis (ICA), sparse factor
analysis (SFA), factor analysis (FA), and principal component analysis
(PCA) on nine data sets. Criteria are: sparseness of the coding units (SP),
reconstruction error (ER), and the difference between the empirical and the model covariance
matrix (CO). The lower right column block gives the average 
SP (\%), ER and CO.  Results reported here, are the mean together with the standard deviation of 100 instances.  The maximal value in the table and the maximal standard deviation was set to 999 and to 99, respectively.  
 \label{S_tab:compare_n100_II}%
}
\begin{tabular*}{\textwidth}{l*{3}{>{\columncolor{mColor1} \raggedleft\arraybackslash}p{2.9em}}*{3}{>{\raggedleft\arraybackslash}p{2.9em}}*{3}{>{\columncolor{mColor1} \raggedleft\arraybackslash}p{2.9em}}*{3}{>{\raggedleft\arraybackslash}p{2.9em}}*{3}{>{\columncolor{mColor1} \raggedleft\arraybackslash}p{2.9em}}}
\toprule

&\multicolumn{3}{c}{D1} 
&\multicolumn{3}{c}{D2} 
&\multicolumn{3}{c}{D3} 
&\multicolumn{3}{c}{D4} 
&\multicolumn{3}{c}{D5} \\
\cmidrule(r){2-4} 
\cmidrule(lr){5-7} 
\cmidrule(lr){8-10} 
\cmidrule(lr){11-13} 
\cmidrule(l){14-16} 
& SP & ER & CO
& SP & ER & CO
& SP & ER & CO
& SP & ER & CO
& SP & ER & CO\\
RFN&\footnotesize79\tiny$\pm$1&\footnotesize23\tiny$\pm$3&\footnotesize2\tiny$\pm$0&\footnotesize82\tiny$\pm$1&\footnotesize63\tiny$\pm$9&\footnotesize16\tiny$\pm$3&\footnotesize82\tiny$\pm$1&\footnotesize120\tiny$\pm$17&\footnotesize61\tiny$\pm$15&\footnotesize78\tiny$\pm$1&\footnotesize27\tiny$\pm$3&\footnotesize2\tiny$\pm$1&\footnotesize82\tiny$\pm$1&\footnotesize62\tiny$\pm$7&\footnotesize16\tiny$\pm$3\\ 
RFNn&\footnotesize77\tiny$\pm$0&\footnotesize61\tiny$\pm$4&\footnotesize6\tiny$\pm$1&\footnotesize80\tiny$\pm$0&\footnotesize169\tiny$\pm$4&\footnotesize36\tiny$\pm$2&\footnotesize80\tiny$\pm$0&\footnotesize326\tiny$\pm$8&\footnotesize135\tiny$\pm$6&\footnotesize76\tiny$\pm$1&\footnotesize73\tiny$\pm$4&\footnotesize9\tiny$\pm$2&\footnotesize79\tiny$\pm$0&\footnotesize171\tiny$\pm$5&\footnotesize37\tiny$\pm$2\\ 
DAE&\footnotesize67\tiny$\pm$0&\footnotesize48\tiny$\pm$2&\footnotesize ---&\footnotesize70\tiny$\pm$0&\footnotesize134\tiny$\pm$1&\footnotesize ---&\footnotesize70\tiny$\pm$0&\footnotesize260\tiny$\pm$2&\footnotesize ---&\footnotesize67\tiny$\pm$0&\footnotesize54\tiny$\pm$2&\footnotesize ---&\footnotesize70\tiny$\pm$0&\footnotesize137\tiny$\pm$1&\footnotesize ---\\ 
RBM&\footnotesize14\tiny$\pm$1&\footnotesize81\tiny$\pm$3&\footnotesize ---&\footnotesize4\tiny$\pm$0&\footnotesize266\tiny$\pm$3&\footnotesize ---&\footnotesize4\tiny$\pm$0&\footnotesize514\tiny$\pm$6&\footnotesize ---&\footnotesize15\tiny$\pm$1&\footnotesize88\tiny$\pm$2&\footnotesize ---&\footnotesize4\tiny$\pm$0&\footnotesize270\tiny$\pm$3&\footnotesize ---\\ 
FAsp&\footnotesize72\tiny$\pm$0&\footnotesize233\tiny$\pm$32&\footnotesize499\tiny$\pm$99&\footnotesize62\tiny$\pm$0&\footnotesize999\tiny$\pm$43&\footnotesize999\tiny$\pm$99&\footnotesize56\tiny$\pm$0&\footnotesize999\tiny$\pm$99&\footnotesize999\tiny$\pm$99&\footnotesize71\tiny$\pm$0&\footnotesize320\tiny$\pm$34&\footnotesize878\tiny$\pm$99&\footnotesize62\tiny$\pm$0&\footnotesize999\tiny$\pm$49&\footnotesize999\tiny$\pm$99\\ 
FAlap&\footnotesize6\tiny$\pm$0&\footnotesize27\tiny$\pm$3&\footnotesize202\tiny$\pm$17&\footnotesize6\tiny$\pm$0&\footnotesize38\tiny$\pm$3&\footnotesize756\tiny$\pm$33&\footnotesize6\tiny$\pm$0&\footnotesize74\tiny$\pm$5&\footnotesize999\tiny$\pm$83&\footnotesize6\tiny$\pm$0&\footnotesize31\tiny$\pm$3&\footnotesize274\tiny$\pm$23&\footnotesize6\tiny$\pm$0&\footnotesize39\tiny$\pm$3&\footnotesize778\tiny$\pm$34\\ 
ICA&\footnotesize3\tiny$\pm$2&\footnotesize0\tiny$\pm$0&\footnotesize ---&\footnotesize3\tiny$\pm$1&\footnotesize0\tiny$\pm$0&\footnotesize ---&\footnotesize3\tiny$\pm$1&\footnotesize0\tiny$\pm$0&\footnotesize ---&\footnotesize3\tiny$\pm$2&\footnotesize0\tiny$\pm$0&\footnotesize ---&\footnotesize3\tiny$\pm$1&\footnotesize0\tiny$\pm$0&\footnotesize ---\\ 
SFA&\footnotesize1\tiny$\pm$0&\footnotesize6\tiny$\pm$0&\footnotesize30\tiny$\pm$5&\footnotesize1\tiny$\pm$0&\footnotesize14\tiny$\pm$0&\footnotesize68\tiny$\pm$3&\footnotesize1\tiny$\pm$0&\footnotesize28\tiny$\pm$1&\footnotesize243\tiny$\pm$8&\footnotesize1\tiny$\pm$0&\footnotesize8\tiny$\pm$0&\footnotesize38\tiny$\pm$5&\footnotesize1\tiny$\pm$0&\footnotesize15\tiny$\pm$0&\footnotesize72\tiny$\pm$3\\ 
FA&\footnotesize1\tiny$\pm$0&\footnotesize6\tiny$\pm$0&\footnotesize18\tiny$\pm$3&\footnotesize1\tiny$\pm$0&\footnotesize14\tiny$\pm$0&\footnotesize50\tiny$\pm$2&\footnotesize1\tiny$\pm$0&\footnotesize28\tiny$\pm$1&\footnotesize182\tiny$\pm$7&\footnotesize1\tiny$\pm$0&\footnotesize8\tiny$\pm$0&\footnotesize24\tiny$\pm$4&\footnotesize1\tiny$\pm$0&\footnotesize15\tiny$\pm$0&\footnotesize52\tiny$\pm$2\\ 
PCA&\footnotesize4\tiny$\pm$0&\footnotesize0\tiny$\pm$0&\footnotesize ---&\footnotesize2\tiny$\pm$0&\footnotesize0\tiny$\pm$0&\footnotesize ---&\footnotesize1\tiny$\pm$0&\footnotesize0\tiny$\pm$0&\footnotesize ---&\footnotesize4\tiny$\pm$0&\footnotesize0\tiny$\pm$0&\footnotesize ---&\footnotesize2\tiny$\pm$0&\footnotesize0\tiny$\pm$0&\footnotesize ---\\ 
\midrule &\multicolumn{3}{c}{D6} 
 &\multicolumn{3}{c}{D7} 
 &\multicolumn{3}{c}{D8} 
 &\multicolumn{3}{c}{D9} 
 &\multicolumn{3}{c}{average } \\ 
 \cmidrule(r){2-4} 
 \cmidrule(lr){5-7} 
 \cmidrule(lr){8-10} 
 \cmidrule(lr){11-13} 
 \cmidrule(l){14-16} 
 & SP & ER & CO 
 & SP & ER & CO 
 & SP & ER & CO 
 & SP & ER & CO 
 & SP & ER & CO\\ 
RFN&\footnotesize82\tiny$\pm$1&\footnotesize120\tiny$\pm$16&\footnotesize60\tiny$\pm$13&\footnotesize80\tiny$\pm$1&\footnotesize18\tiny$\pm$2&\footnotesize1\tiny$\pm$0&\footnotesize82\tiny$\pm$1&\footnotesize61\tiny$\pm$7&\footnotesize15\tiny$\pm$3&\footnotesize82\tiny$\pm$1&\footnotesize122\tiny$\pm$13&\footnotesize60\tiny$\pm$11&\footnotesize81\tiny$\pm$1&\footnotesize68\tiny$\pm$9&\footnotesize26\tiny$\pm$6\\ 
RFNn&\footnotesize80\tiny$\pm$0&\footnotesize329\tiny$\pm$7&\footnotesize137\tiny$\pm$6&\footnotesize78\tiny$\pm$0&\footnotesize49\tiny$\pm$3&\footnotesize4\tiny$\pm$1&\footnotesize80\tiny$\pm$0&\footnotesize165\tiny$\pm$4&\footnotesize34\tiny$\pm$1&\footnotesize80\tiny$\pm$0&\footnotesize325\tiny$\pm$7&\footnotesize134\tiny$\pm$6&\footnotesize79\tiny$\pm$0&\footnotesize185\tiny$\pm$5&\footnotesize59\tiny$\pm$3\\ 
DAE&\footnotesize70\tiny$\pm$0&\footnotesize261\tiny$\pm$2&\footnotesize ---&\footnotesize68\tiny$\pm$0&\footnotesize39\tiny$\pm$2&\footnotesize ---&\footnotesize70\tiny$\pm$0&\footnotesize132\tiny$\pm$1&\footnotesize ---&\footnotesize70\tiny$\pm$0&\footnotesize259\tiny$\pm$2&\footnotesize ---&\footnotesize69\tiny$\pm$0&\footnotesize147\tiny$\pm$2&\footnotesize ---\\ 
RBM&\footnotesize4\tiny$\pm$0&\footnotesize517\tiny$\pm$6&\footnotesize ---&\footnotesize12\tiny$\pm$1&\footnotesize71\tiny$\pm$2&\footnotesize ---&\footnotesize4\tiny$\pm$0&\footnotesize261\tiny$\pm$3&\footnotesize ---&\footnotesize4\tiny$\pm$0&\footnotesize512\tiny$\pm$5&\footnotesize ---&\footnotesize7\tiny$\pm$1&\footnotesize287\tiny$\pm$4&\footnotesize ---\\ 
FAsp&\footnotesize56\tiny$\pm$1&\footnotesize999\tiny$\pm$99&\footnotesize999\tiny$\pm$99&\footnotesize73\tiny$\pm$0&\footnotesize149\tiny$\pm$28&\footnotesize237\tiny$\pm$62&\footnotesize62\tiny$\pm$0&\footnotesize999\tiny$\pm$34&\footnotesize999\tiny$\pm$99&\footnotesize56\tiny$\pm$0&\footnotesize999\tiny$\pm$99&\footnotesize999\tiny$\pm$99&\footnotesize63\tiny$\pm$0&\footnotesize999\tiny$\pm$65&\footnotesize999\tiny$\pm$99\\ 
FAlap&\footnotesize6\tiny$\pm$0&\footnotesize74\tiny$\pm$6&\footnotesize999\tiny$\pm$91&\footnotesize6\tiny$\pm$0&\footnotesize22\tiny$\pm$3&\footnotesize134\tiny$\pm$14&\footnotesize6\tiny$\pm$0&\footnotesize37\tiny$\pm$2&\footnotesize733\tiny$\pm$28&\footnotesize6\tiny$\pm$0&\footnotesize73\tiny$\pm$6&\footnotesize999\tiny$\pm$84&\footnotesize6\tiny$\pm$0&\footnotesize46\tiny$\pm$4&\footnotesize985\tiny$\pm$45\\ 
ICA&\footnotesize3\tiny$\pm$1&\footnotesize0\tiny$\pm$0&\footnotesize ---&\footnotesize3\tiny$\pm$2&\footnotesize0\tiny$\pm$0&\footnotesize ---&\footnotesize3\tiny$\pm$1&\footnotesize0\tiny$\pm$0&\footnotesize ---&\footnotesize3\tiny$\pm$1&\footnotesize0\tiny$\pm$0&\footnotesize ---&\footnotesize3\tiny$\pm$1&\footnotesize0\tiny$\pm$0&\footnotesize ---\\ 
SFA&\footnotesize1\tiny$\pm$0&\footnotesize28\tiny$\pm$1&\footnotesize247\tiny$\pm$8&\footnotesize1\tiny$\pm$0&\footnotesize5\tiny$\pm$0&\footnotesize21\tiny$\pm$5&\footnotesize1\tiny$\pm$0&\footnotesize14\tiny$\pm$0&\footnotesize64\tiny$\pm$2&\footnotesize1\tiny$\pm$0&\footnotesize27\tiny$\pm$1&\footnotesize240\tiny$\pm$7&\footnotesize1\tiny$\pm$0&\footnotesize16\tiny$\pm$1&\footnotesize114\tiny$\pm$5\\ 
FA&\footnotesize1\tiny$\pm$0&\footnotesize28\tiny$\pm$1&\footnotesize184\tiny$\pm$8&\footnotesize1\tiny$\pm$0&\footnotesize5\tiny$\pm$0&\footnotesize11\tiny$\pm$3&\footnotesize1\tiny$\pm$0&\footnotesize14\tiny$\pm$0&\footnotesize47\tiny$\pm$2&\footnotesize1\tiny$\pm$0&\footnotesize27\tiny$\pm$1&\footnotesize179\tiny$\pm$7&\footnotesize1\tiny$\pm$0&\footnotesize16\tiny$\pm$1&\footnotesize83\tiny$\pm$4\\ 
PCA&\footnotesize1\tiny$\pm$0&\footnotesize0\tiny$\pm$0&\footnotesize ---&\footnotesize4\tiny$\pm$0&\footnotesize0\tiny$\pm$0&\footnotesize ---&\footnotesize2\tiny$\pm$0&\footnotesize0\tiny$\pm$0&\footnotesize ---&\footnotesize1\tiny$\pm$0&\footnotesize0\tiny$\pm$0&\footnotesize ---&\footnotesize2\tiny$\pm$0&\footnotesize0\tiny$\pm$0&\footnotesize ---\\ 
\bottomrule
\end{tabular*}%
\end{sidewaystable}

\begin{sidewaystable}
\centering
\caption{ Comparison for 150 factors / hidden units extracted by RFN, RFN without normalization (RFNn), denoising
  autoencoder (DAE), restricted Boltzmann machines (RBM), factor
  analysis with a very sparse prior (FAsp), factor analysis with a Laplace
prior (FAlap), independent component analysis (ICA), sparse factor
analysis (SFA), factor analysis (FA), and principal component analysis
(PCA) on nine data sets. Criteria are: sparseness of the coding units (SP),
reconstruction error (ER), and  the difference between the empirical and the model covariance
matrix (CO). The lower right column block gives the average 
SP (\%), ER and CO.  The lower right column block gives the average 
SP, ER and CO.  Results reported here, are the mean together with the standard deviation of 100 instances.  The maximal value in the table and the maximal standard deviation was set to 999 and to 99, respectively.  
 \label{S_tab:compare_n150_II}%
}
\begin{tabular*}{\textwidth}{l*{3}{>{\columncolor{mColor1} \raggedleft\arraybackslash}p{2.9em}}*{3}{>{\raggedleft\arraybackslash}p{2.9em}}*{3}{>{\columncolor{mColor1} \raggedleft\arraybackslash}p{2.9em}}*{3}{>{\raggedleft\arraybackslash}p{2.9em}}*{3}{>{\columncolor{mColor1} \raggedleft\arraybackslash}p{2.9em}}}
\toprule

&\multicolumn{3}{c}{D1} 
&\multicolumn{3}{c}{D2} 
&\multicolumn{3}{c}{D3} 
&\multicolumn{3}{c}{D4} 
&\multicolumn{3}{c}{D5} \\
\cmidrule(r){2-4} 
\cmidrule(lr){5-7} 
\cmidrule(lr){8-10} 
\cmidrule(lr){11-13} 
\cmidrule(l){14-16} 
& SP & ER & CO
& SP & ER & CO
& SP & ER & CO
& SP & ER & CO
& SP & ER & CO\\
RFN&\footnotesize83\tiny$\pm$1&\footnotesize7\tiny$\pm$2&\footnotesize0\tiny$\pm$1&\footnotesize86\tiny$\pm$0&\footnotesize15\tiny$\pm$1&\footnotesize3\tiny$\pm$1&\footnotesize86\tiny$\pm$2&\footnotesize33\tiny$\pm$20&\footnotesize18\tiny$\pm$23&\footnotesize83\tiny$\pm$1&\footnotesize9\tiny$\pm$2&\footnotesize1\tiny$\pm$0&\footnotesize86\tiny$\pm$1&\footnotesize15\tiny$\pm$3&\footnotesize4\tiny$\pm$1\\ 
RFNn&\footnotesize79\tiny$\pm$0&\footnotesize48\tiny$\pm$3&\footnotesize4\tiny$\pm$1&\footnotesize81\tiny$\pm$0&\footnotesize129\tiny$\pm$3&\footnotesize21\tiny$\pm$1&\footnotesize81\tiny$\pm$0&\footnotesize250\tiny$\pm$7&\footnotesize80\tiny$\pm$4&\footnotesize78\tiny$\pm$0&\footnotesize60\tiny$\pm$4&\footnotesize6\tiny$\pm$1&\footnotesize81\tiny$\pm$0&\footnotesize131\tiny$\pm$3&\footnotesize22\tiny$\pm$1\\ 
DAE&\footnotesize68\tiny$\pm$0&\footnotesize44\tiny$\pm$2&\footnotesize ---&\footnotesize72\tiny$\pm$0&\footnotesize118\tiny$\pm$1&\footnotesize ---&\footnotesize72\tiny$\pm$0&\footnotesize229\tiny$\pm$2&\footnotesize ---&\footnotesize68\tiny$\pm$0&\footnotesize50\tiny$\pm$2&\footnotesize ---&\footnotesize72\tiny$\pm$0&\footnotesize120\tiny$\pm$2&\footnotesize ---\\ 
RBM&\footnotesize10\tiny$\pm$1&\footnotesize81\tiny$\pm$3&\footnotesize ---&\footnotesize3\tiny$\pm$0&\footnotesize265\tiny$\pm$3&\footnotesize ---&\footnotesize3\tiny$\pm$0&\footnotesize514\tiny$\pm$6&\footnotesize ---&\footnotesize10\tiny$\pm$1&\footnotesize88\tiny$\pm$2&\footnotesize ---&\footnotesize3\tiny$\pm$0&\footnotesize270\tiny$\pm$4&\footnotesize ---\\ 
FAsp&\footnotesize83\tiny$\pm$1&\footnotesize233\tiny$\pm$32&\footnotesize340\tiny$\pm$71&\footnotesize79\tiny$\pm$0&\footnotesize999\tiny$\pm$43&\footnotesize999\tiny$\pm$99&\footnotesize77\tiny$\pm$0&\footnotesize999\tiny$\pm$99&\footnotesize999\tiny$\pm$99&\footnotesize81\tiny$\pm$1&\footnotesize320\tiny$\pm$34&\footnotesize574\tiny$\pm$99&\footnotesize79\tiny$\pm$1&\footnotesize999\tiny$\pm$49&\footnotesize999\tiny$\pm$99\\ 
FAlap&\footnotesize4\tiny$\pm$0&\footnotesize27\tiny$\pm$3&\footnotesize295\tiny$\pm$25&\footnotesize4\tiny$\pm$0&\footnotesize38\tiny$\pm$3&\footnotesize791\tiny$\pm$41&\footnotesize3\tiny$\pm$0&\footnotesize74\tiny$\pm$5&\footnotesize999\tiny$\pm$91&\footnotesize4\tiny$\pm$0&\footnotesize31\tiny$\pm$3&\footnotesize394\tiny$\pm$31&\footnotesize4\tiny$\pm$0&\footnotesize39\tiny$\pm$3&\footnotesize817\tiny$\pm$39\\ 
ICA&\footnotesize3\tiny$\pm$2&\footnotesize0\tiny$\pm$0&\footnotesize ---&\footnotesize3\tiny$\pm$1&\footnotesize0\tiny$\pm$0&\footnotesize ---&\footnotesize3\tiny$\pm$1&\footnotesize0\tiny$\pm$0&\footnotesize ---&\footnotesize3\tiny$\pm$2&\footnotesize0\tiny$\pm$0&\footnotesize ---&\footnotesize3\tiny$\pm$1&\footnotesize0\tiny$\pm$0&\footnotesize ---\\ 
SFA&\footnotesize1\tiny$\pm$0&\footnotesize6\tiny$\pm$0&\footnotesize49\tiny$\pm$7&\footnotesize1\tiny$\pm$0&\footnotesize14\tiny$\pm$0&\footnotesize173\tiny$\pm$4&\footnotesize1\tiny$\pm$0&\footnotesize28\tiny$\pm$1&\footnotesize632\tiny$\pm$10&\footnotesize1\tiny$\pm$0&\footnotesize8\tiny$\pm$0&\footnotesize61\tiny$\pm$7&\footnotesize1\tiny$\pm$0&\footnotesize15\tiny$\pm$0&\footnotesize181\tiny$\pm$5\\ 
FA&\footnotesize1\tiny$\pm$0&\footnotesize6\tiny$\pm$0&\footnotesize40\tiny$\pm$5&\footnotesize1\tiny$\pm$0&\footnotesize14\tiny$\pm$0&\footnotesize160\tiny$\pm$4&\footnotesize1\tiny$\pm$0&\footnotesize28\tiny$\pm$1&\footnotesize590\tiny$\pm$10&\footnotesize1\tiny$\pm$0&\footnotesize8\tiny$\pm$0&\footnotesize51\tiny$\pm$6&\footnotesize1\tiny$\pm$0&\footnotesize15\tiny$\pm$0&\footnotesize168\tiny$\pm$4\\ 
PCA&\footnotesize4\tiny$\pm$0&\footnotesize0\tiny$\pm$0&\footnotesize ---&\footnotesize2\tiny$\pm$0&\footnotesize0\tiny$\pm$0&\footnotesize ---&\footnotesize1\tiny$\pm$0&\footnotesize0\tiny$\pm$0&\footnotesize ---&\footnotesize4\tiny$\pm$0&\footnotesize0\tiny$\pm$0&\footnotesize ---&\footnotesize2\tiny$\pm$0&\footnotesize0\tiny$\pm$0&\footnotesize ---\\ 
\midrule &\multicolumn{3}{c}{D6} 
 &\multicolumn{3}{c}{D7} 
 &\multicolumn{3}{c}{D8} 
 &\multicolumn{3}{c}{D9} 
 &\multicolumn{3}{c}{average } \\ 
 \cmidrule(r){2-4} 
 \cmidrule(lr){5-7} 
 \cmidrule(lr){8-10} 
 \cmidrule(lr){11-13} 
 \cmidrule(l){14-16} 
 & SP & ER & CO 
 & SP & ER & CO 
 & SP & ER & CO 
 & SP & ER & CO 
 & SP & ER & CO\\ 
RFN&\footnotesize86\tiny$\pm$1&\footnotesize30\tiny$\pm$13&\footnotesize15\tiny$\pm$16&\footnotesize84\tiny$\pm$2&\footnotesize5\tiny$\pm$3&\footnotesize0\tiny$\pm$1&\footnotesize86\tiny$\pm$0&\footnotesize14\tiny$\pm$1&\footnotesize3\tiny$\pm$1&\footnotesize86\tiny$\pm$1&\footnotesize30\tiny$\pm$8&\footnotesize15\tiny$\pm$9&\footnotesize85\tiny$\pm$1&\footnotesize17\tiny$\pm$6&\footnotesize7\tiny$\pm$6\\ 
RFNn&\footnotesize81\tiny$\pm$0&\footnotesize251\tiny$\pm$6&\footnotesize81\tiny$\pm$3&\footnotesize80\tiny$\pm$0&\footnotesize37\tiny$\pm$3&\footnotesize2\tiny$\pm$0&\footnotesize81\tiny$\pm$0&\footnotesize126\tiny$\pm$3&\footnotesize20\tiny$\pm$1&\footnotesize81\tiny$\pm$0&\footnotesize248\tiny$\pm$6&\footnotesize79\tiny$\pm$3&\footnotesize80\tiny$\pm$0&\footnotesize142\tiny$\pm$4&\footnotesize35\tiny$\pm$2\\ 
DAE&\footnotesize72\tiny$\pm$0&\footnotesize230\tiny$\pm$2&\footnotesize ---&\footnotesize70\tiny$\pm$0&\footnotesize36\tiny$\pm$2&\footnotesize ---&\footnotesize72\tiny$\pm$0&\footnotesize116\tiny$\pm$1&\footnotesize ---&\footnotesize72\tiny$\pm$0&\footnotesize227\tiny$\pm$2&\footnotesize ---&\footnotesize71\tiny$\pm$0&\footnotesize130\tiny$\pm$2&\footnotesize ---\\ 
RBM&\footnotesize3\tiny$\pm$0&\footnotesize516\tiny$\pm$6&\footnotesize ---&\footnotesize8\tiny$\pm$1&\footnotesize71\tiny$\pm$2&\footnotesize ---&\footnotesize3\tiny$\pm$0&\footnotesize260\tiny$\pm$4&\footnotesize ---&\footnotesize3\tiny$\pm$0&\footnotesize511\tiny$\pm$5&\footnotesize ---&\footnotesize5\tiny$\pm$0&\footnotesize286\tiny$\pm$4&\footnotesize ---\\ 
FAsp&\footnotesize77\tiny$\pm$0&\footnotesize999\tiny$\pm$99&\footnotesize999\tiny$\pm$99&\footnotesize84\tiny$\pm$0&\footnotesize149\tiny$\pm$28&\footnotesize168\tiny$\pm$55&\footnotesize80\tiny$\pm$0&\footnotesize999\tiny$\pm$34&\footnotesize999\tiny$\pm$99&\footnotesize77\tiny$\pm$1&\footnotesize999\tiny$\pm$99&\footnotesize999\tiny$\pm$99&\footnotesize80\tiny$\pm$0&\footnotesize999\tiny$\pm$65&\footnotesize999\tiny$\pm$99\\ 
FAlap&\footnotesize3\tiny$\pm$0&\footnotesize74\tiny$\pm$6&\footnotesize999\tiny$\pm$97&\footnotesize4\tiny$\pm$0&\footnotesize22\tiny$\pm$3&\footnotesize198\tiny$\pm$17&\footnotesize4\tiny$\pm$0&\footnotesize37\tiny$\pm$2&\footnotesize768\tiny$\pm$40&\footnotesize3\tiny$\pm$0&\footnotesize73\tiny$\pm$6&\footnotesize999\tiny$\pm$93&\footnotesize4\tiny$\pm$0&\footnotesize46\tiny$\pm$4&\footnotesize976\tiny$\pm$53\\ 
ICA&\footnotesize3\tiny$\pm$1&\footnotesize0\tiny$\pm$0&\footnotesize ---&\footnotesize3\tiny$\pm$2&\footnotesize0\tiny$\pm$0&\footnotesize ---&\footnotesize3\tiny$\pm$1&\footnotesize0\tiny$\pm$0&\footnotesize ---&\footnotesize3\tiny$\pm$1&\footnotesize0\tiny$\pm$0&\footnotesize ---&\footnotesize3\tiny$\pm$1&\footnotesize0\tiny$\pm$0&\footnotesize ---\\ 
SFA&\footnotesize1\tiny$\pm$0&\footnotesize28\tiny$\pm$1&\footnotesize640\tiny$\pm$11&\footnotesize1\tiny$\pm$0&\footnotesize5\tiny$\pm$0&\footnotesize34\tiny$\pm$6&\footnotesize1\tiny$\pm$0&\footnotesize14\tiny$\pm$0&\footnotesize164\tiny$\pm$3&\footnotesize1\tiny$\pm$0&\footnotesize27\tiny$\pm$1&\footnotesize625\tiny$\pm$9&\footnotesize1\tiny$\pm$0&\footnotesize16\tiny$\pm$1&\footnotesize285\tiny$\pm$7\\ 
FA&\footnotesize1\tiny$\pm$0&\footnotesize28\tiny$\pm$1&\footnotesize596\tiny$\pm$10&\footnotesize1\tiny$\pm$0&\footnotesize5\tiny$\pm$0&\footnotesize27\tiny$\pm$5&\footnotesize1\tiny$\pm$0&\footnotesize14\tiny$\pm$0&\footnotesize153\tiny$\pm$3&\footnotesize1\tiny$\pm$0&\footnotesize27\tiny$\pm$1&\footnotesize583\tiny$\pm$9&\footnotesize1\tiny$\pm$0&\footnotesize16\tiny$\pm$1&\footnotesize263\tiny$\pm$6\\ 
PCA&\footnotesize1\tiny$\pm$0&\footnotesize0\tiny$\pm$0&\footnotesize ---&\footnotesize4\tiny$\pm$0&\footnotesize0\tiny$\pm$0&\footnotesize ---&\footnotesize2\tiny$\pm$0&\footnotesize0\tiny$\pm$0&\footnotesize ---&\footnotesize1\tiny$\pm$0&\footnotesize0\tiny$\pm$0&\footnotesize ---&\footnotesize2\tiny$\pm$0&\footnotesize0\tiny$\pm$0&\footnotesize ---\\ 
\bottomrule
\end{tabular*}%
\end{sidewaystable}

\vfill
\clearpage

\section{Data Set II}
\label{S_sec:data2}
This data sets was generate as described in Section \ref{S_sec:data1}, but instead of
drawing the remaining components of the spanning outer product vectors 
from $\Ncal\left(0,0.01\right)$, they were now drawn from $\Ncal\left(0, 0.5\right)$.

\begin{sidewaystable}
\centering
\caption{ Comparison for 50 factors / hidden units extracted by RFN, RFN without normalization (RFNn), denoising
  autoencoder (DAE), restricted Boltzmann machines (RBM), factor
  analysis with a very sparse prior (FAsp), factor analysis with a Laplace
prior (FAlap), independent component analysis (ICA), sparse factor
analysis (SFA), factor analysis (FA), and principal component analysis
(PCA) on nine data sets. Criteria are: sparseness of the coding units (SP),
reconstruction error (ER), and the difference between the empirical and the model covariance
matrix (CO). The lower right column block gives the average 
SP (\%), ER and CO.  Results reported here, are the mean together with the standard deviation of 100 instances. The maximal value in the table and the maximal standard deviation was set to 999 and to 99, respectively. 
 \label{S_tab:compare_n50}%
}
\begin{tabular*}{\textwidth}{l*{3}{>{\columncolor{mColor1} \raggedleft\arraybackslash}p{2.9em}}*{3}{>{\raggedleft\arraybackslash}p{2.9em}}*{3}{>{\columncolor{mColor1} \raggedleft\arraybackslash}p{2.9em}}*{3}{>{\raggedleft\arraybackslash}p{2.9em}}*{3}{>{\columncolor{mColor1} \raggedleft\arraybackslash}p{2.9em}}}
\toprule

&\multicolumn{3}{c}{D1} 
&\multicolumn{3}{c}{D2} 
&\multicolumn{3}{c}{D3} 
&\multicolumn{3}{c}{D4} 
&\multicolumn{3}{c}{D5} \\
\cmidrule(r){2-4} 
\cmidrule(lr){5-7} 
\cmidrule(lr){8-10} 
\cmidrule(lr){11-13} 
\cmidrule(l){14-16} 
& SP & ER & CO
& SP & ER & CO
& SP & ER & CO
& SP & ER & CO
& SP & ER & CO\\
RFN&\footnotesize72\tiny$\pm$1&\footnotesize74\tiny$\pm$2&\footnotesize11\tiny$\pm$1&\footnotesize75\tiny$\pm$0&\footnotesize240\tiny$\pm$3&\footnotesize72\tiny$\pm$2&\footnotesize75\tiny$\pm$0&\footnotesize462\tiny$\pm$5&\footnotesize260\tiny$\pm$6&\footnotesize72\tiny$\pm$1&\footnotesize79\tiny$\pm$2&\footnotesize12\tiny$\pm$1&\footnotesize75\tiny$\pm$0&\footnotesize244\tiny$\pm$3&\footnotesize75\tiny$\pm$2\\ 
RFNn&\footnotesize68\tiny$\pm$1&\footnotesize122\tiny$\pm$5&\footnotesize32\tiny$\pm$4&\footnotesize74\tiny$\pm$0&\footnotesize285\tiny$\pm$4&\footnotesize97\tiny$\pm$3&\footnotesize74\tiny$\pm$0&\footnotesize537\tiny$\pm$7&\footnotesize331\tiny$\pm$8&\footnotesize65\tiny$\pm$1&\footnotesize144\tiny$\pm$6&\footnotesize48\tiny$\pm$6&\footnotesize74\tiny$\pm$0&\footnotesize290\tiny$\pm$4&\footnotesize102\tiny$\pm$4\\ 
DAE&\footnotesize61\tiny$\pm$0&\footnotesize82\tiny$\pm$2&\footnotesize ---&\footnotesize66\tiny$\pm$0&\footnotesize243\tiny$\pm$2&\footnotesize ---&\footnotesize66\tiny$\pm$0&\footnotesize461\tiny$\pm$4&\footnotesize ---&\footnotesize60\tiny$\pm$0&\footnotesize88\tiny$\pm$2&\footnotesize ---&\footnotesize66\tiny$\pm$0&\footnotesize247\tiny$\pm$3&\footnotesize ---\\ 
RBM&\footnotesize22\tiny$\pm$1&\footnotesize106\tiny$\pm$3&\footnotesize ---&\footnotesize11\tiny$\pm$1&\footnotesize301\tiny$\pm$3&\footnotesize ---&\footnotesize10\tiny$\pm$1&\footnotesize566\tiny$\pm$6&\footnotesize ---&\footnotesize22\tiny$\pm$1&\footnotesize113\tiny$\pm$3&\footnotesize ---&\footnotesize11\tiny$\pm$1&\footnotesize308\tiny$\pm$4&\footnotesize ---\\ 
FAsp&\footnotesize37\tiny$\pm$1&\footnotesize469\tiny$\pm$38&\footnotesize999\tiny$\pm$99&\footnotesize40\tiny$\pm$1&\footnotesize999\tiny$\pm$50&\footnotesize999\tiny$\pm$99&\footnotesize40\tiny$\pm$2&\footnotesize999\tiny$\pm$99&\footnotesize999\tiny$\pm$99&\footnotesize37\tiny$\pm$1&\footnotesize610\tiny$\pm$44&\footnotesize999\tiny$\pm$99&\footnotesize40\tiny$\pm$1&\footnotesize999\tiny$\pm$58&\footnotesize999\tiny$\pm$99\\ 
FAlap&\footnotesize4\tiny$\pm$0&\footnotesize50\tiny$\pm$1&\footnotesize392\tiny$\pm$66&\footnotesize4\tiny$\pm$0&\footnotesize228\tiny$\pm$5&\footnotesize135\tiny$\pm$13&\footnotesize5\tiny$\pm$0&\footnotesize443\tiny$\pm$9&\footnotesize406\tiny$\pm$18&\footnotesize4\tiny$\pm$0&\footnotesize51\tiny$\pm$1&\footnotesize477\tiny$\pm$63&\footnotesize4\tiny$\pm$0&\footnotesize230\tiny$\pm$6&\footnotesize147\tiny$\pm$18\\ 
ICA&\footnotesize2\tiny$\pm$0&\footnotesize35\tiny$\pm$0&\footnotesize ---&\footnotesize2\tiny$\pm$0&\footnotesize168\tiny$\pm$2&\footnotesize ---&\footnotesize2\tiny$\pm$0&\footnotesize327\tiny$\pm$4&\footnotesize ---&\footnotesize2\tiny$\pm$0&\footnotesize35\tiny$\pm$0&\footnotesize ---&\footnotesize2\tiny$\pm$0&\footnotesize170\tiny$\pm$2&\footnotesize ---\\ 
SFA&\footnotesize1\tiny$\pm$0&\footnotesize42\tiny$\pm$1&\footnotesize26\tiny$\pm$3&\footnotesize1\tiny$\pm$0&\footnotesize210\tiny$\pm$5&\footnotesize61\tiny$\pm$2&\footnotesize1\tiny$\pm$0&\footnotesize409\tiny$\pm$8&\footnotesize220\tiny$\pm$6&\footnotesize1\tiny$\pm$0&\footnotesize41\tiny$\pm$1&\footnotesize32\tiny$\pm$4&\footnotesize1\tiny$\pm$0&\footnotesize211\tiny$\pm$5&\footnotesize63\tiny$\pm$2\\ 
FA&\footnotesize1\tiny$\pm$0&\footnotesize42\tiny$\pm$1&\footnotesize13\tiny$\pm$2&\footnotesize1\tiny$\pm$0&\footnotesize210\tiny$\pm$4&\footnotesize58\tiny$\pm$2&\footnotesize1\tiny$\pm$0&\footnotesize409\tiny$\pm$8&\footnotesize214\tiny$\pm$6&\footnotesize1\tiny$\pm$0&\footnotesize41\tiny$\pm$1&\footnotesize17\tiny$\pm$2&\footnotesize1\tiny$\pm$0&\footnotesize212\tiny$\pm$5&\footnotesize60\tiny$\pm$2\\ 
PCA&\footnotesize0\tiny$\pm$0&\footnotesize35\tiny$\pm$0&\footnotesize ---&\footnotesize0\tiny$\pm$0&\footnotesize168\tiny$\pm$2&\footnotesize ---&\footnotesize0\tiny$\pm$0&\footnotesize327\tiny$\pm$4&\footnotesize ---&\footnotesize0\tiny$\pm$0&\footnotesize35\tiny$\pm$0&\footnotesize ---&\footnotesize0\tiny$\pm$0&\footnotesize170\tiny$\pm$2&\footnotesize ---\\ 
\midrule &\multicolumn{3}{c}{D6} 
 &\multicolumn{3}{c}{D7} 
 &\multicolumn{3}{c}{D8} 
 &\multicolumn{3}{c}{D9} 
 &\multicolumn{3}{c}{average } \\ 
 \cmidrule(r){2-4} 
 \cmidrule(lr){5-7} 
 \cmidrule(lr){8-10} 
 \cmidrule(lr){11-13} 
 \cmidrule(l){14-16} 
 & SP & ER & CO 
 & SP & ER & CO 
 & SP & ER & CO 
 & SP & ER & CO 
 & SP & ER & CO\\ 
RFN&\footnotesize75\tiny$\pm$0&\footnotesize464\tiny$\pm$5&\footnotesize264\tiny$\pm$6&\footnotesize73\tiny$\pm$0&\footnotesize68\tiny$\pm$2&\footnotesize9\tiny$\pm$1&\footnotesize75\tiny$\pm$0&\footnotesize237\tiny$\pm$3&\footnotesize69\tiny$\pm$1&\footnotesize75\tiny$\pm$0&\footnotesize459\tiny$\pm$5&\footnotesize257\tiny$\pm$6&\footnotesize74\tiny$\pm$0&\footnotesize259\tiny$\pm$3&\footnotesize114\tiny$\pm$3\\ 
RFNn&\footnotesize74\tiny$\pm$0&\footnotesize541\tiny$\pm$6&\footnotesize336\tiny$\pm$8&\footnotesize71\tiny$\pm$1&\footnotesize106\tiny$\pm$4&\footnotesize23\tiny$\pm$3&\footnotesize74\tiny$\pm$0&\footnotesize279\tiny$\pm$3&\footnotesize91\tiny$\pm$2&\footnotesize75\tiny$\pm$0&\footnotesize533\tiny$\pm$6&\footnotesize325\tiny$\pm$8&\footnotesize72\tiny$\pm$1&\footnotesize315\tiny$\pm$5&\footnotesize154\tiny$\pm$5\\ 
DAE&\footnotesize66\tiny$\pm$0&\footnotesize465\tiny$\pm$4&\footnotesize ---&\footnotesize62\tiny$\pm$0&\footnotesize75\tiny$\pm$2&\footnotesize ---&\footnotesize66\tiny$\pm$0&\footnotesize238\tiny$\pm$2&\footnotesize ---&\footnotesize66\tiny$\pm$0&\footnotesize458\tiny$\pm$4&\footnotesize ---&\footnotesize64\tiny$\pm$0&\footnotesize262\tiny$\pm$3&\footnotesize ---\\ 
RBM&\footnotesize10\tiny$\pm$1&\footnotesize570\tiny$\pm$6&\footnotesize ---&\footnotesize20\tiny$\pm$1&\footnotesize97\tiny$\pm$3&\footnotesize ---&\footnotesize11\tiny$\pm$1&\footnotesize294\tiny$\pm$3&\footnotesize ---&\footnotesize10\tiny$\pm$1&\footnotesize562\tiny$\pm$5&\footnotesize ---&\footnotesize14\tiny$\pm$1&\footnotesize324\tiny$\pm$4&\footnotesize ---\\ 
FAsp&\footnotesize41\tiny$\pm$1&\footnotesize999\tiny$\pm$99&\footnotesize999\tiny$\pm$99&\footnotesize38\tiny$\pm$1&\footnotesize335\tiny$\pm$32&\footnotesize999\tiny$\pm$99&\footnotesize41\tiny$\pm$1&\footnotesize999\tiny$\pm$40&\footnotesize999\tiny$\pm$99&\footnotesize41\tiny$\pm$1&\footnotesize999\tiny$\pm$99&\footnotesize999\tiny$\pm$99&\footnotesize39\tiny$\pm$1&\footnotesize999\tiny$\pm$69&\footnotesize999\tiny$\pm$99\\ 
FAlap&\footnotesize5\tiny$\pm$0&\footnotesize447\tiny$\pm$9&\footnotesize413\tiny$\pm$19&\footnotesize4\tiny$\pm$0&\footnotesize49\tiny$\pm$1&\footnotesize292\tiny$\pm$57&\footnotesize4\tiny$\pm$0&\footnotesize227\tiny$\pm$5&\footnotesize123\tiny$\pm$11&\footnotesize5\tiny$\pm$0&\footnotesize443\tiny$\pm$9&\footnotesize401\tiny$\pm$17&\footnotesize4\tiny$\pm$0&\footnotesize241\tiny$\pm$5&\footnotesize310\tiny$\pm$31\\ 
ICA&\footnotesize2\tiny$\pm$0&\footnotesize329\tiny$\pm$4&\footnotesize ---&\footnotesize2\tiny$\pm$0&\footnotesize35\tiny$\pm$0&\footnotesize ---&\footnotesize2\tiny$\pm$0&\footnotesize167\tiny$\pm$2&\footnotesize ---&\footnotesize2\tiny$\pm$0&\footnotesize325\tiny$\pm$4&\footnotesize ---&\footnotesize2\tiny$\pm$0&\footnotesize177\tiny$\pm$2&\footnotesize ---\\ 
SFA&\footnotesize1\tiny$\pm$0&\footnotesize412\tiny$\pm$8&\footnotesize223\tiny$\pm$7&\footnotesize1\tiny$\pm$0&\footnotesize42\tiny$\pm$1&\footnotesize19\tiny$\pm$3&\footnotesize1\tiny$\pm$0&\footnotesize209\tiny$\pm$4&\footnotesize59\tiny$\pm$2&\footnotesize1\tiny$\pm$0&\footnotesize408\tiny$\pm$9&\footnotesize218\tiny$\pm$7&\footnotesize1\tiny$\pm$0&\footnotesize221\tiny$\pm$5&\footnotesize102\tiny$\pm$4\\ 
FA&\footnotesize1\tiny$\pm$0&\footnotesize412\tiny$\pm$8&\footnotesize217\tiny$\pm$7&\footnotesize1\tiny$\pm$0&\footnotesize42\tiny$\pm$1&\footnotesize10\tiny$\pm$1&\footnotesize1\tiny$\pm$0&\footnotesize209\tiny$\pm$4&\footnotesize57\tiny$\pm$2&\footnotesize1\tiny$\pm$0&\footnotesize409\tiny$\pm$9&\footnotesize213\tiny$\pm$7&\footnotesize1\tiny$\pm$0&\footnotesize221\tiny$\pm$5&\footnotesize95\tiny$\pm$3\\ 
PCA&\footnotesize0\tiny$\pm$0&\footnotesize329\tiny$\pm$4&\footnotesize ---&\footnotesize0\tiny$\pm$0&\footnotesize35\tiny$\pm$0&\footnotesize ---&\footnotesize0\tiny$\pm$0&\footnotesize167\tiny$\pm$2&\footnotesize ---&\footnotesize0\tiny$\pm$0&\footnotesize325\tiny$\pm$4&\footnotesize ---&\footnotesize0\tiny$\pm$0&\footnotesize177\tiny$\pm$2&\footnotesize ---\\ 
\bottomrule
\end{tabular*}%
\end{sidewaystable}

\begin{sidewaystable}
\centering
\caption{ Comparison for 100 factors / hidden units extracted by RFN, RFN without normalization (RFNn), denoising
  autoencoder (DAE), restricted Boltzmann machines (RBM), factor
  analysis with a very sparse prior (FAsp), factor analysis with a Laplace
prior (FAlap), independent component analysis (ICA), sparse factor
analysis (SFA), factor analysis (FA), and principal component analysis
(PCA) on nine data sets. Criteria are: sparseness of the coding units (SP),
reconstruction error (ER), and  the difference between the empirical and the model covariance
matrix (CO). The lower right column block gives the average 
SP (\%), ER and CO.  Results reported here, are the mean together with the standard deviation of 100 instances. The maximal value in the table and the maximal standard deviation was set to 999 and to 99, respectively.   
 \label{S_tab:compare_n100}%
}
\begin{tabular*}{\textwidth}{l*{3}{>{\columncolor{mColor1} \raggedleft\arraybackslash}p{2.9em}}*{3}{>{\raggedleft\arraybackslash}p{2.9em}}*{3}{>{\columncolor{mColor1} \raggedleft\arraybackslash}p{2.9em}}*{3}{>{\raggedleft\arraybackslash}p{2.9em}}*{3}{>{\columncolor{mColor1} \raggedleft\arraybackslash}p{2.9em}}}
\toprule

&\multicolumn{3}{c}{D1} 
&\multicolumn{3}{c}{D2} 
&\multicolumn{3}{c}{D3} 
&\multicolumn{3}{c}{D4} 
&\multicolumn{3}{c}{D5} \\
\cmidrule(r){2-4} 
\cmidrule(lr){5-7} 
\cmidrule(lr){8-10} 
\cmidrule(lr){11-13} 
\cmidrule(l){14-16} 
& SP & ER & CO
& SP & ER & CO
& SP & ER & CO
& SP & ER & CO
& SP & ER & CO\\
RFN&\footnotesize76\tiny$\pm$1&\footnotesize34\tiny$\pm$3&\footnotesize4\tiny$\pm$1&\footnotesize82\tiny$\pm$1&\footnotesize67\tiny$\pm$8&\footnotesize18\tiny$\pm$3&\footnotesize82\tiny$\pm$1&\footnotesize124\tiny$\pm$16&\footnotesize63\tiny$\pm$12&\footnotesize75\tiny$\pm$1&\footnotesize38\tiny$\pm$3&\footnotesize5\tiny$\pm$1&\footnotesize82\tiny$\pm$1&\footnotesize69\tiny$\pm$10&\footnotesize19\tiny$\pm$5\\ 
RFNn&\footnotesize71\tiny$\pm$1&\footnotesize110\tiny$\pm$7&\footnotesize25\tiny$\pm$4&\footnotesize79\tiny$\pm$0&\footnotesize180\tiny$\pm$5&\footnotesize42\tiny$\pm$2&\footnotesize80\tiny$\pm$0&\footnotesize331\tiny$\pm$8&\footnotesize139\tiny$\pm$7&\footnotesize65\tiny$\pm$2&\footnotesize143\tiny$\pm$9&\footnotesize47\tiny$\pm$8&\footnotesize79\tiny$\pm$0&\footnotesize185\tiny$\pm$5&\footnotesize45\tiny$\pm$3\\ 
DAE&\footnotesize63\tiny$\pm$0&\footnotesize66\tiny$\pm$2&\footnotesize ---&\footnotesize70\tiny$\pm$0&\footnotesize142\tiny$\pm$2&\footnotesize ---&\footnotesize70\tiny$\pm$0&\footnotesize264\tiny$\pm$3&\footnotesize ---&\footnotesize62\tiny$\pm$0&\footnotesize73\tiny$\pm$2&\footnotesize ---&\footnotesize70\tiny$\pm$0&\footnotesize146\tiny$\pm$2&\footnotesize ---\\ 
RBM&\footnotesize12\tiny$\pm$1&\footnotesize100\tiny$\pm$3&\footnotesize ---&\footnotesize5\tiny$\pm$0&\footnotesize282\tiny$\pm$4&\footnotesize ---&\footnotesize4\tiny$\pm$0&\footnotesize522\tiny$\pm$6&\footnotesize ---&\footnotesize12\tiny$\pm$1&\footnotesize106\tiny$\pm$3&\footnotesize ---&\footnotesize5\tiny$\pm$1&\footnotesize288\tiny$\pm$4&\footnotesize ---\\ 
FAsp&\footnotesize71\tiny$\pm$0&\footnotesize474\tiny$\pm$38&\footnotesize999\tiny$\pm$99&\footnotesize62\tiny$\pm$0&\footnotesize999\tiny$\pm$53&\footnotesize999\tiny$\pm$99&\footnotesize56\tiny$\pm$1&\footnotesize999\tiny$\pm$99&\footnotesize999\tiny$\pm$99&\footnotesize70\tiny$\pm$0&\footnotesize616\tiny$\pm$44&\footnotesize999\tiny$\pm$99&\footnotesize62\tiny$\pm$0&\footnotesize999\tiny$\pm$60&\footnotesize999\tiny$\pm$99\\ 
FAlap&\footnotesize6\tiny$\pm$0&\footnotesize21\tiny$\pm$2&\footnotesize425\tiny$\pm$28&\footnotesize6\tiny$\pm$0&\footnotesize40\tiny$\pm$2&\footnotesize827\tiny$\pm$35&\footnotesize6\tiny$\pm$0&\footnotesize75\tiny$\pm$6&\footnotesize999\tiny$\pm$99&\footnotesize6\tiny$\pm$0&\footnotesize23\tiny$\pm$2&\footnotesize523\tiny$\pm$32&\footnotesize6\tiny$\pm$0&\footnotesize42\tiny$\pm$3&\footnotesize865\tiny$\pm$43\\ 
ICA&\footnotesize3\tiny$\pm$2&\footnotesize0\tiny$\pm$0&\footnotesize ---&\footnotesize3\tiny$\pm$1&\footnotesize0\tiny$\pm$0&\footnotesize ---&\footnotesize3\tiny$\pm$1&\footnotesize0\tiny$\pm$0&\footnotesize ---&\footnotesize3\tiny$\pm$2&\footnotesize0\tiny$\pm$0&\footnotesize ---&\footnotesize3\tiny$\pm$1&\footnotesize0\tiny$\pm$0&\footnotesize ---\\ 
SFA&\footnotesize1\tiny$\pm$0&\footnotesize10\tiny$\pm$0&\footnotesize71\tiny$\pm$7&\footnotesize1\tiny$\pm$0&\footnotesize15\tiny$\pm$0&\footnotesize84\tiny$\pm$4&\footnotesize1\tiny$\pm$0&\footnotesize28\tiny$\pm$1&\footnotesize254\tiny$\pm$8&\footnotesize1\tiny$\pm$0&\footnotesize12\tiny$\pm$0&\footnotesize87\tiny$\pm$8&\footnotesize1\tiny$\pm$0&\footnotesize16\tiny$\pm$0&\footnotesize92\tiny$\pm$5\\ 
FA&\footnotesize1\tiny$\pm$0&\footnotesize10\tiny$\pm$0&\footnotesize48\tiny$\pm$5&\footnotesize1\tiny$\pm$0&\footnotesize15\tiny$\pm$0&\footnotesize59\tiny$\pm$3&\footnotesize1\tiny$\pm$0&\footnotesize28\tiny$\pm$1&\footnotesize189\tiny$\pm$7&\footnotesize1\tiny$\pm$0&\footnotesize12\tiny$\pm$1&\footnotesize61\tiny$\pm$6&\footnotesize1\tiny$\pm$0&\footnotesize16\tiny$\pm$0&\footnotesize64\tiny$\pm$3\\ 
PCA&\footnotesize4\tiny$\pm$0&\footnotesize0\tiny$\pm$0&\footnotesize ---&\footnotesize2\tiny$\pm$0&\footnotesize0\tiny$\pm$0&\footnotesize ---&\footnotesize1\tiny$\pm$0&\footnotesize0\tiny$\pm$0&\footnotesize ---&\footnotesize3\tiny$\pm$0&\footnotesize0\tiny$\pm$0&\footnotesize ---&\footnotesize2\tiny$\pm$0&\footnotesize0\tiny$\pm$0&\footnotesize ---\\ 
\midrule &\multicolumn{3}{c}{D6} 
 &\multicolumn{3}{c}{D7} 
 &\multicolumn{3}{c}{D8} 
 &\multicolumn{3}{c}{D9} 
 &\multicolumn{3}{c}{average } \\ 
 \cmidrule(r){2-4} 
 \cmidrule(lr){5-7} 
 \cmidrule(lr){8-10} 
 \cmidrule(lr){11-13} 
 \cmidrule(l){14-16} 
 & SP & ER & CO 
 & SP & ER & CO 
 & SP & ER & CO 
 & SP & ER & CO 
 & SP & ER & CO\\ 
RFN&\footnotesize82\tiny$\pm$1&\footnotesize127\tiny$\pm$17&\footnotesize65\tiny$\pm$14&\footnotesize77\tiny$\pm$1&\footnotesize30\tiny$\pm$3&\footnotesize3\tiny$\pm$1&\footnotesize82\tiny$\pm$1&\footnotesize64\tiny$\pm$8&\footnotesize17\tiny$\pm$4&\footnotesize82\tiny$\pm$1&\footnotesize123\tiny$\pm$15&\footnotesize62\tiny$\pm$13&\footnotesize80\tiny$\pm$1&\footnotesize75\tiny$\pm$9&\footnotesize28\tiny$\pm$6\\ 
RFNn&\footnotesize80\tiny$\pm$0&\footnotesize334\tiny$\pm$8&\footnotesize141\tiny$\pm$7&\footnotesize74\tiny$\pm$1&\footnotesize86\tiny$\pm$4&\footnotesize14\tiny$\pm$2&\footnotesize79\tiny$\pm$0&\footnotesize174\tiny$\pm$4&\footnotesize39\tiny$\pm$2&\footnotesize80\tiny$\pm$0&\footnotesize329\tiny$\pm$7&\footnotesize137\tiny$\pm$6&\footnotesize76\tiny$\pm$1&\footnotesize208\tiny$\pm$6&\footnotesize70\tiny$\pm$5\\ 
DAE&\footnotesize70\tiny$\pm$0&\footnotesize266\tiny$\pm$2&\footnotesize ---&\footnotesize64\tiny$\pm$0&\footnotesize57\tiny$\pm$2&\footnotesize ---&\footnotesize70\tiny$\pm$0&\footnotesize138\tiny$\pm$1&\footnotesize ---&\footnotesize70\tiny$\pm$0&\footnotesize262\tiny$\pm$2&\footnotesize ---&\footnotesize68\tiny$\pm$0&\footnotesize157\tiny$\pm$2&\footnotesize ---\\ 
RBM&\footnotesize4\tiny$\pm$0&\footnotesize527\tiny$\pm$6&\footnotesize ---&\footnotesize11\tiny$\pm$1&\footnotesize92\tiny$\pm$2&\footnotesize ---&\footnotesize4\tiny$\pm$0&\footnotesize274\tiny$\pm$4&\footnotesize ---&\footnotesize4\tiny$\pm$0&\footnotesize518\tiny$\pm$6&\footnotesize ---&\footnotesize7\tiny$\pm$1&\footnotesize301\tiny$\pm$4&\footnotesize ---\\ 
FAsp&\footnotesize56\tiny$\pm$0&\footnotesize999\tiny$\pm$99&\footnotesize999\tiny$\pm$99&\footnotesize71\tiny$\pm$0&\footnotesize338\tiny$\pm$33&\footnotesize999\tiny$\pm$99&\footnotesize62\tiny$\pm$1&\footnotesize999\tiny$\pm$42&\footnotesize999\tiny$\pm$99&\footnotesize56\tiny$\pm$1&\footnotesize999\tiny$\pm$99&\footnotesize999\tiny$\pm$99&\footnotesize63\tiny$\pm$0&\footnotesize999\tiny$\pm$74&\footnotesize999\tiny$\pm$99\\ 
FAlap&\footnotesize6\tiny$\pm$0&\footnotesize75\tiny$\pm$6&\footnotesize999\tiny$\pm$89&\footnotesize6\tiny$\pm$0&\footnotesize18\tiny$\pm$2&\footnotesize337\tiny$\pm$24&\footnotesize6\tiny$\pm$0&\footnotesize40\tiny$\pm$3&\footnotesize793\tiny$\pm$37&\footnotesize6\tiny$\pm$0&\footnotesize74\tiny$\pm$6&\footnotesize999\tiny$\pm$89&\footnotesize6\tiny$\pm$0&\footnotesize45\tiny$\pm$3&\footnotesize999\tiny$\pm$53\\ 
ICA&\footnotesize3\tiny$\pm$1&\footnotesize0\tiny$\pm$0&\footnotesize ---&\footnotesize3\tiny$\pm$1&\footnotesize0\tiny$\pm$0&\footnotesize ---&\footnotesize3\tiny$\pm$1&\footnotesize0\tiny$\pm$0&\footnotesize ---&\footnotesize3\tiny$\pm$1&\footnotesize0\tiny$\pm$0&\footnotesize ---&\footnotesize3\tiny$\pm$1&\footnotesize0\tiny$\pm$0&\footnotesize ---\\ 
SFA&\footnotesize1\tiny$\pm$0&\footnotesize28\tiny$\pm$1&\footnotesize260\tiny$\pm$9&\footnotesize1\tiny$\pm$0&\footnotesize8\tiny$\pm$0&\footnotesize52\tiny$\pm$7&\footnotesize1\tiny$\pm$0&\footnotesize15\tiny$\pm$0&\footnotesize76\tiny$\pm$3&\footnotesize1\tiny$\pm$0&\footnotesize28\tiny$\pm$1&\footnotesize248\tiny$\pm$7&\footnotesize1\tiny$\pm$0&\footnotesize18\tiny$\pm$1&\footnotesize136\tiny$\pm$6\\ 
FA&\footnotesize1\tiny$\pm$0&\footnotesize28\tiny$\pm$1&\footnotesize193\tiny$\pm$8&\footnotesize1\tiny$\pm$0&\footnotesize8\tiny$\pm$0&\footnotesize33\tiny$\pm$5&\footnotesize1\tiny$\pm$0&\footnotesize15\tiny$\pm$0&\footnotesize54\tiny$\pm$2&\footnotesize1\tiny$\pm$0&\footnotesize28\tiny$\pm$1&\footnotesize185\tiny$\pm$6&\footnotesize1\tiny$\pm$0&\footnotesize18\tiny$\pm$1&\footnotesize99\tiny$\pm$5\\ 
PCA&\footnotesize1\tiny$\pm$0&\footnotesize0\tiny$\pm$0&\footnotesize ---&\footnotesize4\tiny$\pm$0&\footnotesize0\tiny$\pm$0&\footnotesize ---&\footnotesize2\tiny$\pm$0&\footnotesize0\tiny$\pm$0&\footnotesize ---&\footnotesize1\tiny$\pm$0&\footnotesize0\tiny$\pm$0&\footnotesize ---&\footnotesize2\tiny$\pm$0&\footnotesize0\tiny$\pm$0&\footnotesize ---\\ 
\bottomrule
\end{tabular*}%
\end{sidewaystable}

\begin{sidewaystable}
\centering
\caption{ Comparison for 150 factors / hidden units extracted by RFN, RFN without normalization (RFNn), denoising
  autoencoder (DAE), restricted Boltzmann machines (RBM), factor
  analysis with a very sparse prior (FAsp), factor analysis with a Laplace
prior (FAlap), independent component analysis (ICA), sparse factor
analysis (SFA), factor analysis (FA), and principal component analysis
(PCA) on nine data sets. Criteria are: sparseness of the factors (SP) reported in \%,
reconstruction error (ER), and  the difference between the empirical and the model covariance
matrix (CO). The lower right column block gives the average 
SP (\%), ER and CO. Results reported here, are the mean together with the standard deviation of 100 instances. The maximal value in the table and the maximal standard deviation was set to 999 and to 99, respectively.  
 \label{S_tab:compare_n150}%
}
\begin{tabular*}{\textwidth}{l*{3}{>{\columncolor{mColor1} \raggedleft\arraybackslash}p{2.9em}}*{3}{>{\raggedleft\arraybackslash}p{2.9em}}*{3}{>{\columncolor{mColor1} \raggedleft\arraybackslash}p{2.9em}}*{3}{>{\raggedleft\arraybackslash}p{2.9em}}*{3}{>{\columncolor{mColor1} \raggedleft\arraybackslash}p{2.9em}}}
\toprule

&\multicolumn{3}{c}{D1} 
&\multicolumn{3}{c}{D2} 
&\multicolumn{3}{c}{D3} 
&\multicolumn{3}{c}{D4} 
&\multicolumn{3}{c}{D5} \\
\cmidrule(r){2-4} 
\cmidrule(lr){5-7} 
\cmidrule(lr){8-10} 
\cmidrule(lr){11-13} 
\cmidrule(l){14-16} 
& SP & ER & CO
& SP & ER & CO
& SP & ER & CO
& SP & ER & CO
& SP & ER & CO\\
RFN&\footnotesize81\tiny$\pm$1&\footnotesize12\tiny$\pm$2&\footnotesize1\tiny$\pm$1&\footnotesize86\tiny$\pm$0&\footnotesize16\tiny$\pm$1&\footnotesize4\tiny$\pm$1&\footnotesize86\tiny$\pm$0&\footnotesize29\tiny$\pm$4&\footnotesize15\tiny$\pm$5&\footnotesize80\tiny$\pm$1&\footnotesize15\tiny$\pm$5&\footnotesize2\tiny$\pm$2&\footnotesize86\tiny$\pm$1&\footnotesize17\tiny$\pm$5&\footnotesize5\tiny$\pm$3\\ 
RFNn&\footnotesize72\tiny$\pm$1&\footnotesize100\tiny$\pm$8&\footnotesize19\tiny$\pm$4&\footnotesize80\tiny$\pm$0&\footnotesize137\tiny$\pm$4&\footnotesize24\tiny$\pm$1&\footnotesize81\tiny$\pm$0&\footnotesize254\tiny$\pm$6&\footnotesize83\tiny$\pm$4&\footnotesize66\tiny$\pm$0&\footnotesize113\tiny$\pm$3&\footnotesize52\tiny$\pm$5&\footnotesize80\tiny$\pm$0&\footnotesize141\tiny$\pm$4&\footnotesize26\tiny$\pm$2\\ 
DAE&\footnotesize64\tiny$\pm$0&\footnotesize62\tiny$\pm$2&\footnotesize ---&\footnotesize71\tiny$\pm$0&\footnotesize125\tiny$\pm$2&\footnotesize ---&\footnotesize72\tiny$\pm$0&\footnotesize232\tiny$\pm$2&\footnotesize ---&\footnotesize63\tiny$\pm$0&\footnotesize69\tiny$\pm$2&\footnotesize ---&\footnotesize71\tiny$\pm$0&\footnotesize129\tiny$\pm$2&\footnotesize ---\\ 
RBM&\footnotesize8\tiny$\pm$0&\footnotesize101\tiny$\pm$3&\footnotesize ---&\footnotesize4\tiny$\pm$0&\footnotesize282\tiny$\pm$4&\footnotesize ---&\footnotesize3\tiny$\pm$0&\footnotesize521\tiny$\pm$6&\footnotesize ---&\footnotesize8\tiny$\pm$0&\footnotesize106\tiny$\pm$3&\footnotesize ---&\footnotesize4\tiny$\pm$0&\footnotesize289\tiny$\pm$4&\footnotesize ---\\ 
FAsp&\footnotesize81\tiny$\pm$1&\footnotesize474\tiny$\pm$38&\footnotesize999\tiny$\pm$99&\footnotesize79\tiny$\pm$0&\footnotesize999\tiny$\pm$53&\footnotesize999\tiny$\pm$99&\footnotesize77\tiny$\pm$1&\footnotesize999\tiny$\pm$99&\footnotesize999\tiny$\pm$99&\footnotesize80\tiny$\pm$1&\footnotesize616\tiny$\pm$44&\footnotesize999\tiny$\pm$99&\footnotesize79\tiny$\pm$1&\footnotesize999\tiny$\pm$60&\footnotesize999\tiny$\pm$99\\ 
FAlap&\footnotesize4\tiny$\pm$0&\footnotesize21\tiny$\pm$2&\footnotesize607\tiny$\pm$34&\footnotesize4\tiny$\pm$0&\footnotesize40\tiny$\pm$2&\footnotesize879\tiny$\pm$40&\footnotesize3\tiny$\pm$0&\footnotesize75\tiny$\pm$6&\footnotesize999\tiny$\pm$96&\footnotesize4\tiny$\pm$0&\footnotesize23\tiny$\pm$2&\footnotesize749\tiny$\pm$42&\footnotesize4\tiny$\pm$0&\footnotesize42\tiny$\pm$3&\footnotesize926\tiny$\pm$45\\ 
ICA&\footnotesize3\tiny$\pm$2&\footnotesize0\tiny$\pm$0&\footnotesize ---&\footnotesize3\tiny$\pm$1&\footnotesize0\tiny$\pm$0&\footnotesize ---&\footnotesize3\tiny$\pm$1&\footnotesize0\tiny$\pm$0&\footnotesize ---&\footnotesize3\tiny$\pm$2&\footnotesize0\tiny$\pm$0&\footnotesize ---&\footnotesize3\tiny$\pm$1&\footnotesize0\tiny$\pm$0&\footnotesize ---\\ 
SFA&\footnotesize1\tiny$\pm$0&\footnotesize10\tiny$\pm$0&\footnotesize103\tiny$\pm$9&\footnotesize1\tiny$\pm$0&\footnotesize15\tiny$\pm$0&\footnotesize204\tiny$\pm$7&\footnotesize1\tiny$\pm$0&\footnotesize28\tiny$\pm$1&\footnotesize656\tiny$\pm$12&\footnotesize1\tiny$\pm$0&\footnotesize12\tiny$\pm$0&\footnotesize126\tiny$\pm$10&\footnotesize1\tiny$\pm$0&\footnotesize16\tiny$\pm$0&\footnotesize220\tiny$\pm$8\\ 
FA&\footnotesize1\tiny$\pm$0&\footnotesize10\tiny$\pm$0&\footnotesize87\tiny$\pm$8&\footnotesize1\tiny$\pm$0&\footnotesize15\tiny$\pm$0&\footnotesize187\tiny$\pm$5&\footnotesize1\tiny$\pm$0&\footnotesize28\tiny$\pm$1&\footnotesize611\tiny$\pm$11&\footnotesize1\tiny$\pm$0&\footnotesize12\tiny$\pm$1&\footnotesize108\tiny$\pm$9&\footnotesize1\tiny$\pm$0&\footnotesize16\tiny$\pm$0&\footnotesize200\tiny$\pm$6\\ 
PCA&\footnotesize4\tiny$\pm$0&\footnotesize0\tiny$\pm$0&\footnotesize ---&\footnotesize2\tiny$\pm$0&\footnotesize0\tiny$\pm$0&\footnotesize ---&\footnotesize1\tiny$\pm$0&\footnotesize0\tiny$\pm$0&\footnotesize ---&\footnotesize3\tiny$\pm$0&\footnotesize0\tiny$\pm$0&\footnotesize ---&\footnotesize2\tiny$\pm$0&\footnotesize0\tiny$\pm$0&\footnotesize ---\\ 
\midrule &\multicolumn{3}{c}{D6} 
 &\multicolumn{3}{c}{D7} 
 &\multicolumn{3}{c}{D8} 
 &\multicolumn{3}{c}{D9} 
 &\multicolumn{3}{c}{average } \\ 
 \cmidrule(r){2-4} 
 \cmidrule(lr){5-7} 
 \cmidrule(lr){8-10} 
 \cmidrule(lr){11-13} 
 \cmidrule(l){14-16} 
 & SP & ER & CO 
 & SP & ER & CO 
 & SP & ER & CO 
 & SP & ER & CO 
 & SP & ER & CO\\ 
RFN&\footnotesize86\tiny$\pm$1&\footnotesize29\tiny$\pm$7&\footnotesize15\tiny$\pm$6&\footnotesize82\tiny$\pm$1&\footnotesize10\tiny$\pm$3&\footnotesize1\tiny$\pm$1&\footnotesize86\tiny$\pm$1&\footnotesize17\tiny$\pm$10&\footnotesize5\tiny$\pm$9&\footnotesize86\tiny$\pm$1&\footnotesize31\tiny$\pm$19&\footnotesize16\tiny$\pm$13&\footnotesize84\tiny$\pm$1&\footnotesize20\tiny$\pm$6&\footnotesize7\tiny$\pm$4\\ 
RFNn&\footnotesize81\tiny$\pm$0&\footnotesize255\tiny$\pm$6&\footnotesize84\tiny$\pm$3&\footnotesize76\tiny$\pm$1&\footnotesize74\tiny$\pm$5&\footnotesize9\tiny$\pm$2&\footnotesize81\tiny$\pm$0&\footnotesize133\tiny$\pm$3&\footnotesize23\tiny$\pm$1&\footnotesize81\tiny$\pm$0&\footnotesize250\tiny$\pm$7&\footnotesize81\tiny$\pm$4&\footnotesize77\tiny$\pm$0&\footnotesize162\tiny$\pm$5&\footnotesize45\tiny$\pm$3\\ 
DAE&\footnotesize72\tiny$\pm$0&\footnotesize234\tiny$\pm$2&\footnotesize ---&\footnotesize65\tiny$\pm$0&\footnotesize53\tiny$\pm$2&\footnotesize ---&\footnotesize72\tiny$\pm$0&\footnotesize122\tiny$\pm$1&\footnotesize ---&\footnotesize72\tiny$\pm$0&\footnotesize230\tiny$\pm$2&\footnotesize ---&\footnotesize69\tiny$\pm$0&\footnotesize140\tiny$\pm$2&\footnotesize ---\\ 
RBM&\footnotesize3\tiny$\pm$0&\footnotesize525\tiny$\pm$6&\footnotesize ---&\footnotesize8\tiny$\pm$0&\footnotesize93\tiny$\pm$3&\footnotesize ---&\footnotesize3\tiny$\pm$0&\footnotesize273\tiny$\pm$4&\footnotesize ---&\footnotesize3\tiny$\pm$0&\footnotesize517\tiny$\pm$6&\footnotesize ---&\footnotesize5\tiny$\pm$0&\footnotesize301\tiny$\pm$4&\footnotesize ---\\ 
FAsp&\footnotesize77\tiny$\pm$1&\footnotesize999\tiny$\pm$99&\footnotesize999\tiny$\pm$99&\footnotesize81\tiny$\pm$1&\footnotesize338\tiny$\pm$33&\footnotesize673\tiny$\pm$99&\footnotesize79\tiny$\pm$0&\footnotesize999\tiny$\pm$42&\footnotesize999\tiny$\pm$99&\footnotesize77\tiny$\pm$1&\footnotesize999\tiny$\pm$99&\footnotesize999\tiny$\pm$99&\footnotesize79\tiny$\pm$1&\footnotesize999\tiny$\pm$74&\footnotesize999\tiny$\pm$99\\ 
FAlap&\footnotesize3\tiny$\pm$0&\footnotesize75\tiny$\pm$6&\footnotesize999\tiny$\pm$94&\footnotesize4\tiny$\pm$0&\footnotesize18\tiny$\pm$2&\footnotesize479\tiny$\pm$31&\footnotesize4\tiny$\pm$0&\footnotesize40\tiny$\pm$3&\footnotesize831\tiny$\pm$43&\footnotesize3\tiny$\pm$0&\footnotesize74\tiny$\pm$6&\footnotesize999\tiny$\pm$95&\footnotesize4\tiny$\pm$0&\footnotesize45\tiny$\pm$3&\footnotesize999\tiny$\pm$58\\ 
ICA&\footnotesize3\tiny$\pm$1&\footnotesize0\tiny$\pm$0&\footnotesize ---&\footnotesize3\tiny$\pm$1&\footnotesize0\tiny$\pm$0&\footnotesize ---&\footnotesize3\tiny$\pm$1&\footnotesize0\tiny$\pm$0&\footnotesize ---&\footnotesize3\tiny$\pm$1&\footnotesize0\tiny$\pm$0&\footnotesize ---&\footnotesize3\tiny$\pm$1&\footnotesize0\tiny$\pm$0&\footnotesize ---\\ 
SFA&\footnotesize1\tiny$\pm$0&\footnotesize28\tiny$\pm$1&\footnotesize668\tiny$\pm$12&\footnotesize1\tiny$\pm$0&\footnotesize8\tiny$\pm$0&\footnotesize78\tiny$\pm$8&\footnotesize1\tiny$\pm$0&\footnotesize15\tiny$\pm$0&\footnotesize188\tiny$\pm$5&\footnotesize1\tiny$\pm$0&\footnotesize28\tiny$\pm$1&\footnotesize644\tiny$\pm$9&\footnotesize1\tiny$\pm$0&\footnotesize18\tiny$\pm$1&\footnotesize321\tiny$\pm$9\\ 
FA&\footnotesize1\tiny$\pm$0&\footnotesize28\tiny$\pm$1&\footnotesize622\tiny$\pm$11&\footnotesize1\tiny$\pm$0&\footnotesize8\tiny$\pm$0&\footnotesize64\tiny$\pm$7&\footnotesize1\tiny$\pm$0&\footnotesize15\tiny$\pm$0&\footnotesize173\tiny$\pm$4&\footnotesize1\tiny$\pm$0&\footnotesize28\tiny$\pm$1&\footnotesize599\tiny$\pm$9&\footnotesize1\tiny$\pm$0&\footnotesize18\tiny$\pm$1&\footnotesize294\tiny$\pm$8\\ 
PCA&\footnotesize1\tiny$\pm$0&\footnotesize0\tiny$\pm$0&\footnotesize ---&\footnotesize4\tiny$\pm$0&\footnotesize0\tiny$\pm$0&\footnotesize ---&\footnotesize2\tiny$\pm$0&\footnotesize0\tiny$\pm$0&\footnotesize ---&\footnotesize1\tiny$\pm$0&\footnotesize0\tiny$\pm$0&\footnotesize ---&\footnotesize2\tiny$\pm$0&\footnotesize0\tiny$\pm$0&\footnotesize ---\\ 
\bottomrule
\end{tabular*}%
\end{sidewaystable}

\vfill
\clearpage

\section{RFN Pretraining for Convolution Nets}
\label{S_sec:ConvNets}
We assess the performance of RFN {\em first layer} pretraining
on {\em CIFAR-10} and {\em CIFAR-100} for
three deep convolutional network architectures:
(i) the AlexNet \cite{Krizhevsky:12},
(ii) Deeply Supervised Networks (DSN) \cite{Lee:14}, and
(iii) our 5-Convolution-Network-In-Network (5C-NIN).

Both CIFAR datasets contain 60k 32x32 RGB-color images, which were divided
into 50k train and 10k test sets, split between 10 (CIFAR10) and
100 (CIFAR100) categories.
Both datasets are preprocessed  by global contrast normalization and ZCA whitening \cite{Goodfellow:13}.
Additionally, the datasets were augmented by padding the images with
four zero pixels at all borders. For data augmentation, at the beginning
of every epoch, images in the training set were distorted by random
translation and random flipping in horizontal and vertical directions.
For the AlexNet, we neither preprocessed nor augmented the datasets.

Inspired by the Network In Network approach \cite{Min:13}, we constructed a
5-Convolution-Network-In-Network (5C-NIN) architecture
with five convolutional layers,
each followed by a 2x2 max-pooling
layer (stride 1) and a multilayer perceptron (MLP) convolutional layer.
ReLUs were used for the convolutional layers and dropout for regularization.
For weight initialization, learning rates, and learning policies we used same strategy
as in the AlexNet \cite{Krizhevsky:09}.
The networks were trained
using mini-batches of size 100 and 128 for 5C-NIN and AlexNet, respectively.

For RFN pretraining, we randomly extracted 5x5 patches from the
training data to construct 192 filters for DSN and 5C-NIN while 32
for AlexNet. These filters constitute
the first convolutional layer of each network which is then trained
using default setting.
For assessing the improvement by RFNs, we repeated training with
randomly initialized weights in the first layer.
The results are presented in Tab.~\ref{tab:tab_res2}.
For comparison, the lower panel of the table reports the performance
of the currently top performing networks: Network In Network (NIN, \cite{Min:13}),
Maxout Networks (MN, \cite{Goodfellow:13}) and DeepCNiN \cite{Graham:14}.
{\em In all cases pretraining with RFNs
decreases the test error rate.}

\begin{table}[th!]
\begin{center}
\caption{The upper panel shows results of convolutional deep networks
with first layer pretrained by RFN (``RFN'') and with first layer randomly
initialized (``org'').
The first column gives the network architecture,
namely, AlexNet,
Deeply Supervised Networks (DSN),
and our 5-Convolution-Network-In-Network (5C-NIN).
The test error rates are reported (for CIFAR-100 DSN model was missing).
Currently best
performing networks Network In Network (NIN),
Maxout Networks (MN), and DeepCNiN are reported in the lower panel.
In all cases pretraining with RFNs decreased the test error rate.
}
\label{tab:tab_res2}%
\begin{tabular}{*{1}{>{\raggedright\arraybackslash}p{3.8em}}*{4}{>{\raggedleft\arraybackslash}p{1.5em}}*{1}{>{\raggedleft\arraybackslash}p{3.5em}}}
\toprule[1pt]
\addlinespace[2pt]
Dataset & \multicolumn{2}{c} {CIFAR-10 } & \multicolumn{2}{c} {CIFAR-100} & \\[-0.2ex]
 \cmidrule(lr){2-3}\cmidrule(rl){4-5}
 &  org & RFN & org & RFN & augmented \\[-0.2ex]
\toprule[1pt]
AlexNet  & 18.21 & 18.04 & 46.18 & 45.80&\\[-0.2ex]
DSN  & 7.97 & 7.74   & 34.57 & - & $\surd$   \\[-0.2ex]
5C-NIN &  7.81 & 7.63 & 29.96 & 29.75 & $\surd$ \\[-0.2ex]
\addlinespace[1pt]
\midrule[1pt]
\addlinespace[2pt]
NIN &  8.81 & - & 35.68 & - & $\surd$ \\[-0.2ex]
MN &  9.38 & - & 38.57 & - & $\surd$ \\[-0.2ex]
DeepCNiN &  6.28  & - & 24.30 & - & $\surd$ \\[-0.2ex]
\bottomrule
\end{tabular}
\end{center}
\end{table} 

\vfill
\clearpage
\small{
\bibliographystyle{unsrt}

}

\end{document}